\newcommand{\new}[1]{{\color{black} #1}}
\newcommand{\stepsize}{\eta}
\newcommand{\cmark}{{\color{PineGreen}\ding{51}}}%
\newcommand{\xmark}{{\color{BrickRed}\ding{55}}}%
\def\<{\left\langle}
\def\>{\right\rangle}
\def\({\left(}
\def\){\right)}
\newcommand{\NC}{\cC_{\rm nat}}
\newcommand{\reals}{\mathbb{R}}
\DeclareMathOperator{\sign}{sign} 
\newcommand{\cC}{{\cal C}}
\newcommand{\cE}{{\cal E}}
\newcommand{\cO}{{\cal O}}
\newcommand{\cP}{{\cal P}}
\newcommand{\R}{\mathbb{R}} 
\newcommand{\Z}{\mathbb{Z}}
\newcommand{\B}{\mathbb{B}}
\newcommand{\U}{\mathbb{U}}
\newcommand{\eqdef}{\coloneqq}
\newcommand{\lin}[1]{\langle #1 \rangle} 
\newcommand{\dotprod}[1]{\left< #1\right>} 
\newcommand{\norm}[1]{\left\| #1 \right\|_2}      
\newcommand{\onenorm}[1]{\left\| #1 \right\|_1}      
\newcommand{\twonorm}[1]{\left\| #1 \right\|_2}      
\newcommand{\threenorm}[1]{\left\| #1 \right\|_3}      
\DeclareMathOperator{\Var}{Var}         
\DeclareMathOperator{\Prob}{Prob}
\DeclarePairedDelimiter\ceil{\lceil}{\rceil}
\newcommand{\Diag}[1]{\mathbf{Diag}\left( #1\right)}
\newcommand{\Exp}[1]{{{\rm E}}\left[#1\right] }    
\newcommand{\ExpC}[1]{{{\rm E_C}}\left[#1\right] }
\newcommand{\Expg}[1]{{{\rm E_{\nabla}}}\left[#1\right] }
\newcommand{\EE}[2]{{\rm E}_{#1}\left[#2\right] } 
\newcommand{\abs}[1]{| #1 |}
\newtheorem{assumption}{Assumption}
\begin{document}

\title{On Biased Compression for Distributed Learning}

\author{\name Aleksandr Beznosikov\footnotemark[1] \email anbeznosikov@gmail.com \\
       \addr Computer, Electrical and Math. Sciences and Engineering Division\\
       King Abdullah University of Science and Technology, 23955, Thuwal, KSA\\
       Skolkovo Institute of Science and Technology, 121205, Moscow, Russia \\
       School of Applied Mathematics and Informatics\\
       Moscow Institute of Physics and Technology, 141701, Moscow, Russia\\
       \AND
       \name Samuel Horv\'{a}th\footnotemark[2] \email samuel.horvath@kaust.edu.sa \\
       \addr Computer, Electrical and Math. Sciences and Engineering Division\\
       King Abdullah University of Science and Technology, 23955, Thuwal, KSA\\
       \AND
       \name Peter Richt\'{a}rik \email peter.richtarik@kaust.edu.sa \\
       \addr Computer, Electrical and Math. Sciences and Engineering Division\\
       King Abdullah University of Science and Technology, 23955, Thuwal, KSA\\
       \AND
       \name Mher Safaryan\footnotemark[3] \email mher.safaryan.1@kaust.edu.sa \\
       \addr Computer, Electrical and Math. Sciences and Engineering Division\\
       King Abdullah University of Science and Technology, 23955, Thuwal, KSA\\}

\footnotetext[1]{The work in Sections 1-5 was conducted while A.\ Beznosikov was a research intern in the Optimization and Machine Learning Lab of Peter Richt\'{a}rik at KAUST; this visit was funded by the KAUST Baseline Research Funding Scheme. The work of A.\ Beznosikov in Section 6 was conducted in Skoltech and was supported by Ministry of Science and Higher Education grant No. 075-10-2021-068. 
Alphabetical author order.
}
\footnotetext[2]{Samuel Horv\'{a}th is currently affiliated with Mohamed bin Zayed University of Artificial Intelligence (MBZUAI).} 
\footnotetext[3]{Mher Safaryan is currently affiliated with Institute of Science and Technology Austria (ISTA).} 

\editor{Silvia Villa}

\maketitle

\begin{abstract}
In the last few years, various communication  compression techniques have  emerged  as an indispensable  tool helping to alleviate the communication bottleneck in distributed learning. However, despite the fact {\em biased} compressors often show superior performance in practice when compared to the much more studied and understood {\em unbiased} compressors,  very little is known about them. In this work we study three classes of biased compression operators, two of which are new, and their performance when applied to  (stochastic) gradient descent and distributed (stochastic) gradient descent.  We show for the first time that biased compressors can lead to linear convergence rates both in the single node and distributed settings. We prove that distributed compressed SGD method, employed with error feedback mechanism, enjoys the ergodic rate $\cO\left( \delta L \exp[-\frac{\mu K}{\delta L}]  + \frac{(C + \delta D)}{K\mu}\right)$, where $\delta\ge1$ is a compression parameter which grows when more compression is applied, $L$ and $\mu$ are the smoothness and strong convexity constants, $C$ captures stochastic gradient noise ($C=0$ if full gradients are computed on each node) and $D$ captures the variance of the gradients at the optimum ($D=0$ for over-parameterized models).  Further,  via a theoretical study of several synthetic and empirical distributions of communicated gradients, we shed light on why and by how much  biased compressors outperform  their unbiased variants. Finally, we propose several new biased compressors with promising theoretical guarantees and practical performance.
\end{abstract}

\begin{keywords}
  Compression operators, biased compressors, distributed learning, linear convergence, error feedback
\end{keywords}

\section{Introduction}\label{sec:intro}

In order to achieve state-of-the-art performance, modern machine learning models need to be trained using large corpora of training data, and often feature an even larger number of trainable parameters \citep{Vaswani2019-overparam,Brown2020fewshot}. The data is typically collected in a distributed manner and stored across a network of edge devices, as is the case in federated learning \citep{FEDLEARN, FL2017-AISTATS, FL_survey_2019, FL-big}, or collected centrally   in a data warehouse composed of a large collection of commodity clusters. In either scenario, communication among the workers is typically the bottleneck.

Motivated by the need for more efficient training methods in traditional distributed and emerging federated environments, we consider optimization problems of the form
\begin{align}
\min \limits_{x \in \R^d}  \left\{f(x) \eqdef \frac{1}{n} \sum \limits_{i=1}^n f_i(x) \right\} \,, \label{eq:probR}
\end{align}
where  $x\in \R^d$ collects the parameters of a statistical  model to be trained, $n$ is the number of workers/devices, and $f_i(x)$ is the loss incurred by model $x$ on data stored on worker $i$.  The loss function $f_i:\R^d\to \R$ often has the form $$f_i(x) \eqdef \EE{\xi \sim \cP_i}{f_\xi(x)},$$ 
with $\cP_i$ being the distribution of training data owned by worker $i$. In federated learning applications, these local distributions can be very different and we do not impose any similarity assumption for them.

\subsection{Distributed optimization}  A fundamental baseline for  solving problem  \eqref{eq:probR} is (distributed) gradient descent (GD), performing updates of the form
\[ x^{k+1} = x^k - \frac{\stepsize^k}{n}\sum \limits_{i=1}^n \nabla f_i(x^k),\]
where $\stepsize^k>0$ is a stepsize. Due to the communication issues inherent to distributed systems, several enhancements to this baseline have been proposed that can better deal with the communication cost challenges of distributed environments, including
acceleration \citep{nesterov2013introductory, beck2009fista, allen2017katyusha}, reducing the number of iterations via momentum,
local methods \citep{FL2017-AISTATS,localSGD-AISTATS2020,Karimireddy2019}, reducing the number of communication rounds via performing multiple local updates before each communication round, and
communication compression \citep{1bit, qsgd,zipml,lim20183lc, alistarh2018convergence, deep, sign_descent_2019}, reducing the size of communicated messages via compression operators.

\subsection{Contributions} In this paper we contribute to a better  understanding of the latter approach to alleviating the communication bottleneck: {\em communication compression}. In particular, we study the theoretical properties of gradient-type methods which employ {\em biased} gradient compression operators, such as Top-$k$ sparsification~\citep{alistarh2018convergence}, or deterministic rounding~\citep{switchML}. Surprisingly, current
\footnote{Here we refer to the \href{https://arxiv.org/abs/2002.12410}{initial online appearance of our work} on February of 2020, after which several enhancements were developed. See Section \ref{sec:related-work} for more details.}
theoretical understanding of such methods is very limited. For instance, there is no general theory of such methods even in the $n=1$ case, only a handful of biased compression techniques have been proposed in the literature, we do not have any theoretical understanding of why biased compression operators could outperform their unbiased counterparts and when. More importantly, there is no good convergence theory for any gradient-type method with a biased compression in the crucial $n>1$ setting.

In this work we address all of the above problems. In particular, our main contributions are:

\begin{itemize}

\item[(a)] We define and study three parametric classes of  biased compression operators (see Section~\ref{sec:biased_compression_operators}), which we denote $\mathbb{B}^1(\alpha,\beta)$, $\mathbb{B}^2(\gamma,\beta)$ and $\mathbb{B}^3(\delta)$, the first two of which are new. We prove that, despite they are alternative parameterization of the same collection of operators, the last two are more favorable than the first one, thus highlighting the importance of parametrization and providing further reductions. We show how is the commonly used class of unbiased compression operators, which we denote $\mathbb{U}(\zeta)$, relates to these biased classes. We also study scaling and compositions of such compressors.

\item[(b)] We then proceed to give a long list of new and known  biased (and some unbiased) compression operators which belong to the above classes in Section~\ref{sec:examples}. A summary of all compressors considered can be found in Table~\ref{table:compressor-examples}.

\item[(c)]  In Section~\ref{sec:analysis_of_biased_GD} we analyze  compressed GD in the $n=1$ case for compressors belonging to all three classes under smoothness and strong convexity assumption. Our theorems generalize existing results which hold for unbiased operators in a tight manner, and also recover the rate of GD in this regime. Our linear convergence results are summarized in Table~\ref{table:iter_complexity}.

\begin{table}[t]
\begin{center}
\begin{tabular}{cccc}
\hline
Compressor & $\cC \in  \mathbb{B}^1(\alpha,\beta)$ & $\cC \in  \mathbb{B}^2(\gamma,\beta)$ & $ \cC\in  \mathbb{B}^3(\delta)$\\
Theorem & Theorem~\ref{thm:main-I} &  Theorem~\ref{thm:main-II} &  Theorem~\ref{thm:main-III} \\
Complexity & $\displaystyle \cO\left(\frac{\beta^2}{\alpha} \frac{L}{\mu} \log \frac{1}{\epsilon} \right)$ & $\displaystyle \cO\left(\frac{\beta}{\gamma} \frac{L}{\mu} \log \frac{1}{\epsilon}\right)$ & $\displaystyle \cO\left(\delta \frac{L}{\mu} \log \frac{1}{\epsilon}\right)$ \\
\hline
\end{tabular}
\end{center}
\caption{Complexity results for GD with biased compression. The identity compressor $\cC(x)\equiv x$ belongs to all  classes with $\alpha=\beta=\gamma=\delta=1$; all three results recover standard rate of GD.}
\label{table:iter_complexity}
\end{table}

\item[(d)]  We ask the question: do biased compressors outperform their unbiased counterparts in theory, and by how much? We answer this question by studying the performance of compressors under various synthetic and empirical statistical assumptions on the distribution of the entries of gradient vectors which need to be compressed. Particularly, we quantify the gains of the Top-$k$ sparsifier when compared against the unbiased Rand-$k$ sparsifier in Section~\ref{sec:stat}.

\item[(e)] Finally, we study the important $n>1$ setting in Section~\ref{sec:distributed} and argue by giving a counterexample that a naive application of biased compression to distributed GD might diverge.  We then show that distributed SGD method equipped with an error-feedback mechanism \citep{stich2019} provably handles biased compressors. In our main result (Theorem~\ref{thm:sparsified}; also see Table~\ref{table:distributed_thm}) we consider three learning schedules and iterate averaging schemes to provide three distinct convergence rates. Our analysis provides the first convergence guarantee for distributed gradient-type method which provably converges for biased compressors, and we thus solve a major open problem in the literature. 

\begin{table}
\begin{center}
\begin{tabular}{ccc}
\hline
Stepsizes & Weights & Rate\\
\hline
$\cO(\frac{1}{k})$  & $\cO(k)$ & $\displaystyle \cO \left(\frac{A_1}{K^2}  +  \frac{A_2}{ K} \right)$  \\
$\cO(1)$ & $\cO(e^{-k})$ & $\displaystyle \tilde \cO \left( A_3 \exp \left[- \frac{K}{A_4} \right] + \frac{A_2}{ K}\right)$ \\
$\cO(1)$ & $1$ & $\displaystyle \cO \left(  \frac{A_3}{ K } + \frac{A_5}{\sqrt{K}}  \right)$ \\
\hline
\end{tabular}
\end{center}

\caption{Ergodic convergence of distributed SGD with biased compression and error-feedback (Algorithm~\ref{alg}) for $L$-smooth and $\mu$-strongly convex functions ($K$ communications). Details are given in Theorem~\ref{thm:sparsified}. }
\label{table:distributed_thm}
\end{table}

\end{itemize}

\subsection{Related work}\label{sec:related-work}

There has been extensive work related to communication compression, mostly focusing on unbiased compressions~\citep{qsgd} as these are much easier to analyze. \new{In particular, it was shown~\citep{sigma_k} that both the classical method with unbiased compression~\citep{qsgd} and more advanced modifications~\citep{DIANA,DIANA-VR} can be considered as special versions of SGD. Subsequently, the results of~\citep{sigma_k} for strongly convex problems were transferred to general convex~\citep{khaled2020unified} and non-convex ~\citep{li2020unified} target functions. In the meantime,} works concerning biased compressions show stronger empirical results but with limited or no analysis~\citep{vogels,lin, sun}. There have been several attempts trying to address this issue, e.g., \citet{errorSGD} provided analysis for quadratics in distributed setting, \citet{zhao} gave analysis for momentum SGD with a specific  biased  compression, but under unreasonable assumptions, i.e., bounded gradient norm and memory. The first result that obtained linear rate of convergence for biased compression was done by \citet{karimireddy2019error}, but only for one node and under bounded gradient norm assumption, which was later overcome by \citet{stich2019}.  

After the initial online appearance of our work, there has been several enhancements in the literature. In particular, \citet{AjallStich2021biased} developed theory for non-convex objectives in the single node setup, \cite{Gorbunov2020EF-SGD} designed a novel error compansated SGD algorithm converging linearly in a more relaxed setting with the help of additional unbiased compressor, \cite{horvath2021a} proposed a simple trick to convert any biased compressor to corresponding induced (unbiased) compressor leading to improved theoretical guarantees. Recently, a new variant of error feedback mechanism was introduced in \citep{EF21,EF21-ext} showing an improved rates for distributed non-convex problems.

\subsection{Basic notation and definitions} We use $\lin{ x,y } \eqdef \sum_{i=1}^d x_i y_i$ to denote standard inner product of  $x,y\in\R^d$, where $x_i$ corresponds to the $i$-th component of $x$ in the standard basis in $\R^d$. This induces  the $\ell_2$-norm in $\R^d$ in the following way $\twonorm{x} \eqdef\sqrt{\lin{ x, x }}$.  We denote $\ell_p$-norms as $\|x\|_p \eqdef (\sum_{i=1}^d|x_i|^p)^{\nicefrac{1}{p}}$ for $p\in(1,\infty)$.
By $\Exp{\cdot}$ we denote mathematical expectation.
For a given differentiable function $f:\R^d\to \R$, we say that it is $L$-smooth if
$$f(y) \leq f(x) + \lin{\nabla f(x), y-x} + \frac{L}{2}\norm{y-x}^2,  \qquad \forall x,y\in \R^d.$$
We say that it is $\mu$-strongly convex if
$$f(y) \geq f(x) + \lin{\nabla f(x), y-x} + \frac{\mu}{2} \norm{y-x}^2, \qquad \forall x,y\in \R^d.$$

\section{Biased Compressors}\label{sec:biased_compression_operators}

By compression operator we mean a (possibly random) mapping $\cC\colon\R^d\to\R^d$ with some constraints.
Typically, literature considers {\em unbiased} compression operators $\cC$ with a bounded second moment, i.e.

\begin{definition}
Let $\zeta \geq 1$. We say that $\cC\in \mathbb{U}(\zeta)$ if $\cC$ is unbiased (i.e., $\Exp{\cC(x)}=x$ for all $x$) and if the  second moment is bounded as\footnote{\eqref{eq:unbiased} can be also written as $\Exp{ \twonorm{\cC(x) -x }^2 } \leq (\zeta-1)  \twonorm{x}^2 $.}
\begin{equation}\label{eq:unbiased}
 \Exp{ \twonorm{\cC(x)}^2 } \leq \zeta  \twonorm{x}^2, \qquad \forall x\in\R^d \,.
\end{equation} 

\end{definition}

\subsection{Three classes of biased compressors}

We instead focus on understanding {\em biased} compression operators, or ``compressors'' in short. We now introduce three classes of biased compressors, the first two are new, which can be seen as natural extensions of unbiased compressors.

\begin{definition}\label{def:comp_I}
We say that $\cC\in \mathbb{B}^1(\alpha,\beta)$ for some $\alpha,\beta>0$ if
\begin{equation} \label{eq:alpha-beta} \alpha \twonorm{x}^2 \leq \Exp{ \twonorm{\cC(x)}^2 } \leq \beta  \langle \Exp{\cC(x)},x \rangle, \qquad \forall x\in\R^d \; .
\end{equation}
\end{definition}

As we shall show next, the second inequality in \eqref{eq:alpha-beta} implies $\Exp{ \twonorm{\cC(x)}^2 } \leq \beta^2 \twonorm{x}^2$. 
\begin{lemma}\label{lem:second_ineq_implies} For any $x\in \R^d$, if  $\Exp{ \twonorm{\cC(x)}^2 } \leq  \beta \langle  \Exp{\cC(x)}, x \rangle$, then\begin{equation}\label{eq:noiuhfu93hufbuf}\Exp{ \twonorm{\cC(x)}^2 } \leq \beta^2\twonorm{x}^2.\end{equation} 
\end{lemma}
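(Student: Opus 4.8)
The plan is to combine Jensen's inequality with the Cauchy--Schwarz inequality to convert the ``mixed'' bound $\Exp{\twonorm{\cC(x)}^2}\le \beta\langle\Exp{\cC(x)},x\rangle$ into a pure upper bound in terms of $\twonorm{x}^2$. Fix $x\in\R^d$ and abbreviate $m\eqdef \twonorm{\Exp{\cC(x)}}$ and $M\eqdef \sqrt{\Exp{\twonorm{\cC(x)}^2}}$. The first observation is that by convexity of $\twonorm{\cdot}^2$ and Jensen's inequality we have $m^2=\twonorm{\Exp{\cC(x)}}^2\le \Exp{\twonorm{\cC(x)}^2}=M^2$, i.e.\ $m\le M$.

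Next I would apply Cauchy--Schwarz to the right-hand side of the hypothesis: $\langle\Exp{\cC(x)},x\rangle\le \twonorm{\Exp{\cC(x)}}\twonorm{x}=m\twonorm{x}$. Chaining this with the assumed inequality and the bound $m\le M$ gives
\begin{equation*}
M^2=\Exp{\twonorm{\cC(x)}^2}\le \beta\langle\Exp{\cC(x)},x\rangle\le \beta m\twonorm{x}\le \beta M\twonorm{x}.
\end{equation*}

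Finally I would divide through by $M$. If $M=0$ the desired conclusion $\Exp{\twonorm{\cC(x)}^2}=M^2\le \beta^2\twonorm{x}^2$ is immediate; otherwise $M\le \beta\twonorm{x}$, and squaring yields $\Exp{\twonorm{\cC(x)}^2}=M^2\le \beta^2\twonorm{x}^2$, as claimed. There is no real obstacle here: the only point requiring a moment's care is the degenerate case $M=0$ (equivalently $\cC(x)=0$ almost surely), which is handled trivially, and one should note that the argument needs no sign assumption on $\langle\Exp{\cC(x)},x\rangle$ since the hypothesis already forces it to be nonnegative.
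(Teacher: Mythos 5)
Your proposal is correct and follows essentially the same route as the paper: Jensen's inequality to bound $\twonorm{\Exp{\cC(x)}}^2$ by $\Exp{\twonorm{\cC(x)}^2}$, then Cauchy--Schwarz on $\langle\Exp{\cC(x)},x\rangle$, then cancellation with care for the degenerate case. The only cosmetic difference is that the paper cancels $\twonorm{\Exp{\cC(x)}}$ directly (obtaining $\twonorm{\Exp{\cC(x)}}\le\beta\twonorm{x}$ and substituting back), whereas you first enlarge $m=\twonorm{\Exp{\cC(x)}}$ to $M=\sqrt{\Exp{\twonorm{\cC(x)}^2}}$ via Jensen and cancel $M$; both handle the zero case trivially.
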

\begin{proof}
 Fix any $x\in \R^d$.  Applying Jensen's inequality, the second inequality in \eqref{eq:alpha-beta} and Cauchy-Schwarz, we get 
\begin{equation}\label{eq:bd8b38bdhbvhbyf}  \twonorm{ \Exp{\cC(x)}}^2   \leq \Exp{ \twonorm{\cC(x)}^2 } \overset{\eqref{eq:alpha-beta}}{\leq} \beta \langle  \Exp{\cC(x)}, x \rangle  \leq \beta   \twonorm{\Exp{ \cC(x) }} \twonorm{x}  .
\end{equation}
If $\Exp{ \cC(x) } \neq 0$, this implies
$\twonorm{\Exp{\cC(x)}} \leq \beta \twonorm{x}$. Plugging this back into \eqref{eq:bd8b38bdhbvhbyf}, we get \eqref{eq:noiuhfu93hufbuf}. If $\Exp{ \cC(x) } = 0$, then from \eqref{eq:alpha-beta} we see that $\Exp{ \twonorm{\cC(x)}^2 }=0$, and \eqref{eq:noiuhfu93hufbuf} holds trivially.
\end{proof}

In the second class, we require the inner product between uncompressed $x$ and compressed $\cC(x)$ vectors to dominate the squared norms of both vectors in expectation.

\begin{definition}\label{def:comp_II}
We say that $\cC\in \mathbb{B}^2(\gamma,\beta)$ for some $\gamma,\beta>0$ if
\begin{equation} \label{eq:alpha-betaII}
\max\left\{ \gamma \twonorm{x}^2 , \tfrac{1}{\beta} \Exp{\twonorm{\cC(x)}^2}\right\} \leq \lin{\Exp{\cC(x)},x } \qquad \forall x\in\R^d \,.
\end{equation}
\end{definition}

Finally, in the third class, we require the compression error $\twonorm{\cC(x) - x}^2$ to be strictly smaller than the squared norm $\twonorm{x}^2$ of the input vector $x$ in expectation.

\begin{definition}\label{def:comp_III}
We say $\cC\in \mathbb{B}^3(\delta)$ for some $\delta>0$ if
\begin{equation}\label{eq:biasedIII}
\Exp{ \twonorm{\cC(x) - x}^2 } \leq \left(1 - \frac{1}{\delta}\right)\twonorm{x}^2, \qquad  \forall x\in\R^d \; .
\end{equation}
\end{definition}

This last definition was also  considered by \citet{sparsified_sgd, cordonnier2018convex}. All three definitions require the compressed vector $\cC(x)$ to be in the neighborhood of the uncompressed vector $x$ so that initial information is preserved with some accuracy. We now establish several basic properties and connections between the classes. We first show that the three classes of biased compressors defined above are equivalent in the following sense: a compressor from any of those three classes can be shown to belong to all three classes with different parameters and after possible scaling.

\begin{theorem}[Equivalence between biased compressors]\label{thm:compression_properties}
Let $\lambda>0$ be a free scaling parameter.
\begin{enumerate}
\item If $\cC\in \mathbb{B}^1(\alpha,\beta)$, then 
\begin{itemize}
\item[(i)] $\beta^2\geq \alpha$ and $\lambda \cC \in \mathbb{B}^1(\lambda^2 \alpha  , \lambda\beta )$, and
\item[(ii)] \new{$\cC\in \mathbb{B}^2(\alpha/\beta,\beta)$} and $ \frac{1}{\beta} \cC \in \mathbb{B}^3(\beta^2/\alpha) $.
\end{itemize}

\item If $\cC \in \mathbb{B}^2(\gamma, \beta)$, then 
\begin{itemize}
\item[(i)] $\beta \geq \gamma$ and $\lambda \cC \in \mathbb{B}^2(\lambda \gamma, \lambda \beta)$, and
\item[(ii)]  $\cC \in \mathbb{B}^1(\gamma^2, \beta)$ and $\frac{1}{\beta}\cC \in \mathbb{B}^3(\beta/\gamma)$.
\end{itemize}

\item If $\cC \in \mathbb{B}^3(\delta)$, then 
\begin{itemize}
\item[(i)] $\delta \geq 1$, and 
\item[(ii)]  $\cC \in \mathbb{B}^2\left(\frac{1}{2\delta}, 2\right)\subseteq \mathbb{B}^1\left(\frac{1}{4\delta^2}, 2\right)$.
\end{itemize}
\end{enumerate}
\end{theorem}

\begin{proof} 
Let us prove this implications for each class separately.
\begin{enumerate}
\item Case $\cC\in \mathbb{B}^1(\alpha,\beta)$:
\begin{itemize}
\item[(i)] Let us choose any $x\neq 0$ and observe that \eqref{eq:alpha-beta} implies that $\Exp{\cC(x)}\neq 0$. Further, from \eqref{eq:alpha-beta} we get the bounds
$$\frac{\Exp{\twonorm{\cC(x)}^2}}{\langle \Exp{\cC(x)}, x\rangle } \leq \beta, \qquad \alpha \leq  \frac{\Exp{\twonorm{\cC(x)}^2}}{\twonorm{x}^2}.$$
Finally, \[
\beta^2 \geq \left(\frac{\Exp{\twonorm{\cC(x)}^2}}{\langle \Exp{\cC(x)}, x\rangle }\right)^2  \geq \frac{\Exp{\twonorm{\cC(x)}^2} \Exp{\twonorm{\cC(x)}^2}}{ \twonorm{ \Exp{\cC(x)}}^2 \twonorm{x}^2} \geq \alpha \frac{\Exp{\twonorm{\cC(x)}^2}}{ \twonorm{ \Exp{\cC(x)}}^2 } \geq \alpha,
\]
where the second inequality is due to Cauchy-Schwarz, and the last inequality follows by applying Jensen inequality.

The scaling property $\lambda \cC \in \mathbb{B}^1(\alpha \lambda^2 , \beta \lambda)$ follows directly from \eqref{eq:alpha-beta}.

\item[(ii)] \new{The inclusion $\cC\in \mathbb{B}^2(\alpha/\beta,\beta)$ follows directly from inequalities \eqref{eq:alpha-beta} and \eqref{eq:alpha-betaII}.} In view of (i), $\lambda \cC \in \mathbb{B}^1(\lambda^2 \alpha  ,  \lambda \beta)$. If we choose $\lambda \leq \frac{2}{\beta}$, then
\begin{eqnarray*}
\Exp{\twonorm{\lambda \cC(x)-x}^2} &=& \Exp{\twonorm{\lambda \cC(x)}^2} - 2\lin{ \Exp{\lambda \cC(x)}, x } + \twonorm{x}^2 \\
& \overset{\eqref{eq:alpha-beta} }{ \leq } & (\beta \lambda - 2) \lin{ \Exp{\lambda \cC(x)}, x } + \twonorm{x}^2\\
& \overset{\eqref{eq:alpha-beta} }{ \leq } & (\beta \lambda - 2) \frac{\alpha \lambda^2}{\beta \lambda}\twonorm{x}^2 + \twonorm{x}^2\\
& \overset{\eqref{eq:alpha-beta} }{ \leq } & \left( \alpha \lambda^2 - 2\frac{\alpha}{\beta} \lambda + 1\right) \twonorm{x}^2.
\end{eqnarray*}
Minimizing the above expression in $\lambda$, we get $\lambda=\frac{1}{\beta}$, and the result follows.

\end{itemize}

\item Case $\cC \in \mathbb{B}^2(\gamma, \beta)$. 

\begin{itemize}
\item[(i)] Using \eqref{eq:alpha-betaII} we get 
\begin{align*}
\gamma \leq \frac{\dotprod{\Exp{\cC(x)},x}}{\twonorm{x}^2} \leq \frac{\Exp{\twonorm{\cC(x)}^2}}{\sqrt{\Exp{\twonorm{\cC(x)}^2}\twonorm{x}^2}} \leq \beta \frac{\dotprod{\Exp{\cC(x)},x}}{\sqrt{\Exp{\twonorm{\cC(x)}^2}\twonorm{x}^2}} \leq \beta ,
\end{align*}
where the first and third inequalities follow from \eqref{eq:alpha-betaII} and the third and the last from Cauchy-Schwarz inequality with Jensen inequality.

The scaling property $\lambda \cC \in \mathbb{B}^2(\lambda \gamma, \lambda \beta)$ follows directly from \eqref{eq:alpha-betaII}.

\item[(ii)]  If $\cC \in \mathbb{B}^2(\gamma, \beta)$, then 
$
\Exp{ \twonorm{\cC(x)}^2 } \leq \beta  \langle \Exp{\cC(x)},x \rangle
$
and
\[
\gamma^2 \twonorm{x}^4 \overset{\eqref{eq:alpha-betaII}}{\leq} \langle \Exp{\cC(x)},x \rangle^2 \leq   \twonorm{\Exp{\cC(x)}}^2\twonorm{x}^2   \leq  \Exp{\twonorm{\cC(x)}^2} \twonorm{x}^2,
\]
where the second inequality is Cauchy-Schwarz, and the third is Jensen. Therefore, $\cC \in \mathbb{B}^1(\gamma^2, \beta)$.

Further, for any $\lambda >0$, we get
\begin{eqnarray*}
\Exp{\twonorm{\lambda \cC(x)-x}^2} &=& \Exp{\twonorm{\lambda \cC(x)}^2} - 2\lin{ \Exp{\lambda \cC(x)}, x } + \twonorm{x}^2 \\
 &=&\lambda^2  \Exp{\twonorm{ \cC(x)}^2} - 2\lambda \lin{ \Exp{ \cC(x)}, x } + \twonorm{x}^2 \\
& \overset{\eqref{eq:alpha-betaII} }{ \leq } & (\lambda \beta - 2) \lambda \lin{ \Exp{ \cC(x)}, x } + \twonorm{x}^2.
\end{eqnarray*}
If we choose $\lambda = \frac{1}{\beta}$, then we can continue as follows:
\begin{eqnarray*}
\Exp{\twonorm{\lambda \cC(x)-x}^2}  & \leq  & -\frac{1}{\beta} \lin{ \Exp{ \cC(x)}, x } + \twonorm{x}^2\\
& \overset{\eqref{eq:alpha-betaII} }{ \leq } & \left(  1-\frac{\gamma}{\beta}\right) \twonorm{x}^2,
\end{eqnarray*}
whence $\frac{1}{\beta}\cC \in \mathbb{B}^3(\beta/\gamma)$.

\end{itemize}


\item Case $\cC \in \mathbb{B}^3(\delta)$. 

\begin{itemize}
\item[(i)] Pick $x\neq 0$. Since $0\leq \Exp{\twonorm{\cC(x)-x}^2} \leq \left(1-\frac{1}{\delta}\right)\norm{x}^2$ and we assume $\delta>0$, we must necessarily have $\delta\geq 1$.

\item[(ii)] If $\cC \in \mathbb{B}^3(\delta)$ then 
\begin{equation*}
 \Exp{ \twonorm{\cC(x)}^2 } - 2 \dotprod{\Exp{\cC(x)},x} + \frac{1}{\delta}\twonorm{x}^2 \leq 0,
\end{equation*}
which implies  that
\begin{equation*}
\frac{1}{2\delta}\twonorm{x}^2 \leq  \dotprod{\Exp{\cC(x)},x} \qquad \text{and} \qquad  \Exp{ \twonorm{\cC(x)}^2} \leq  2 \dotprod{\Exp{\cC(x)},x}.
\end{equation*}
Therefore, $\cC \in \mathbb{B}^2\left(\frac{1}{2\delta}, 2\right)\subseteq \mathbb{B}^1\left(\frac{1}{4\delta^2}, 2\right)$.
\end{itemize}
\end{enumerate}

\end{proof}

Next, we show that, with a proper scaling, any unbiased compressor also belongs to all the three classes of biased compressors.

\begin{theorem}[From unbiased to biased with scaling]\label{thm:unbiased_to_biased}
If $\cC\in \mathbb{U}(\zeta)$, then for the scaled operator $\lambda\cC$ we have
\begin{itemize}
\item[(i)] $\lambda \cC\in \mathbb{B}^1(\lambda^2,\lambda \zeta )$ for $\lambda>0$,
\item[(ii)] $\lambda \cC\in \mathbb{B}^2(\lambda,\lambda \zeta )$ for $\lambda>0$, 
\item[(iii)] $\lambda \cC\in \mathbb{B}^3\left(\frac{1}{\lambda(2 - \zeta \lambda)}\right)$ for $\zeta \lambda < 2$.
\end{itemize}
\end{theorem}

\begin{proof} 
Let $\cC\in \mathbb{U}(\zeta)$.

\begin{itemize}


\item [(i)] Given any $\lambda>0$, consider the scaled operator $\lambda \cC$. We have
\begin{eqnarray*} \lambda^2 \twonorm{x}^2 = \twonorm{\Exp{\lambda \cC(x)}}^2  \leq    \Exp{\twonorm{\lambda \cC(x)}^2}  \leq \lambda^2 \zeta \twonorm{x}^2  =  \lambda \zeta \langle \Exp{\lambda \cC(x)},x \rangle,
\end{eqnarray*}
whence $\cC\in \mathbb{B}^1(\lambda^2,\lambda \zeta)$.

\item [(ii)] Given any $\lambda>0$, consider the scaled operator $\lambda \cC$. We have
\begin{eqnarray*} \lambda \twonorm{x}^2 &=& \langle \Exp{\lambda \cC(x)},x \rangle, \\
\Exp{\twonorm{\lambda \cC(x)}^2} 
& \leq & \lambda^2 \zeta \twonorm{x}^2 = \lambda \zeta \langle \Exp{\lambda \cC(x)},x \rangle,
\end{eqnarray*}
whence $\lambda \cC\in \mathbb{B}^2(\lambda,\lambda \zeta)$.

\item [(iii)] Given $\lambda>0$ such that $\lambda \zeta < 2$, consider the scaled operator $\lambda \cC$. We have
\begin{eqnarray*} 
\Exp{\norm{\lambda \cC(x)- x}^2} &=& \Exp{\norm{\lambda \cC(x)}^2} - 2 \lin{\Exp{\lambda \cC(x)}, x} + \norm{x}^2 \\
&\leq& (\zeta \lambda^2 - 2 \lambda + 1)\norm{x}^2   
\end{eqnarray*}
whence $\lambda \cC\in \mathbb{B}^3\left(\frac{1}{\lambda(2 - \zeta \lambda)}\right)$.

\end{itemize}

\end{proof}

 \subsection{Examples of biased compressors: old and new} \label{sec:examples}

We  now give some examples of compression operators belonging to the classes $\mathbb{B}^1$,  $\mathbb{B}^2$,  $\mathbb{B}^3$ and  $\mathbb{U}$. Several of them are new. For a summary, refer to Table~\ref{table:compressor-examples}.

\begin{table*}[!t]
{\footnotesize
\resizebox{\linewidth}{!}{
\begin{tabular}{lcccccc}
\hline
Compression Operator $\cC$    &  Unbiased? &  $\alpha$ & $\beta$ & $\gamma$ & $\delta$ & $\zeta$\\
\hline 
Unbiased random sparsification  & \cmark  &  &  &  &  & $\nicefrac{d}{k}$ \\ 
Biased random sparsification  {\bf [NEW]} & \xmark  & $q$ & $1$ & $q$ & $\nicefrac{1}{q}$ & \\ 
Adaptive random sparsification {\bf [NEW]} & \xmark & $\nicefrac{1}{d}$ & $1$ & $\nicefrac{1}{d}$ & $d$ & \\
Top-$k$ sparsification \citep{alistarh2018convergence}  & \xmark & $\nicefrac{k}{d}$ & $1$ & $\nicefrac{k}{d}$ & $\nicefrac{d}{k}$ & \\
General unbiased rounding {\bf [NEW]} & \cmark & & & & & 
$\tfrac{1}{4} \sup\left(\tfrac{a_k}{a_{k+1}} + \tfrac{a_{k+1}}{a_k} + 2\right)$ \\
Unbiased exponential rounding {\bf [NEW]} & \cmark & & & & & $\tfrac{1}{4}\left( b+\frac{1}{b}+2\right)$ \\
Biased exponential rounding {\bf [NEW]} & \xmark  & $\left(\frac{2}{b+1}\right)^2$ & $\frac{2b}{b+1}$ & $\frac{2}{b+1}$ & $\frac{(b+1)^2}{4b}$ & \\
Natural compression \citep{Cnat} & \cmark & & & & & $\nicefrac{9}{8}$ \\
General exponential dithering {\bf [NEW]} & \cmark & & & & & $\zeta_b$ \\
Natural dithering  \citep{Cnat} & \cmark & & & & & $\zeta_2$ \\
Top-$k$ + exponential dithering  {\bf [NEW]} & \xmark & $\nicefrac{k}{d}$ & $\zeta_b$ & $\nicefrac{k}{d}$ & $\zeta_b\nicefrac{d}{k}$ &  \\
\hline
\end{tabular}
}
}
\caption{Compressors $\cC$ described in Section~\ref{sec:examples} and their membership in  $\mathbb{B}^1(\alpha,\beta)$, $\mathbb{B}^2(\gamma,\beta)$,  $\mathbb{B}^3(\delta)$ and $\mathbb{U}(\zeta)$.}
\label{table:compressor-examples}
\end{table*}

\begin{itemize}

\item[(a)] For $k \in [d]\eqdef \{1,\dots,d\}$, the {\bf unbiased random  (aka Rand-$k$) sparsification} operator is defined via
\begin{equation}\label{ex:ur-sparse}
\cC(x) \eqdef \frac{d}{k}\sum \limits_{i\in S}x_ie_i,
\end{equation}
where $S\subseteq [d]$ is the $k$-nice sampling; i.e., a subset of $[d]$ of cardinality $k$ chosen uniformly at random, and $e_1,\dots,e_d$ are the standard unit basis vectors in $\R^d$.

\begin{lemma}\label{lem-ex:ur-sparse}
The Rand-$k$ sparsifier (\ref{ex:ur-sparse}) belongs to $\U(\tfrac{d}{k})$.
\end{lemma}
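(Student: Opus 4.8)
The plan is to verify the two defining properties of the class $\U(\tfrac dk)$ directly from the definition of $\cC$, namely that $\cC$ is unbiased and that $\Exp{\twonorm{\cC(x)}^2}\le \tfrac dk \twonorm{x}^2$. The only probabilistic fact needed is the marginal inclusion probability of a fixed coordinate: since $S$ is chosen uniformly among the $\binom dk$ subsets of $[d]$ of size $k$, for each fixed $i\in[d]$ we have $\Prob{i\in S}=\binom{d-1}{k-1}/\binom dk = \tfrac kd$. I would state this first and use it throughout.

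\emph{Unbiasedness.} Write $\cC(x)=\tfrac dk\sum_{i=1}^d \mathbbm{1}[i\in S]\,x_i e_i$ and take expectation coordinatewise. By linearity and the marginal probability computation, $\Exp{\cC(x)}=\tfrac dk\sum_{i=1}^d \Prob{i\in S}\,x_i e_i=\tfrac dk\cdot\tfrac kd\sum_{i=1}^d x_i e_i=x$, so $\Exp{\cC(x)}=x$ for all $x$.

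\emph{Second moment.} Because the vectors $e_1,\dots,e_d$ are orthonormal and every retained coordinate contributes exactly one term, there are no cross terms, and hence $\twonorm{\cC(x)}^2=\tfrac{d^2}{k^2}\sum_{i\in S} x_i^2=\tfrac{d^2}{k^2}\sum_{i=1}^d \mathbbm{1}[i\in S]\,x_i^2$. Taking expectations again gives $\Exp{\twonorm{\cC(x)}^2}=\tfrac{d^2}{k^2}\sum_{i=1}^d \Prob{i\in S}\,x_i^2=\tfrac{d^2}{k^2}\cdot\tfrac kd\,\twonorm{x}^2=\tfrac dk\,\twonorm{x}^2$. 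In particular the bound \eqref{eq:unbiased} holds with $\zeta=\tfrac dk$ (and in fact with equality), so $\cC\in\U(\tfrac dk)$.

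There is no real obstacle here; the computation is routine once the marginal probability $\Prob{i\in S}=k/d$ is established, and the only point worth a careful word is that the squared norm splits as a plain sum over $i\in S$ with no interaction terms, which is why $\zeta=d/k$ comes out exactly rather than as an inequality.
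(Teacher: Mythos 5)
Your proof is correct and follows essentially the same route as the paper: compute the marginal inclusion probability $\Prob(i\in S)=k/d$, then use linearity of expectation for unbiasedness and the absence of cross terms for the second moment, yielding $\Exp{\twonorm{\cC(x)}^2}=\tfrac dk\twonorm{x}^2$ exactly. Your added remark about why no interaction terms appear is a nice bit of exposition, but the substance is identical.
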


\item[(b)] Let $S\subseteq [d]$ be a random set, with probability vector $p\eqdef (p_1,\dots,p_d)$, where  $p_i \eqdef \Prob(i\in S)>0$ for all $i$ (such a set is called a proper sampling~\citep{PCDM}). Define {\bf biased random sparsification} operator via
\begin{equation}\label{ex:br-sparse}
 \cC(x) \eqdef \sum \limits_{i\in S} x_i e_i.
\end{equation}

\begin{lemma}\label{lem-ex:br-sparse}
Letting $q \eqdef \min_i p_i$, the biased random sparsification operator (\ref{ex:br-sparse}) belongs to $\mathbb{B}^1(q, 1)$, $\mathbb{B}^2(q, 1)$, $\mathbb{B}^3(\nicefrac{1}{q})$.
\end{lemma}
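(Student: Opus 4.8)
The plan is to verify the defining inequality of each of the three classes directly from the formula $\cC(x) = \sum_{i \in S} x_i e_i$, using only the elementary fact that $\Prob(i \in S) = p_i$ and that $\Exp{\mathbbm{1}[i\in S]} = p_i$. First I would compute the mean: since $\cC(x) = \sum_{i=1}^d \mathbbm{1}[i \in S] x_i e_i$, linearity gives $\Exp{\cC(x)} = \sum_{i=1}^d p_i x_i e_i$, the coordinate-wise rescaling of $x$ by the probability vector. From this, $\lin{\Exp{\cC(x)}, x} = \sum_{i=1}^d p_i x_i^2$, and since $\mathbbm{1}[i\in S]^2 = \mathbbm{1}[i\in S]$ we also get $\Exp{\twonorm{\cC(x)}^2} = \sum_{i=1}^d p_i x_i^2$. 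So the two quantities coincide, which is the crucial observation making all three memberships essentially immediate.

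Next I would bound $\sum_i p_i x_i^2$ from below and above by $q \twonorm{x}^2 = q\sum_i x_i^2$ and $\twonorm{x}^2$ respectively, using $q = \min_i p_i \le p_i \le 1$. For $\mathbb{B}^1(q,1)$: the lower bound $q\twonorm{x}^2 \le \Exp{\twonorm{\cC(x)}^2}$ is exactly the first inequality in \eqref{eq:alpha-beta} with $\alpha = q$, and $\Exp{\twonorm{\cC(x)}^2} = \lin{\Exp{\cC(x)},x} \le 1 \cdot \lin{\Exp{\cC(x)},x}$ gives the second with $\beta = 1$. For $\mathbb{B}^2(q,1)$: we need $\max\{q\twonorm{x}^2, \Exp{\twonorm{\cC(x)}^2}\} \le \lin{\Exp{\cC(x)},x}$; since $\Exp{\twonorm{\cC(x)}^2} = \lin{\Exp{\cC(x)},x}$ and $q\twonorm{x}^2 \le \lin{\Exp{\cC(x)},x}$, both parts of the max are dominated, with $\gamma = q$, $\beta = 1$.

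For $\mathbb{B}^3(1/q)$ I would expand $\Exp{\twonorm{\cC(x) - x}^2}$. Coordinate-wise, the $i$-th entry of $\cC(x) - x$ is $(\mathbbm{1}[i\in S] - 1)x_i$, which equals $-x_i$ with probability $1 - p_i$ and $0$ with probability $p_i$; hence $\Exp{((\mathbbm{1}[i\in S]-1)x_i)^2} = (1-p_i)x_i^2$. Summing, $\Exp{\twonorm{\cC(x)-x}^2} = \sum_i (1-p_i)x_i^2 \le (1 - q)\sum_i x_i^2 = (1 - q)\twonorm{x}^2$, which is \eqref{eq:biasedIII} with $1 - 1/\delta = 1 - q$, i.e. $\delta = 1/q$. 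Alternatively, this last membership also follows from the $\mathbb{B}^1(q,1)$ membership via Theorem~\ref{thm:compression_properties}, but the direct computation is cleaner and gives the stated $\delta$ without slack.

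There is no real obstacle here; the only mild subtlety is making sure one does not accidentally introduce a suboptimal constant (e.g. going through the $\mathbb{B}^1 \to \mathbb{B}^3$ reduction would give $\delta = \beta^2/\alpha = 1/q$ in this case, which happens to match, but in general the direct route is safer). I would present the three verifications in the order $\mathbb{B}^1$, $\mathbb{B}^2$, $\mathbb{B}^3$, sharing the single computation $\Exp{\twonorm{\cC(x)}^2} = \lin{\Exp{\cC(x)},x} = \sum_i p_i x_i^2$ as the common starting point.
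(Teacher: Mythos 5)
Your proof is correct and follows essentially the same route as the paper: compute $\Exp{\cC(x)}$ and $\Exp{\twonorm{\cC(x)}^2}$, observe that $\Exp{\twonorm{\cC(x)}^2}=\lin{\Exp{\cC(x)},x}=\sum_i p_i x_i^2$, squeeze by $q\twonorm{x}^2\le\sum_i p_i x_i^2\le\twonorm{x}^2$ for the first two classes, and expand $\Exp{\twonorm{\cC(x)-x}^2}=\sum_i(1-p_i)x_i^2$ for the third. The only cosmetic difference is that you make the coincidence $\Exp{\twonorm{\cC(x)}^2}=\lin{\Exp{\cC(x)},x}$ explicit as the organizing observation, which is a nice presentation choice but not a different argument.
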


\item[(c)] {\bf Adaptive random sparsification} is defined via
\begin{equation}\label{ex:ar-sparse}
\cC(x) \eqdef x_i e_i \quad \text{ with probability } \quad \frac{|x_i|}{\onenorm{x}}.
\end{equation}

\begin{lemma}\label{lem-ex:ar-sparse}
Adaptive random sparsification operator (\ref{ex:ar-sparse}) belongs to $\mathbb{B}^1(\frac{1}{d},1)$, $\mathbb{B}^2(\frac{1}{d},1)$, $\mathbb{B}^3(d)$.
\end{lemma}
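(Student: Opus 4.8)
The plan is to reduce all three memberships to a single scalar identity for the first two moments of $\cC$, together with one elementary norm inequality.

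First I would compute the relevant expectations directly from the definition \eqref{ex:ar-sparse}. Since $\cC(x)=x_ie_i$ with probability $|x_i|/\onenorm{x}$ (and $\cC(0)=0$, for which all three claims are trivial), we get
\[
\Exp{\cC(x)}=\frac{1}{\onenorm{x}}\sum_{i=1}^d|x_i|x_ie_i,\qquad
\Exp{\twonorm{\cC(x)}^2}=\frac{1}{\onenorm{x}}\sum_{i=1}^d|x_i|x_i^2=\frac{\threenorm{x}^3}{\onenorm{x}},
\]
and likewise $\lin{\Exp{\cC(x)},x}=\frac{1}{\onenorm{x}}\sum_{i=1}^d|x_i|x_i^2=\threenorm{x}^3/\onenorm{x}$. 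The key observation is therefore the identity $\Exp{\twonorm{\cC(x)}^2}=\lin{\Exp{\cC(x)},x}=\threenorm{x}^3/\onenorm{x}$, which immediately handles the ``$\beta$-type'' inequalities in \eqref{eq:alpha-beta} and \eqref{eq:alpha-betaII} with $\beta=1$.

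Next I would establish the only nontrivial estimate, namely
\[
\frac{\threenorm{x}^3}{\onenorm{x}}\;\ge\;\frac{1}{d}\,\twonorm{x}^2\qquad(\forall x).
\]
This follows from two applications of Cauchy--Schwarz: writing $x_i^2=|x_i|^{3/2}|x_i|^{1/2}$ gives $\twonorm{x}^2\le\threenorm{x}^{3/2}\onenorm{x}^{1/2}$, i.e.\ $\twonorm{x}^4\le\threenorm{x}^3\onenorm{x}$; combining this with $\onenorm{x}^2\le d\,\twonorm{x}^2$ yields $\threenorm{x}^3\ge\twonorm{x}^4/\onenorm{x}\ge\twonorm{x}^2\onenorm{x}/d$, as claimed.

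Finally I would assemble the conclusions. From the identity and the estimate, $\frac{1}{d}\twonorm{x}^2\le\Exp{\twonorm{\cC(x)}^2}=\lin{\Exp{\cC(x)},x}$, so $\cC\in\mathbb{B}^1(\tfrac1d,1)$ and $\cC\in\mathbb{B}^2(\tfrac1d,1)$ directly from Definitions~\ref{def:comp_I} and~\ref{def:comp_II}. For $\mathbb{B}^3$, expand $\Exp{\twonorm{\cC(x)-x}^2}=\Exp{\twonorm{\cC(x)}^2}-2\lin{\Exp{\cC(x)},x}+\twonorm{x}^2=\twonorm{x}^2-\threenorm{x}^3/\onenorm{x}\le(1-\tfrac1d)\twonorm{x}^2$, so $\cC\in\mathbb{B}^3(d)$. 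Alternatively, the $\mathbb{B}^2$ and $\mathbb{B}^3$ memberships can be obtained from $\cC\in\mathbb{B}^1(\tfrac1d,1)$ via Theorem~\ref{thm:compression_properties}, exploiting that $\beta=1$ so that $\tfrac1\beta\cC=\cC$. The only real work here is the norm inequality in the second paragraph; everything else is bookkeeping, with the case $x=0$ treated separately since both sides vanish.
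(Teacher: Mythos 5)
Your proof is correct and reaches exactly the same parameters. The computation of the two moments $\Exp{\twonorm{\cC(x)}^2}=\lin{\Exp{\cC(x)},x}=\threenorm{x}^3/\onenorm{x}$ matches the paper's, and the bookkeeping that converts the scalar inequality into the three memberships is the same in spirit, though you spell out the $\mathbb{B}^3$ step via the expansion $\Exp{\twonorm{\cC(x)-x}^2}=\twonorm{x}^2-\threenorm{x}^3/\onenorm{x}$ where the paper just reads off $\delta=d$. The genuine difference is in how you prove the key norm estimate $\threenorm{x}^3/\onenorm{x}\ge\twonorm{x}^2/d$. The paper invokes Chebyshev's sum inequality once, observing that $(|x_i|)_i$ and $(x_i^2)_i$ are similarly ordered, so that
\[
\tfrac1d\threenorm{x}^3=\tfrac1d\sum_i|x_i|x_i^2\ \ge\ \Bigl(\tfrac1d\sum_i|x_i|\Bigr)\Bigl(\tfrac1d\sum_i x_i^2\Bigr)=\tfrac1{d^2}\onenorm{x}\twonorm{x}^2.
\]
You instead chain two applications of Cauchy--Schwarz: the interpolation bound $\twonorm{x}^4\le\threenorm{x}^3\onenorm{x}$ (from $x_i^2=|x_i|^{3/2}|x_i|^{1/2}$) combined with $\onenorm{x}^2\le d\,\twonorm{x}^2$. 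Both derivations are elementary and yield the identical constant; the Chebyshev route is a single step and perhaps more elegant, while your Cauchy--Schwarz route decomposes the estimate into two more universally familiar ingredients and avoids invoking a rearrangement-type inequality. Either proof is fine; there is no gap.
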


\item[(d)] {\bf Greedy (aka Top-$k$) sparsification} operator is defined via
\begin{equation}\label{ex:top-sparse}
\cC(x) \eqdef \sum \limits_{i=d-k+1}^d x_{(i)} e_{(i)},
\end{equation}
where coordinates are ordered by their magnitudes so that $\abs{x_{(1)}} \leq \abs{x_{(2)}} \leq \cdots \leq \abs{x_{(d)}}$.

\begin{lemma}\label{lem-ex:top-sparse}
Top-$k$ sparsification operator (\ref{ex:top-sparse}) belongs to $\mathbb{B}^1(\frac{k}{d},1)$, $\mathbb{B}^2(\frac{k}{d},1)$, and $\mathbb{B}^3(\frac{d}{k})$.
\end{lemma}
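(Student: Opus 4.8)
The plan is to exploit the support structure of the Top-$k$ map. Write $T\eqdef\{(d-k+1),\dots,(d)\}$ for the indices of the $k$ coordinates of largest magnitude, so that $\cC(x)=\sum_{i\in T}x_ie_i$ is \emph{deterministic}, $\cC(x)$ is supported on $T$, and $\cC(x)-x=-\sum_{i\notin T}x_ie_i$ is supported on the complement of $T$. Two identities follow immediately: (i) $\cC(x)$ and $\cC(x)-x$ have disjoint supports, hence are orthogonal, giving the Pythagorean identity $\twonorm{x}^2=\twonorm{\cC(x)}^2+\twonorm{\cC(x)-x}^2$; and (ii) $\lin{\cC(x),x}=\sum_{i\in T}x_i^2=\twonorm{\cC(x)}^2$. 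Since $\cC$ is deterministic, all expectations in Definitions~\ref{def:comp_I}--\ref{def:comp_III} are vacuous.

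First I would establish the $\mathbb{B}^3(\nicefrac{d}{k})$ membership, which carries the real content. By the support description, $\twonorm{\cC(x)-x}^2=\sum_{i=1}^{d-k}x_{(i)}^2$, i.e.\ the sum of the $d-k$ smallest squared entries. I would then invoke the elementary averaging fact that for nonnegative reals $a_1\le\cdots\le a_d$ and any $m\le d$, $\sum_{i=1}^m a_i\le\tfrac{m}{d}\sum_{i=1}^d a_i$ — equivalently $(d-m)\sum_{i=1}^m a_i\le m\sum_{i=m+1}^d a_i$, which follows from $\sum_{i=1}^m a_i\le m\,a_m$ and $\sum_{i=m+1}^d a_i\ge(d-m)\,a_m$. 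Applying this with $m=d-k$ and $a_i=x_{(i)}^2$ yields $\twonorm{\cC(x)-x}^2\le\tfrac{d-k}{d}\twonorm{x}^2=\bigl(1-\tfrac{k}{d}\bigr)\twonorm{x}^2$, which is exactly \eqref{eq:biasedIII} with $\delta=\nicefrac{d}{k}$.

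For $\mathbb{B}^1$ and $\mathbb{B}^2$, combine the Pythagorean identity (i) with the bound just proved: $\twonorm{\cC(x)}^2=\twonorm{x}^2-\twonorm{\cC(x)-x}^2\ge\tfrac{k}{d}\twonorm{x}^2$. Together with identity (ii) this gives $\lin{\cC(x),x}=\twonorm{\cC(x)}^2\ge\tfrac{k}{d}\twonorm{x}^2$, while trivially $\twonorm{\cC(x)}^2\le\lin{\cC(x),x}$. Reading these against \eqref{eq:alpha-beta} with $\alpha=\nicefrac{k}{d}$, $\beta=1$ gives $\cC\in\mathbb{B}^1(\nicefrac{k}{d},1)$, and against \eqref{eq:alpha-betaII} with $\gamma=\nicefrac{k}{d}$, $\beta=1$ gives $\cC\in\mathbb{B}^2(\nicefrac{k}{d},1)$. (One could instead obtain $\mathbb{B}^1$/$\mathbb{B}^2$ membership from $\mathbb{B}^3$ via Theorem~\ref{thm:compression_properties}(ii), but that route loses constant factors — it would only give $\mathbb{B}^2(\tfrac{k}{2d},2)$ — so the direct argument is preferable.)

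The only genuine obstacle is the elementary averaging inequality in the second paragraph; everything else is bookkeeping with the support structure and the two identities. I anticipate no real difficulty, but would be careful to state that inequality for the \emph{squared} coordinates, since it is $\twonorm{\cdot}^2$, not $\onenorm{\cdot}$, that appears in all three definitions.
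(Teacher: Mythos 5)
Your proof is correct and takes essentially the same route as the paper: both rest on the identities $\twonorm{\cC(x)}^2=\lin{\cC(x),x}=\sum_{i=d-k+1}^d x_{(i)}^2$ and $\twonorm{\cC(x)-x}^2=\sum_{i=1}^{d-k}x_{(i)}^2$, combined with the averaging bound for sorted nonnegative numbers to get $\twonorm{\cC(x)-x}^2\le(1-\nicefrac{k}{d})\twonorm{x}^2$ and $\twonorm{\cC(x)}^2\ge\nicefrac{k}{d}\twonorm{x}^2$. The paper states these inequalities as immediate; you helpfully spell out the averaging step, but the underlying argument is the same.
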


\item[(e)] Let $(a_k)_{k\in\Z}$ be an arbitrary increasing sequence of positive numbers such that $\inf a_k = 0$ and $\sup a_k = \infty$. Then {\bf general unbiased rounding} $\cC$ is defined as follows: if $a_k\le |x_i|\le a_{k+1}$ for some coordinate $i\in[d]$, then
\begin{equation}\label{ex:gu-rounding}
\cC(x)_i = 
\begin{cases}
    \sign(x_i)a_k     & \text{ with probability } \quad \frac{a_{k+1}-|x_i|}{a_{k+1}-a_k}\\ 
    \sign(x_i)a_{k+1} & \text{ with probability } \quad \frac{|x_i| - a_{k}}{a_{k+1}-a_k}\\
\end{cases}
\end{equation}

\begin{lemma}\label{lem-ex:gu-rounding}
General unbiased rounding operator (\ref{ex:gu-rounding}) belongs to $\U(\zeta)$, where
$$
\zeta = \frac{1}{4} \sup_{k\in\Z}\(\frac{a_k}{a_{k+1}} + \frac{a_{k+1}}{a_k} + 2\).
$$
\end{lemma}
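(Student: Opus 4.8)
The plan is to reduce everything to a one–dimensional estimate, since both the map $\cC$ and the quantity $\Exp{\twonorm{\cC(x)}^2}$ split over coordinates. Fix a coordinate $i$ and write $t \eqdef |x_i|$. If $t=0$ we set $\cC(x)_i = 0$ and there is nothing to verify; otherwise, because $(a_k)_{k\in\Z}$ is increasing with $\inf a_k = 0$ and $\sup a_k = \infty$, there is an index $k$ with $a_k \le t \le a_{k+1}$, and $\cC(x)_i$ takes the two values $\sign(x_i) a_k$ and $\sign(x_i) a_{k+1}$ with the probabilities prescribed in \eqref{ex:gu-rounding}. By linearity of expectation, no independence across coordinates is needed in what follows.

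First I would check unbiasedness coordinatewise. A direct computation gives
$$
\Exp{\cC(x)_i} \;=\; \sign(x_i)\,\frac{a_k(a_{k+1}-t) + a_{k+1}(t-a_k)}{a_{k+1}-a_k} \;=\; \sign(x_i)\,t \;=\; x_i,
$$
hence $\Exp{\cC(x)} = x$. Next I would compute the second moment of the single coordinate; expanding and simplifying the telescoping numerator yields
$$
\Exp{|\cC(x)_i|^2} \;=\; \frac{a_k^2(a_{k+1}-t) + a_{k+1}^2(t-a_k)}{a_{k+1}-a_k} \;=\; (a_k + a_{k+1})\,t - a_k a_{k+1},
$$
which is affine in $t$ and equals $a_k^2$ at $t=a_k$ and $a_{k+1}^2$ at $t=a_{k+1}$ — a convenient sanity check, and also the observation that at the endpoints the ratio $\Exp{|\cC(x)_i|^2}/t^2$ is exactly $1$.

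The key step is to maximize the ratio $\Exp{|\cC(x)_i|^2}/t^2$ over $t\in[a_k,a_{k+1}]$. Substituting $u \eqdef 1/t \in [\,1/a_{k+1},\,1/a_k\,]$ turns this ratio into the concave quadratic $g(u) \eqdef (a_k+a_{k+1})u - a_k a_{k+1} u^2$, whose unconstrained maximizer $u^\star = \tfrac12\left(1/a_k + 1/a_{k+1}\right)$ is the midpoint of the interval and therefore admissible; consequently
$$
\frac{\Exp{|\cC(x)_i|^2}}{t^2} \;\le\; g(u^\star) \;=\; \frac{(a_k+a_{k+1})^2}{4\,a_k a_{k+1}} \;=\; \frac14\left(\frac{a_k}{a_{k+1}} + \frac{a_{k+1}}{a_k} + 2\right) \;\le\; \zeta.
$$
Finally I would sum over $i$: $\Exp{\twonorm{\cC(x)}^2} = \sum_{i=1}^d \Exp{|\cC(x)_i|^2} \le \zeta \sum_{i=1}^d |x_i|^2 = \zeta \twonorm{x}^2$, and note $\zeta \ge 1$ since each term $\tfrac14(a_k/a_{k+1} + a_{k+1}/a_k + 2) \ge 1$ by AM–GM, so that $\cC\in\U(\zeta)$. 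I do not anticipate any genuine obstacle here: the only mildly non‑mechanical point is realizing that the worst case for the ratio is \emph{interior} to $[a_k,a_{k+1}]$ rather than at an endpoint, which the change of variable $u=1/t$ makes transparent.
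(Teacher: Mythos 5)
Your proof is correct and follows essentially the same route as the paper's: reduce to a single coordinate, show unbiasedness, compute the second moment $(a_k+a_{k+1})t - a_k a_{k+1}$, and maximize the ratio to $t^2$ over $[a_k,a_{k+1}]$. The only (cosmetic) difference is that you substitute $u = 1/t$ to turn the maximization into a concave quadratic with midpoint maximizer, whereas the paper finds the critical point $t_* = 2a_ka_{k+1}/(a_k+a_{k+1})$ directly and checks it lies in the interval as the harmonic mean — both arrive at the same value.
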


Notice that $\zeta$ is minimizing for exponential roundings $a_k=b^k$ with some basis $b>1$, in which case $\zeta = \tfrac{1}{4}\(b+\nicefrac{1}{b}+2\)$.

\item[(f)] Let $(a_k)_{k\in\Z}$ be an arbitrary increasing sequence of positive numbers such that $\inf a_k = 0$ and $\sup a_k = \infty$. Then {\bf general biased rounding} is defined via
\begin{equation}\label{ex:gb-rounding}
\cC(x)_i \eqdef \sign(x_i)\arg \min \limits_{t\in(a_k)} |t-|x_i||,\qquad i\in[d].
\end{equation}

\begin{lemma}\label{lem-ex:gb-rounding}
General biased rounding operator (\ref{ex:gb-rounding}) belongs to $\B^1(\alpha, \beta)$, $\B^2(\gamma, \beta)$, and $\B^3(\delta)$, where $$\beta = \sup_{k\in\Z}\frac{2a_{k+1}}{a_k+a_{k+1}}, \qquad \gamma = \inf_{k\in\Z}\frac{2a_k}{a_k+a_{k+1}}, \qquad \alpha = \gamma^2, \qquad \delta = \sup_{k\in\Z}\frac{\(a_k + a_{k+1}\)^2}{4a_k a_{k+1}}.$$
\end{lemma}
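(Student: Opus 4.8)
The plan is to work coordinate-by-coordinate, since both the compressor and all the relevant quantities ($\twonorm{\cC(x)}^2$, $\lin{\cC(x),x}$, $\twonorm{\cC(x)-x}^2$, $\twonorm{x}^2$) decompose as sums over $i\in[d]$. Fix a coordinate and write $t=|x_i|$; by scaling it suffices to understand the ratio $\cC(x)_i/x_i = a(t)/t$ where $a(t)$ is the nearest point of the sequence $(a_k)$ to $t$. The operator is deterministic, so $\Exp{\cdot}$ drops out. The key elementary fact is: if $a_k\le t\le a_{k+1}$, then the nearest grid point $a(t)$ satisfies $a(t)\in\{a_k,a_{k+1}\}$, hence $a_k\le a(t)$ and $a(t)\le a_{k+1}$; dividing by $t$ (and using $a_k\le t\le a_{k+1}$) gives
\[
\frac{a_k}{a_{k+1}} \;\le\; \frac{a_k}{t} \;\le\; \frac{a(t)}{t} \;\le\; \frac{a_{k+1}}{t} \;\le\; \frac{a_{k+1}}{a_k}.
\]
So on the interval $[a_k,a_{k+1}]$ we have $\gamma_k \le a(t)/t \le \beta_k$ for suitable constants; but the stated $\gamma,\beta$ are sharper — they come from the fact that the nearest point crosses from $a_k$ to $a_{k+1}$ at the midpoint $\tfrac{a_k+a_{k+1}}{2}$, so actually $\tfrac{a(t)}{t}\le \tfrac{a_{k+1}}{(a_k+a_{k+1})/2} = \tfrac{2a_{k+1}}{a_k+a_{k+1}}$ (worst case $t$ just below the midpoint, $a(t)=a_{k+1}$) and $\tfrac{a(t)}{t}\ge \tfrac{a_k}{(a_k+a_{k+1})/2} = \tfrac{2a_k}{a_k+a_{k+1}}$ (worst case $t$ just above the midpoint, $a(t)=a_k$). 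Taking $\sup_k$ and $\inf_k$ over these per-interval bounds yields the claimed $\beta$ and $\gamma$.

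From the per-coordinate bound $\gamma \le a(t)/t \le \beta$ (valid for every coordinate and every $x$), I would assemble the three memberships. Writing $r_i \eqdef \cC(x)_i/x_i\in[\gamma,\beta]$ when $x_i\ne 0$ (and $\cC(x)_i=x_i=0$ otherwise, contributing nothing), we have $\twonorm{\cC(x)}^2=\sum_i r_i^2 x_i^2$ and $\lin{\cC(x),x}=\sum_i r_i x_i^2$. For $\mathbb{B}^2(\gamma,\beta)$: since $r_i\ge\gamma$, $\lin{\cC(x),x}=\sum r_i x_i^2 \ge \gamma\twonorm{x}^2$; and since $0< r_i\le\beta$ we get $r_i^2 \le \beta r_i$, hence $\twonorm{\cC(x)}^2 = \sum r_i^2 x_i^2 \le \beta\sum r_i x_i^2 = \beta\lin{\cC(x),x}$, i.e. $\tfrac1\beta\twonorm{\cC(x)}^2\le\lin{\cC(x),x}$. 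Both inequalities of \eqref{eq:alpha-betaII} hold, so $\cC\in\mathbb{B}^2(\gamma,\beta)$. Then $\mathbb{B}^1(\gamma^2,\beta)$ follows from Theorem~\ref{thm:compression_properties} (the implication $\mathbb{B}^2(\gamma,\beta)\Rightarrow\mathbb{B}^1(\gamma^2,\beta)$); alternatively directly, $\twonorm{\cC(x)}^2=\sum r_i^2 x_i^2\ge\gamma^2\twonorm{x}^2$ and $\beta\lin{\cC(x),x}\ge\sum r_i^2 x_i^2$ as just shown.

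For $\mathbb{B}^3(\delta)$: the per-coordinate contraction factor is $(r_i-1)^2$, since $\cC(x)_i-x_i=(r_i-1)x_i$. I would show $(r-1)^2\le 1-\tfrac1\delta$ for all $r\in[\gamma_k,\beta_k]$ on each interval, which is equivalent to $r(2-r)\ge\tfrac1\delta$, i.e. $\tfrac1\delta\le\min_{r\in[\gamma_k,\beta_k]} r(2-r)$. The parabola $r\mapsto r(2-r)$ is concave with max at $r=1\in[\gamma_k,\beta_k]$, so its minimum on the interval is attained at an endpoint: $\min\{\gamma_k(2-\gamma_k),\,\beta_k(2-\beta_k)\}$. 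A short computation with $\gamma_k=\tfrac{2a_k}{a_k+a_{k+1}}$ and $\beta_k=\tfrac{2a_{k+1}}{a_k+a_{k+1}}=2-\gamma_k$ shows both endpoints give the same value $\gamma_k(2-\gamma_k)=\tfrac{4a_ka_{k+1}}{(a_k+a_{k+1})^2}$, so the per-interval contraction is exactly $1-\tfrac{4a_ka_{k+1}}{(a_k+a_{k+1})^2}$, and taking $\sup_k$ of $\tfrac{(a_k+a_{k+1})^2}{4a_ka_{k+1}}$ gives the stated $\delta$. Summing over coordinates, $\twonorm{\cC(x)-x}^2=\sum(r_i-1)^2x_i^2\le(1-\tfrac1\delta)\twonorm{x}^2$.

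The only mild subtlety — the main thing to get right rather than the main obstacle — is the boundary behavior: when $t=\tfrac{a_k+a_{k+1}}{2}$ exactly the nearest point is non-unique, but the $\arg\min$ picks one of the two, and since the bounds above hold for \emph{both} choices the argument is unaffected; likewise $t$ can be an endpoint $a_k$ shared by two intervals, with $a(t)=a_k$ and ratio $1$, which is trivially in $[\gamma,\beta]$ and contributes $0$ to the contraction. The hypotheses $\inf a_k=0$, $\sup a_k=\infty$ guarantee every nonzero $t$ lands in some interval $[a_k,a_{k+1}]$, so the per-coordinate analysis is exhaustive. Everything else is the routine coordinate-wise summation sketched above.
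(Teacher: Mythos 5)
Your proof is correct and follows essentially the same path as the paper's: work coordinate-wise, observe the ratio $\cC(x)_i/x_i$ lies in $[\gamma_k,\beta_k]$ on each interval (the paper states the resulting three inequalities directly where you spell out the derivation), and for $\mathbb{B}^3$ maximize the per-coordinate contraction over each interval — the paper does this directly as a function of $t$ (showing the $\min$ of an increasing and a decreasing function peaks at the midpoint), whereas you reparameterize by $r=a(t)/t$ and analyze the parabola $r(2-r)$ on $[\gamma_k,\beta_k]$; both give identical conclusions. Your handling of the ties at midpoints/endpoints is a nice explicit touch the paper omits, but the substance is the same.
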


In the special case of exponential rounding $a_k=b^k$ with some base $b>1$, we get
$$\alpha = \(\frac{2}{b+1}\)^2, \quad \beta = \frac{2b}{b+1}, \quad \gamma = \frac{2}{b+1}, \quad \delta = \frac{(b+1)^2}{4b}.$$

\begin{remark}
Plugging these parameters into the iteration complexities of Table~\ref{table:iter_complexity}, we find that the class $\B^3$ gives the best iteration complexity as $\tfrac{\beta^2}{\alpha} = b^2 > \tfrac{\beta}{\gamma} = b > \delta = \tfrac{(b+1)^2}{4b}$.
\end{remark}

\item[(g)] {\bf  Natural compression} operator $\cC_{nat}$ of \citet{Cnat} is the special case of general unbiased rounding operator (\ref{ex:gu-rounding}) when $b=2$. So, $$\cC_{nat}\in\U\left(\frac{9}{8}\right).$$

\item[(h)] For $b>1$, define {\bf general exponential dithering} operator with respect to $l_p$-norm and with $s$ exponential levels $0<b^{1-s}<b^{2-s}<\dots<b^{-1}<1$ via
\begin{equation}\label{ex:ge-dithering}
\cC(x) \eqdef \|x\|_p \times \sign(x) \times \xi\(\frac{|x_i|}{\|x\|_p}\),
\end{equation}
where the random variable $\xi(t)$ for $t\in[b^{-u-1},b^{-u}]$ is set to either $b^{-u-1}$ or $b^{-u}$ with probabilities proportional to $b^{-u}-t$ and $t-b^{-u-1}$, respectively.

\begin{lemma}\label{lem-ex:ge-dithering}
General exponential dithering operator (\ref{ex:ge-dithering}) belongs to $\U(\zeta_b)$, where, letting $r = \min(p,2)$,
\begin{equation}\label{ge-dithering-zeta}
\zeta_b = \frac{1}{4}\(b+\frac{1}{b}+2\) + d^{\frac{1}{r}}b^{1-s}\min(1, d^{\frac{1}{r}}b^{1-s}).
\end{equation}
\end{lemma}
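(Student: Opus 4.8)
The plan is to reduce the analysis of general exponential dithering to two pieces: the standard-dithering estimate on each coordinate (already understood via the unbiased-rounding argument of Lemma~\ref{lem-ex:gu-rounding}) and a norm-comparison correction term coming from the mismatch between the $\ell_p$-norm used for normalization and the $\ell_2$-norm in which the second moment is measured. First I would verify unbiasedness: since $\xi(t)$ has mean $t$ by construction of the two-point probabilities on $[b^{-u-1},b^{-u}]$, we get $\Exp{\xi(|x_i|/\|x\|_p)} = |x_i|/\|x\|_p$, hence $\Exp{\cC(x)} = x$ coordinatewise. The one subtlety here is the ``saturation'' level: if $|x_i|/\|x\|_p < b^{1-s}$ (below the smallest nonzero level), the construction rounds randomly between $0$ and $b^{1-s}$, and this is still unbiased.

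Next I would bound $\Exp{\|\cC(x)\|^2}$. Writing $t_i = |x_i|/\|x\|_p$, we have $\Exp{\|\cC(x)\|^2} = \|x\|_p^2 \sum_{i=1}^d \Exp{\xi(t_i)^2}$. For each coordinate, if $t_i$ lies in a genuine level interval $[b^{-u-1},b^{-u}]$, the two-point distribution gives $\Exp{\xi(t_i)^2} = t_i\cdot(b^{-u-1}+b^{-u}) - b^{-u-1}b^{-u} \le \tfrac14(b+\tfrac1b+2)\, t_i^2$, exactly as in the exponential case of Lemma~\ref{lem-ex:gu-rounding}. If instead $t_i < b^{1-s}$, the variance is controlled by $t_i\, b^{1-s}$, contributing an additive term rather than a multiplicative one. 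Summing: the multiplicative part yields $\tfrac14(b+\tfrac1b+2)\sum_i t_i^2 = \tfrac14(b+\tfrac1b+2)\,\|x\|_2^2/\|x\|_p^2$, and the additive part is bounded by $b^{1-s}\sum_i t_i \le b^{1-s}\cdot\min\!\big(1,\ d^{1-1/r}\big)\cdot(\text{something})$ after using $\|x\|_1 \le d^{1-1/p}\|x\|_p$ when $p\ge 1$ (and $\|x\|_1\le\|x\|_p$-type bounds when $p<1$ is not relevant here). Multiplying back by $\|x\|_p^2$ and invoking $\|x\|_p \le \|x\|_r = \|x\|_{\min(p,2)}$ together with $\|x\|_2 \le d^{1/r - 1/2}\|x\|_p$ gives the stated $\zeta_b = \tfrac14(b+\tfrac1b+2) + d^{1/r}b^{1-s}\min(1, d^{1/r}b^{1-s})$.

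The main obstacle is the careful bookkeeping of the dimension factors in the additive (saturation) term: one must decide whether to bound $\xi(t)^2$ on the sub-$b^{1-s}$ range by $b^{1-s} t$ (giving an $\|x\|_1$ term) or by $b^{2(1-s)}$ per saturated coordinate (giving a $d\,b^{2(1-s)}$ term), and the $\min(1, d^{1/r}b^{1-s})$ inside \eqref{ge-dithering-zeta} reflects taking whichever of these two bounds is smaller after renormalizing by $\|x\|_p^2/\|x\|_2^2$ and applying the norm inequalities $\|x\|_p \ge \|x\|_2 \ge d^{-(1/r-1/2)}\|x\|_p$. Getting the exponents on $d$ to land on $1/r$ (with $r=\min(p,2)$, so that the bound is dimension-free when $p\ge 2$ apart from the exponentially small $b^{1-s}$ factor) requires invoking $\|x\|_q \le d^{1/q - 1/q'}\|x\|_{q'}$ in the right direction for each of $q=1,p$ versus $q'=2$; I would do this comparison once cleanly at the end rather than carrying $\|x\|_p/\|x\|_2$ ratios through the whole computation. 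Everything else — unbiasedness, the two-point second-moment identity, summation over coordinates — is routine.
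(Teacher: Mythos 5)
Your proposal follows essentially the same route as the paper: verify unbiasedness coordinate-wise via the mean of the two-point rounding, decompose $\Exp{\xi(t_i)^2}$ into the multiplicative $\tfrac14\left(b+\nicefrac1b+2\right)t_i^2$ term on genuine levels plus the additive $t_i\,b^{1-s}$ saturation term, sum over coordinates, bound the saturation contribution by $\min\left(\|x\|_1\|x\|_p\,b^{1-s},\ d\,\|x\|_p^2\,b^{2-2s}\right)$, and convert to $\|x\|_2^2$ via Cauchy--Schwarz and H\"older. The exponent bookkeeping you flag as the main obstacle is exactly the step the paper does, and your plan correctly identifies the $\min$ as a choice between the two saturation bounds and $r=\min(p,2)$ as coming from the $\|x\|_p$-to-$\|x\|_2$ comparison, so the proposal is sound.
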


\item[(i)] {\bf  Natural dithering} introduced by \citet{Cnat} without norm compression is the spacial case of general exponential dithering (\ref{ex:ge-dithering}) when $b=2$.

\item[(j)] \new{{\bf Ternary quantization} of \citet{terngrad} is the extreme case of general exponential dithering \eqref{ex:ge-dithering} with $s=1$ levels and $b=1$.}

\item[(k)] {\bf Top-$k$ combined with exponential dithering.} Let $\cC_{top}$ be the Top-$k$ sparsification operator (\ref{ex:top-sparse}) and $\cC_{dith}$ be general exponential dithering operator (\ref{ex:ge-dithering}) with some base $b>1$ and parameter $\zeta_b$ from (\ref{ge-dithering-zeta}). Define a new compression operator as the composition of these two:
\begin{equation}\label{ex:top-gendith}
\cC(x) \eqdef \cC_{dith}\(\cC_{top}(x)\).
\end{equation}

\begin{lemma}\label{lem-ex:top-gendith}
The composition operator \eqref{ex:top-gendith} of Top-$k$ sparsification and exponential dithering with base $b$ belongs to $\B^1(\tfrac{k}{d}, \zeta_b)$, $\B^2(\tfrac{k}{d}, \zeta_b)$, $\B^3(\tfrac{d}{k}\zeta_b)$, where $\zeta_b$ is as in \eqref{ge-dithering-zeta}.
\end{lemma}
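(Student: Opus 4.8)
The plan is to exploit the two–stage structure $\cC = \cC_{dith}\circ\cC_{top}$ together with the fact that $\cC_{top}$ is \emph{deterministic}, so that all randomness lives in the unbiased dithering layer. Fix $x$ and set $y \eqdef \cC_{top}(x)$, a fixed $k$-sparse vector; from Lemma~\ref{lem-ex:top-sparse} (equivalently, directly from \eqref{ex:top-sparse} and the observation that the mean of the $k$ largest of the $x_i^2$ is at least their overall mean) I record $\langle y,x\rangle = \twonorm{y}^2$ and $\tfrac{k}{d}\twonorm{x}^2 \le \twonorm{y}^2 \le \twonorm{x}^2$. Since $\cC_{dith}\in\U(\zeta_b)$ by Lemma~\ref{lem-ex:ge-dithering}, applying its unbiasedness and second-moment bound \emph{at the fixed point} $y$ gives $\Exp{\cC_{dith}(y)} = y$ and $\Exp{\twonorm{\cC_{dith}(y)}^2}\le\zeta_b\twonorm{y}^2$. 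Taking expectations in $\cC(x) = \cC_{dith}(y)$, for which nothing else is random, yields the two driving relations $\Exp{\cC(x)} = \cC_{top}(x) = y$ and $\Exp{\twonorm{\cC(x)}^2} \le \zeta_b\twonorm{y}^2 = \zeta_b\langle\Exp{\cC(x)},x\rangle$.

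Everything then follows by substituting these into the three definitions. For $\B^2(\tfrac{k}{d},\zeta_b)$: the right-hand side $\langle\Exp{\cC(x)},x\rangle$ equals $\twonorm{y}^2$, which dominates $\tfrac{k}{d}\twonorm{x}^2$ and also $\tfrac1{\zeta_b}\Exp{\twonorm{\cC(x)}^2}$ by the second driving relation, so the required maximum is bounded. For $\B^1(\tfrac{k}{d},\zeta_b)$: the upper bound $\Exp{\twonorm{\cC(x)}^2}\le\zeta_b\langle\Exp{\cC(x)},x\rangle$ is exactly that relation, and the lower bound comes from Jensen's inequality, $\Exp{\twonorm{\cC(x)}^2}\ge\twonorm{\Exp{\cC(x)}}^2 = \twonorm{y}^2\ge\tfrac{k}{d}\twonorm{x}^2$ — note this direct argument delivers the constant $\tfrac{k}{d}$, rather than the weaker $(\tfrac{k}{d})^2$ one gets from the generic $\B^2\Rightarrow\B^1$ reduction in Theorem~\ref{thm:compression_properties}.

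For $\B^3$ I would invoke Theorem~\ref{thm:compression_properties}: from $\cC\in\B^2(\tfrac{k}{d},\zeta_b)$ its $\B^2\!\to\!\B^3$ reduction gives $\tfrac1{\zeta_b}\cC\in\B^3\big(\tfrac{\zeta_b}{k/d}\big)=\B^3\big(\tfrac{d\zeta_b}{k}\big)$. Equivalently one verifies this in a line: expand $\Exp{\twonorm{\tfrac1{\zeta_b}\cC(x)-x}^2} = \tfrac1{\zeta_b^2}\Exp{\twonorm{\cC(x)}^2} - \tfrac2{\zeta_b}\langle\Exp{\cC(x)},x\rangle + \twonorm{x}^2 \le \twonorm{x}^2 - \tfrac1{\zeta_b}\twonorm{y}^2 \le \big(1-\tfrac{k}{d\zeta_b}\big)\twonorm{x}^2$, using $\Exp{\twonorm{\cC(x)}^2}\le\zeta_b\twonorm{y}^2$ and $\langle\Exp{\cC(x)},x\rangle=\twonorm{y}^2\ge\tfrac{k}{d}\twonorm{x}^2$.

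I do not expect a genuine obstacle: the only substantive input is the $\U(\zeta_b)$ bound for general exponential dithering, which is Lemma~\ref{lem-ex:ge-dithering} and already absorbs the low-level correction term $d^{1/r}b^{1-s}$ into $\zeta_b$. The one point requiring care is bookkeeping: as is standard for $\B^2$-type compressors with $\beta>1$, the $\B^3(\tfrac{d\zeta_b}{k})$ statement pertains to the rescaled operator $\tfrac1{\zeta_b}\cC$, so the factor $1/\zeta_b$ must be tracked consistently (for Top-$k$ alone this is invisible, since there $\beta=1$). A secondary small check is the pair of identities for $\cC_{top}$ used above, namely that it is the orthogonal coordinate projection onto the $k$ largest-magnitude entries, which is immediate from \eqref{ex:top-sparse}.
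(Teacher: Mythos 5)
Your proposal is correct and follows essentially the same route as the paper: use unbiasedness of $\cC_{dith}$ to get $\Exp{\cC(x)}=\cC_{top}(x)$, hence $\langle\Exp{\cC(x)},x\rangle=\twonorm{\cC_{top}(x)}^2$, then feed the $\U(\zeta_b)$ bound into the three definitions and invoke the $\B^2\!\to\!\B^3$ reduction. The one place you diverge is the lower bound $\alpha=\tfrac{k}{d}$: the paper reaches $\Exp{\twonorm{\cC(x)}^2}\ge\twonorm{\cC_{top}(x)}^2$ by citing the rounding second-moment formulas \eqref{apx:eq-it-10}--\eqref{apx:eq-it-11}, while you get the same inequality from Jensen applied to the unbiased dithering layer, which is cleaner and needs no auxiliary computation. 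You are also right, and more explicit than the paper, that the $\B^3(\tfrac{d}{k}\zeta_b)$ membership is for the rescaled operator $\tfrac{1}{\zeta_b}\cC$ (consistent with Theorem~\ref{thm:compression_properties}), and your one-line direct verification of this is correct.
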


\end{itemize}

\section{Gradient Descent with Biased Compression}\label{sec:analysis_of_biased_GD}

As we discussed in previous section, compression operators can have different equivalent parametrizations. Next, we aim to investigate the influence of those parametrizations on the theoretical convergence rate of an algorithm employing compression operators. To achieve clearer understanding of the interaction of compressor parametrization and convergence rate, we first consider the single node, unconstrained optimization problem 
$$ \min_{x\in \R^d} f(x),$$
where $f:\R^d\to \R$ is $L$-smooth and $\mu$-strongly convex. We study the method
\begin{equation} \label{eq:CGD} \tag{CGD} \boxed{x^{k+1} =x^k -\stepsize \cC^k(\nabla f(x^k))}\, ,\end{equation}
where $\cC^k:\R^d\to \R^d$ are (potentially biased) compression operators belonging to one of the classes $\mathbb{B}^1$, $\mathbb{B}^2$ and $\mathbb{B}^3$ studied in the previous sections, and $\eta>0$ is a stepsize. We refer to this method as  CGD: Compressed Gradient Descent.
 
\subsection{Complexity theory} We now establish three theorems, performing complexity analysis for each of the three classes $\mathbb{B}^1$, $\mathbb{B}^2$ and $\mathbb{B}^3$ individually. Let $\cE_k\eqdef \Exp{f(x^k)} - f(x^\star)$, with $\cE_0 = f(x^0) - f(x^\star)$.
  
\begin{theorem}\label{thm:main-I} Let $\cC\in \mathbb{B}^1(\alpha,\beta)$. Then as long as $0\leq \stepsize \leq \frac{2}{\beta L}$, we have
$\cE_{k} \leq  \left(1- \frac{\alpha}{\beta} \stepsize \mu ( 2 - \stepsize \beta  L ) \right) \cE_{k-1}.$
If we choose $\stepsize = \frac{1}{\beta L}$, then
$$\cE_k\leq  \left(1- \frac{\alpha}{\beta^2}\frac{\mu}{ L}  \right)^k \cE_0.$$
\end{theorem}

\begin{theorem}\label{thm:main-II}  Let $\cC\in \mathbb{B}^2(\gamma,\beta)$. Then as long as $0\leq \stepsize \leq \frac{2}{\beta L}$, we have
$\cE_{k} \leq  \left(1-  \gamma \stepsize \left(2 -\stepsize \beta)  L \right) \right) \cE_{k-1}.$
If we choose $\stepsize = \frac{1}{\beta L}$, then
$$\cE_k \leq  \left(1- \frac{\gamma}{\beta }\frac{\mu}{ L}  \right)^k \cE_0.$$
\end{theorem}

\begin{theorem}\label{thm:main-III} Let $ \cC\in \mathbb{B}^3(\delta)$. Then as long as  $0\leq \stepsize \leq \frac{1}{L}$, we have
$\cE_{k} \leq  \left(1- \frac{1}{\delta} \stepsize \mu \right)^k \cE_{k-1}.$ If we choose $\eta=\frac{1}{L}$, then $$\cE_k \leq  \left(1- \frac{1}{\delta} \frac{\mu}{L}  \right)^k \cE_0.$$
\end{theorem}

The iteration complexity for these results can be found in Table~\ref{table:iter_complexity}. Note that the identity compressor $\cC(x)\equiv x$ belongs to $ \mathbb{B}^1(1,1), \mathbb{B}^2(1,1)$, and  $\mathbb{B}^3(1)$, hence all these result exactly recover the rate of GD. In the first two theorems, scaling the compressor by a positive scalar $\lambda >0$ does not influence the rate (see Theorem~\ref{thm:compression_properties}).

\new{One can note that it is possible to adapt  the results of Theorems \ref{thm:main-I}-\ref{thm:main-III} to the case with the time-varying parameters of compression operators~\citep{lin2017deep,agarwal2020accordion}. In the case of Theorem~\ref{thm:main-III} and $\mathbb{B}^3$ compressors, it is sufficient to introduce a sequence $\{\delta^i\}$ responsible for the changes of the parameter $\delta$ and obtain the following convergence: $\cE_k \leq  \left[\prod_{i=0}^{k-1}\left(1- \frac{1}{\delta^i} \frac{\mu}{L}  \right)\right] \cE_0$. In the case of Theorems~\ref{thm:main-I} and \ref{thm:main-II}, the results are similar, but it is necessary to add in the algorithm \eqref{eq:CGD} the possibility of using time-varying steps $\eta^k = \frac{1}{\beta^k L}$.}

\subsection{$\mathbb{B}^3$ and $\mathbb{B}^2$ are better than $\mathbb{B}^1$} In light of the results above, we make the following observation. If $\cC\in \mathbb{B}^1(\alpha,\beta)$, then by Theorem~\ref{thm:compression_properties}, $\frac{1}{\beta}\cC\in \mathbb{B}^3(\frac{\beta^2}{\alpha})$. Applying Theorem~\ref{thm:main-III}, we get the  bound $\cO \left(\frac{\beta^2}{\alpha}\frac{L}{\mu}\log \frac{1}{\varepsilon} \right) $. This is the same result as that obtained by Theorem~\ref{thm:main-I}. On the other hand, if $ \cC\in \mathbb{B}^3(\delta)$, then by Theorem~\ref{thm:compression_properties}, $ \cC\in \mathbb{B}^1(\frac{1}{4\delta^2},2)$. Applying Theorem~\ref{thm:main-I}, we get the  bound $\cO \left(16 \delta^2 \frac{L}{\mu}\log \frac{1}{\varepsilon} \right) $.  This is a worse result than what Theorem~\ref{thm:main-III} offers by a factor of $16 \delta$.

\new{Similarly, if $\cC\in \mathbb{B}^1(\alpha,\beta)$, then by Theorem~\ref{thm:compression_properties}, $\cC\in \mathbb{B}^2(\frac{\alpha}{\beta}, \beta)$. Applying Theorem~\ref{thm:main-II}, we get the  bound $\cO \left(\frac{\beta^2}{\alpha}\frac{L}{\mu}\log \frac{1}{\varepsilon} \right) $. This is the same result as that obtained by Theorem~\ref{thm:main-I}. On the other hand, if $ \cC\in \mathbb{B}^2(\gamma,\beta)$, then by Theorem~\ref{thm:compression_properties}, $ \cC\in \mathbb{B}^1(\gamma^2,\beta)$. Applying Theorem~\ref{thm:main-I}, we get the  bound $\cO \left(\frac{\beta^2}{\gamma^2} \frac{L}{\mu}\log \frac{1}{\varepsilon} \right) $.  This is a worse result than what Theorem~\ref{thm:main-II} offers by a factor of $\frac{\beta}{\gamma}\ge1$.}

Hence, while $\mathbb{B}^1, \mathbb{B}^2$ and $\mathbb{B}^3$ describe the same classes of compressors, for the purposes of CGD it is better to parameterize them as members of $\mathbb{B}^2$ or $\mathbb{B}^3$.

\section{Superiority of Biased Compressors Under Statistical Assumptions} \label{sec:stat}

Here we highlight some advantages of biased compressors by comparing them with their unbiased cousins. We evaluate compressors by their average capacity of preserving the gradient information or, in other words, by expected approximation error they produce. In the sequel, we assume that gradients have i.i.d.\ coordinates drawn from some distribution.

\subsection{Top-$k$ vs Rand-$k$}
We now compare two  sparsification operators: Rand-$k$  \eqref{ex:ur-sparse} which is unbiased and which we denote as $\cC_{rnd}^k$, and Top-$k$  \eqref{ex:top-sparse} which is biased and which we denote as $\cC_{top}^k$. We define variance of the approximation error of $x$ via
$$\omega_{rnd}^k(x) \eqdef \Exp{\norm{\frac{k}{d}\cC_{rnd}^k(x)-x}^2} = \left(1-\frac{k}{d}\right)\twonorm{x}^2 $$ and
$$\omega_{top}^k(x) \eqdef   \norm{\cC_{top}^k(x)-x}^2 = \sum_{i=1}^{d-k} x_{(i)}^2$$ and the energy ``saving''  via
$$s_{rnd}^k(x) \eqdef \twonorm{x}^2 - \omega_{rnd}^k(x) = \Exp{\norm{\frac{k}{d}\cC_{rnd}^k(x)}^2} = \frac{k}{d}\twonorm{x}^2$$ and
$$s_{top}^k(x) \eqdef \twonorm{x}^2 - \omega_{top}^k(x) = \norm{\cC_{top}^k(x)}^2 = \sum_{i=d-k+1}^d x_{(i)}^2.$$ 

Expectations in these expressions are taken with respect to the randomization of the compression operator rather than input vector $x$.
Clearly, there exists $x$ for which these two operators incur identical variance, e.g. $x_1=\dots=x_d$.  However,  in practice we apply compression to gradients $x$ which evolve in time, and which may have heterogeneous components. In such situations, $\omega_{top}^k(x)$ could be much smaller than $\omega_{rnd}^k(x)$. This motivates a {\em quantitative study} of the {\em average  case} behavior in which we make an {\em  assumption} on the distribution of the coordinates of the compressed vector.

\paragraph{Uniform and exponential distribution.}
We first consider the case of uniform and exponentially distributed entries, and quantify the difference.
\begin{lemma}\label{lem:stat-top-random}
Assume the coordinates of $x\in \R^d$ are i.i.d.\ \\ (a) If they follow uniform distribution over $[0,1]$, then
\begin{equation*}
\frac{ \Exp{\omega^k_{top}} }{ \Exp{\omega^k_{rnd}} } = \(1-\frac{k}{d+1}\)\(1-\frac{k}{d+2}\),\qquad \frac{ \Exp{s^1_{top}} }{ \Exp{s^1_{rnd}} } = \frac{3d}{d+2}.
\end{equation*}
(b) If they follow standard exponential distribution, then
$$
 \frac{ \Exp{s^1_{top}} }{ \Exp{s^1_{rnd}} } = \frac{1}{2}\sum\limits_{i=1}^d \frac{1}{i^2} + \frac{1}{2}\left( \sum\limits_{i=1}^d \frac{1}{i}\right)^2 = \cO(\log^2 d).
$$
\end{lemma}

\renewcommand{\arraystretch}{1.8} 
\renewcommand{\tabcolsep}{0.1cm}   
\begin{table}[th]
\centering
\begin{tabular}{|c|c|c|c|c|c|c|c|c|}
\hline
\multicolumn{1}{@{\setlength{\arrayrulewidth}{1pt}\vline}c@{\setlength{\arrayrulewidth}{0.5pt}\vline}}{} & \multicolumn{4}{@{\setlength{\arrayrulewidth}{0.5pt}\vline}c@{\setlength{\arrayrulewidth}{0.5pt}\vline}}{\small Top-3} & \multicolumn{4}{@{\setlength{\arrayrulewidth}{0.5pt}\vline}c@{\setlength{\arrayrulewidth}{1pt}\vline}}{\small Top-5} \\ \hline
\small$d$ & $10^2$  & $10^3$  & $10^4$  & $10^5$ & $10^2$  & $10^3$  & $10^4$ & $10^5$  \\ \hline
\small$\mathcal{N}(0;1)$ & \multicolumn{4}{|c|}{$3 \cdot (\sigma^2 + \mu^2) = 3$}  & \multicolumn{4}{|c|}{$5 \cdot (\sigma^2 + \mu^2) = 5$}\\ \hline
$\Exp{s_{top}^k(x)}$ & 18.65  & 31.10  & 43.98  & 57.08  &  27.14 & 47.70  & 69.07  & 90.85\\ \hline
\small$\mathcal{N}(2;1)$ & \multicolumn{4}{|c|}{$3 \cdot (\sigma^2 + \mu^2) = 15$}  & \multicolumn{4}{|c|}{$5 \cdot (\sigma^2 + \mu^2) = 25$}\\ \hline
$\Exp{s_{top}^k(x)}$   & 53.45  &  75.27 &  95.81 & 115.53  & 81.60  & 118.56  & 153.13  & 186.22  \\ \hline
\end{tabular}
\caption{Information savings of greedy and random sparsifiers for $k=3$ and $k=5$.}
\label{normtable_main}
\end{table}

\paragraph{Empirical comparison.} Now we compare these two sparsification methods on an empirical bases and show the significant advantage of greedy sparsifier against random sparsifier. We assume that coordinates of to-be-compressed vector are i.i.d.\ Gaussian random variables.

\begin{figure}[t]
\vskip 0.2in
\begin{center}
\centerline{\includegraphics[width=0.5\columnwidth]{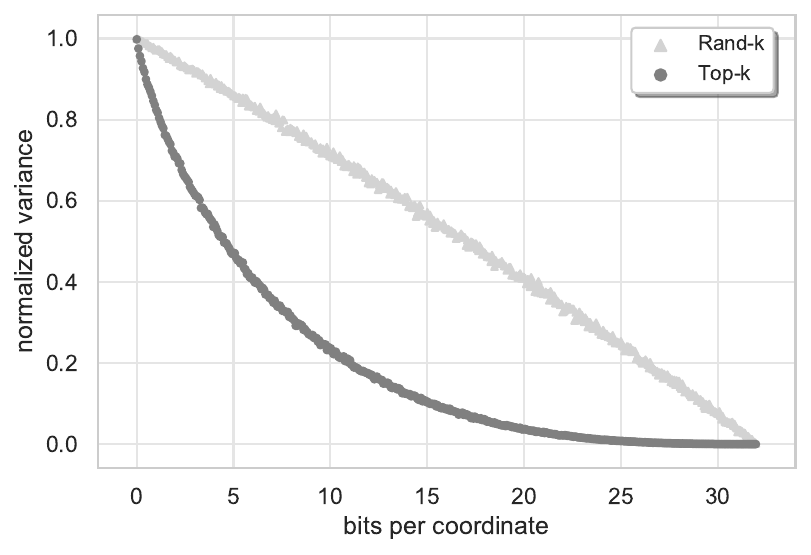}}
\caption{The comparison of Top-$k$ and Rand-$k$ sparsifiers with respect to normalized variance and the number of encoding bits used for each coordinate on average. Each point/marker represents a single $d=10^4$ dimensional vector drawn form Gaussian distribution and then compressed by the specified operator. Each curve was obtained by varying the free parameter $k\in\{1,2,\dots,d\}$.  Plots for different $d$ look very similar. Notice that, for random sparsification the normalized variance is perfectly linear with respect to the number of bit per coordinate. Letting $b$ be the total number of bits to encode the compressed vector (say in {\em binary32} system), the normalized variance produced by random sparsifier is almost $1-\tfrac{\nicefrac{b}{d}}{32}$. However, greedy sparsifier achieves exponentially lower variance $\approx 0.86^{\nicefrac{b}{d}}$ utilizing the same amount of bits.}
\label{fig:vb_top-random}
\end{center}
\vskip -0.2in
\end{figure}

First, we compare the savings $s_{top}^k$ and $s_{rnd}^k$ of these compressions. For random sparsification, we have $$\Exp{s_{rnd}^k(x) }= k \cdot (\sigma^2 + \mu^2),$$ where $\mu$ and $\sigma^2$ are the mean and variance of the Gaussian distribution. For computing $\Exp{s_{top}^k(x)}$, we use the probability density function of $k$-th order statistics (see \eqref{eq:arnold222} or (2.2.2) of \citep{arnold}). Table~\ref{normtable_main} shows that Top-$3$ and Top-$5$ sparsifiers ``save'' $3\times$--$40\times$ more information in expectation and the factor grows with the dimension.

Next we compare normalized variances $\frac{\omega_{top}^k(x)}{\twonorm{x}^2}$ and $\frac{\omega_{rnd}^k(x)}{\twonorm{x}^2}$ for randomly generated Gaussian vectors. In an attempt to give a dimension independent comparison, we compare them against the average number of encoding bits per coordinate, which is quite stable with respect to the dimension. Figure \ref{fig:vb_top-random} reveals the superiority of greedy sparsifier against the random one.

\paragraph{Practical distribution.} We obtained various gradient distributions via logistic regression (\textit{mushrooms} LIBSVM dataset) and least squares. We used the sklearn package and built Gaussian smoothing of the practical gradient density. The second moments, i.e.  energy ``saving'', were already calculated from it by formula for density function of $k$-order statistics, see Appendix \ref{sec:order_stat} or \citep{arnold}. We conclude experiments for Top-5 and Rand-5, see Figure \ref{exper2_main} for details.

\begin{figure}[t]
\centering
\subcaptionbox{\\$s_{rnd}^5 = 9 \cdot 10^5$,\\ $s_{top}^5 = 28 \cdot 10^5$}
  {\includegraphics[width=0.22\linewidth]{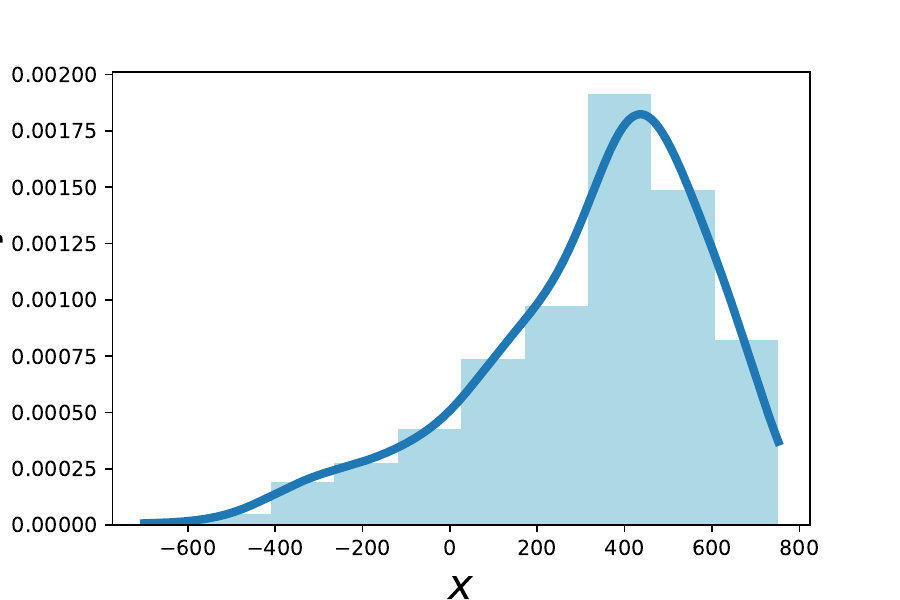}}
\subcaptionbox{\\$s_{rnd}^5 = 624$,\\ $s_{top}^5 = 1357$}
  {\includegraphics[width=0.22\linewidth]{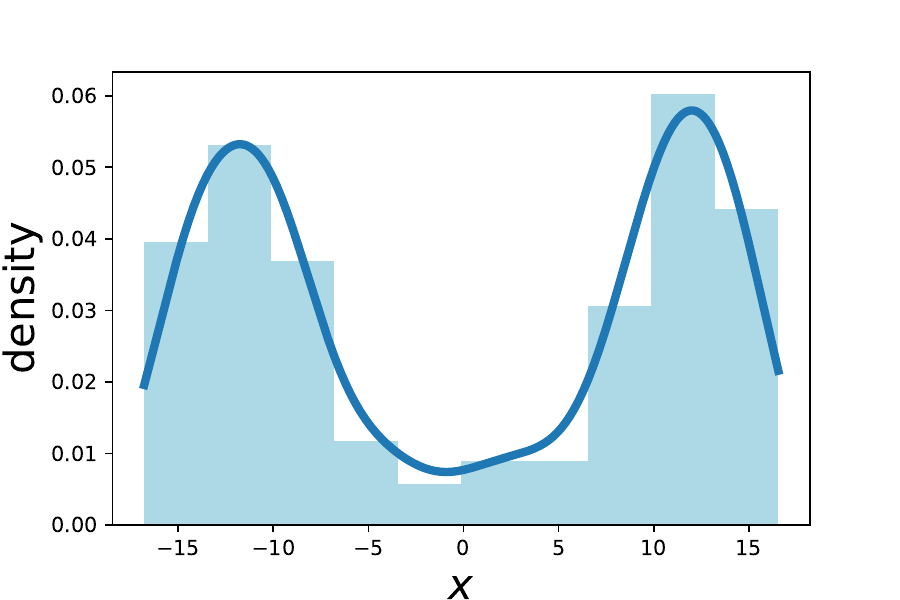}}
  \subcaptionbox{\\$s_{rnd}^5 = 11921$,\\ $s_{top}^5 = 23488$}
  {\includegraphics[width=0.22\linewidth]{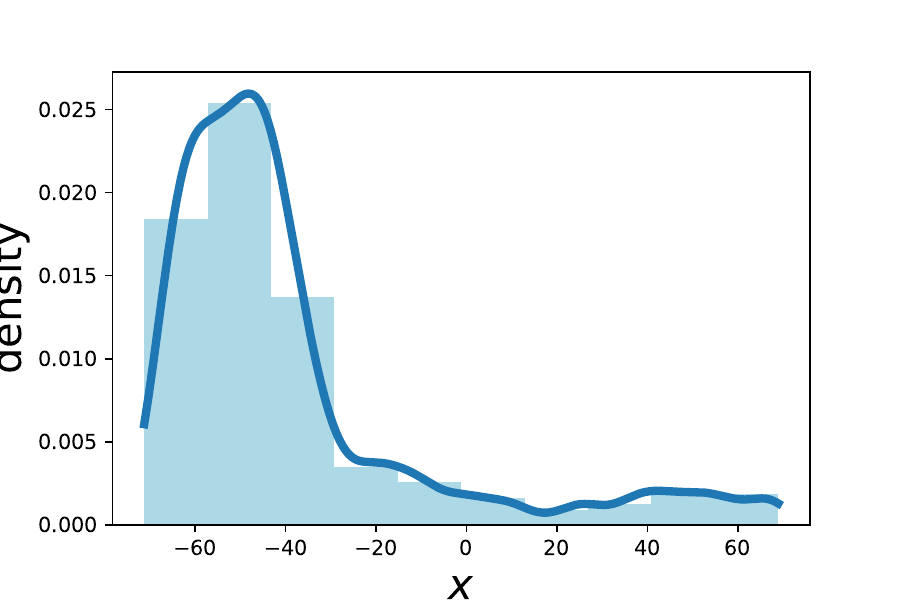}}
  \subcaptionbox{\\$s_{rnd}^5 = 2 \cdot 10^{-4}$,\\ $s_{top}^5 = 17 \cdot 10^{-4}$}
  {\includegraphics[width=0.22\linewidth]{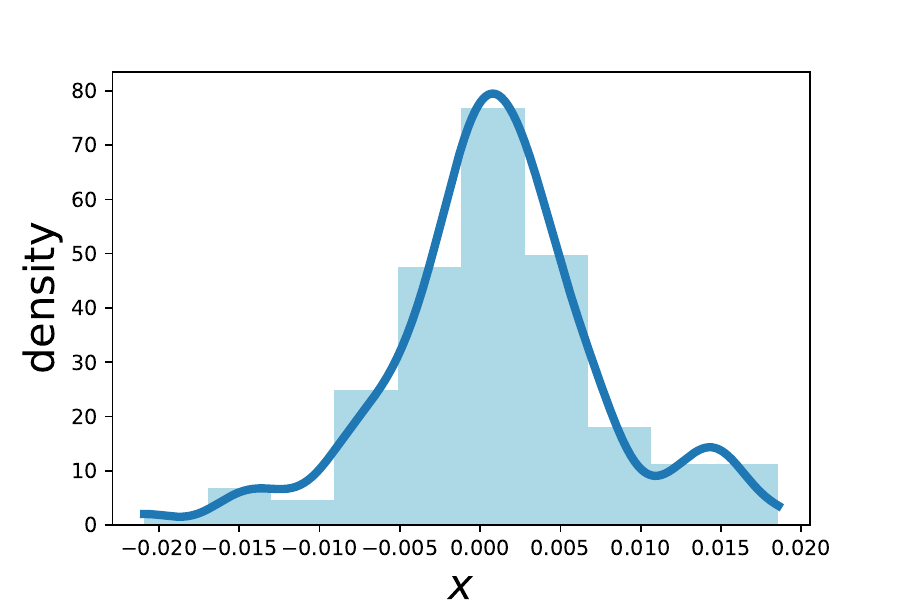}}
\caption{Calculations of the Rand-5 and Top-5 energy ``saving'' for practical gradient distributions ((a),(b),(c):  quadratic problem, (d):  logistic regression). The results of Top-5 are 3--5$\times$ better.}
\label{exper2_main}
\end{figure}

\subsection{New compressor: Top-$k$ combined with dithering}
In Section~\ref{sec:examples} we gave a new biased compression operator (see \eqref{ex:top-gendith}), where we combined Top-$k$ sparsification operator (see \eqref{ex:top-sparse}) with the general exponential dithering (see \eqref{ex:ge-dithering}). Consider the composition operator with  natural dithering, i.e., with base $b=2$. We showed that it belongs to $\B^1(\frac{k}{d},\frac{9}{8})$, $\B^2(\frac{k}{d},\frac{9}{8})$ and $\B^3(\frac{9d}{8k})$. Figure~\ref{fig:vb_comparison} empirically confirms that it attains the lowest compression parameter $\delta\ge1$  among all other known compressors (see \eqref{def:comp_III}). Furthermore, the iteration complexity $\cO\left(\delta\tfrac{L}{\mu}\log\tfrac{1}{\varepsilon}\right)$ of CGD for  $\cC\in \B^3(\delta)$ implies that it enjoys fastest convergence.

\begin{figure}[ht]
\centering
\includegraphics[width=0.5\columnwidth]{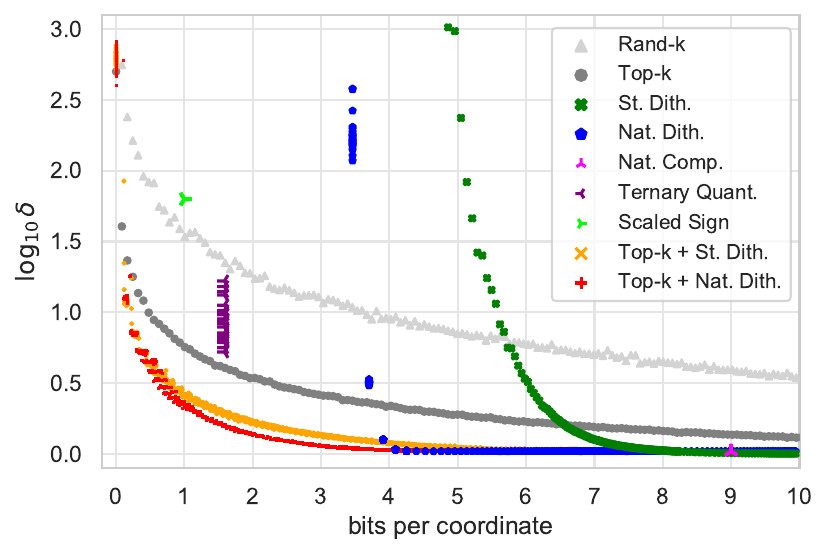}
\caption{\new{Comparison of various compressors (with and without free design parameters) with respect to the parameter $\delta\ge1$ in $\log_{10}-$scale and the number of encoding bits used for each coordinate on average. Each point/marker represents a single $d=10^4$ dimensional vector  $x$ drawn from Gaussian distribution and then compressed by the specified operator. Each curve was obtained by varying free parameters ($k\in\{1,2,\dots,d\}$ for sparsifiers, $s\ge1$ for ditherings) of the specified compressor. Parameter-free compressors, such as ternary quantization and natural compression, have fixed communication budgets which explains the vertical arrangements of the points.}}
\label{fig:vb_comparison}
\end{figure}

\section{Distributed Setting} \label{sec:distributed}

We now focus attention on a distributed setup with $n$ machines, each of which owns {\em non-iid} data defining one loss function $f_i$. Our goal is to minimize the average loss:
\begin{equation}
    \label{distr probl_main}
 \min \limits_{x \in \R^d} \left\{f(x) \eqdef \frac{1}{n} \sum\limits_{i=1}^n f_i(x)\right\}.
\end{equation}

\subsection{Distributed CGD with unbiased compressors}
Perhaps the most straightforward extension of CGD to the distributed setting is to consider the method
\begin{equation}\label{eq:DCGD} \tag{DCGD} x^{k+1} = x^k - \stepsize \frac{1}{n}\sum \limits_{i=1}^n \cC^k_i(\nabla f_i(x^k)).\end{equation}
Indeed, for $n=1$ this method reduces to CGD. For unbiased compressors belonging to $\mathbb{U}(\zeta)$, this method converges under suitable assumptions on the functions. For instance, if $f_i$ are $L$-smooth and $f$ is $\mu$-strongly convex, then as long as the stepsize is chosen appropriately, the method converges  to a $\cO\left(\frac{\eta D(\zeta-1)}{\mu n}\right)$ neighborhood of the (necessarily unique) solution $x^\star$ with the linear rate $$\cO\left(\left(\frac{L}{\mu}+\frac{L(\zeta-1) }{ \mu n}\right) \log \frac{1}{\epsilon}\right),$$ where $D\eqdef \frac{1}{n}\sum \limits_{i=1}^n \norm{\nabla f_i(x^\star)}^2$ \citep{sigma_k}. In particular, in the overparameterized setting when $D=0$, the method converges to the exact solution, and does so at the same rate as GD as long as $\zeta = \cO(n)$. These results hold even if a regularizer is considered, and a proximal step is added to DCGD. Moreover, as shown by \citet{DIANA} and \citet{DIANA-VR}, a variance reduction technique can be devised to remove the neighborhood convergence and replace it by convergence to $x^\star$, at the negligible additional cost of $\cO((\zeta-1)\log \frac{1}{\epsilon})$.

\subsection{Failure of DCGD with biased compressors}  However, as we now demonstrate by giving some counter-examples,  DCGD may fail if the compression operators are allowed to be biased. In the first example below, DCGD used with the Top-1 compressor diverges at an exponential rate.

\begin{example}\label{ex:counter-example}
Consider $n=d=3$ and define
$$f_1(x) = \lin{a,x}^2 + \frac{1}{4}\twonorm{x}^2, \qquad f_2(x) = \lin{b,x}^2 + \frac{1}{4}\twonorm{x}^2, \qquad f_3(x) = \lin{c,x}^2 + \frac{1}{4}\twonorm{x}^2,$$ where $a=(-3,2,2)$, $b=(2,-3,2)$ and $c=(2,2,-3)$.
Let the starting iterate be $x^0=(t,t,t)$, where $t > 0$. Then
$$
\nabla f_1(x^0) =\frac{t}{2} (-11, 9, 9), \qquad \nabla f_2(x^0) =\frac{t}{2} (9,-11, 9), \qquad \nabla f_3(x^0) = \frac{t}{2} (9,9,-11).$$ Using the Top-1 compressor, we get $\cC(\nabla f_1(x^0)) = \frac{t}{2}(-11, 0, 0)$, $\cC(\nabla f_2(x^0)) = \frac{t}{2}(0,-11, 0)$ and
$\cC(\nabla f_3(x^0)) = \frac{t}{2}(0,0,-11)$. The next iterate of DCGD is
$$ x^1 = x^0 - \stepsize \frac{1}{3} \sum_{i=1}^3 \cC(\nabla f_i(x^0)) =  \left(1+\frac{11 \eta}{6}\right)x^0. $$
Repeated application gives $$x^k = \left(1+\frac{11 \eta}{6}\right)^k x^0.$$  Since $\eta>0$, the entries of $x^k$ diverge exponentially fast to $+\infty$.
\end{example}

The above counter-example can be extended to the case of Top-$d_1$ when $d_1 < \ceil*{\frac{d}{2}}$.

\begin{example}\label{ex:counter-example-ext}
Fix the dimension $d\ge 3$ and let $n=\binom{d}{d_1}$ be the number of nodes, where $d_1 < \ceil*{\frac{d}{2}}$ and $d_2 = d-d_1 > d_1$. Choose positive numbers $b,\;c > 0$ such that
$$
-b d_1 + c d_2 = 1,\; b > c+1.
$$
One possible choice could be $b = d_2+\frac{d_2}{d_1},\; c = d_1+\frac{1}{d_2}+1$. Define vectors $a_j\in\R^d,\;j\in[n]$ via
$$
a_j = \sum_{i\in I_j} (-b) e_i + \sum_{i\in [d]\setminus I_j} c e_i,
$$
where sets $I_j\subset[d],\;j\in[n]$ are all possible $d_1$-subsets of $[d]$ enumerated in some way. Define
$$
f_j(x) = \lin{a_j,x}^2 + \frac{1}{2}\twonorm{x}^2, \quad j\in[n]
$$
and let the initial point be $x^0 = t e,\; t>0$, where $e=\sum_{i=1}^d e_i$ is the vector of all $1$s. Then
$$
\nabla f_j(x^0) = 2\lin{a_j, x^0} \cdot a_j + x^0 = 2t(-b d_1 + c d_2)\cdot a_j + te = t(2 a_j + e).
$$
Since $|2(-b)+1| > |2c+1|$, then using the Top-$d_1$ compressor, we get 
$$
\cC(\nabla f_j(x^0)) = -t(2b-1) \sum_{i\in I_j} e_i.
$$
Therefore, the next iterate of DCGD is
\begin{eqnarray*}
x^1
&=& x^0 - \stepsize \frac{1}{n} \sum_{j=1}^n \cC(\nabla f_j(x^0))
= x^0 + \frac{\stepsize t(2b-1)}{n} \sum_{j=1}^n\sum_{i\in I_j}e_i \\
&=& x^0 + \frac{\stepsize(2b-1)}{n} \binom{d}{d_1-1} x^0
= \(1+ \frac{\stepsize(2b-1)d_1}{d_2+1}\) x^0,
\end{eqnarray*}
which implies
$$x^k = \(1+ \frac{\stepsize(2b-1)d_1}{d_2+1}\)^k x^0.$$
Since $\eta>0$ and $b>1$, the entries of $x^k$ diverge exponentially fast to $+\infty$.
\end{example}

Finally, we present more general counter-example with different type of failure for DCGD when non-randomized compressors are used.

\begin{example}
Let $\cC\colon\R^d\to\R^d$ be {\em any deterministic} mapping for which there exist vectors $v_1,v_2,\dots,v_m\in\R^d$ such that
\begin{equation}\label{eq:vanisher}
\sum_{i=1}^m v_i \ne 0, \qquad\text{and}\qquad \sum_{i=1}^m \cC(v_i) = 0.
\end{equation}
Consider the distributed optimization problem \eqref{distr probl_main} with $n=m$ devices and with the following strongly convex loss functions
$$
f_i(x) = \<v_i, x\> + \frac{1}{2}\|x\|^2, \quad i\in[m].
$$
Then $\nabla f_i(x) = v_i + x$ and $\nabla f(x) = \frac{1}{n}\sum_{i=1}^n v_i + x$. Hence, the optimal point $x^* = - \frac{1}{n}\sum_{i=1}^n v_i \neq 0$. However, it can be easily checked that, with initialization $x_0=0$, we have
$$
x^1 = x^0 - \gamma\frac{1}{n}\sum_{i=1}^n \cC(\nabla f_i(x^0)) = x^0 - \gamma\frac{1}{n}\sum_{i=1}^n \cC(v_i + x^0) = x^0.
$$
Thus, when initialized at $x_0=0$, not only DCGD does not converge to the solution $x^*\neq 0$, it remains stuck at the same initial point for all iterates, namely $x^k = x^0 = 0$ for all $k\ge1$.

Condition \eqref{eq:vanisher} can be easily satisfied for specific biased compressors. For instance, Top-$1$ satisfies \eqref{eq:vanisher} with $v_1 = \left[\begin{smallmatrix} 1 \\ 4 \end{smallmatrix}\right]$, $v_2 = \left[\begin{smallmatrix} -1 \\ -2 \end{smallmatrix}\right]$, $v_3 = \left[\begin{smallmatrix} 1 \\ -2 \end{smallmatrix}\right]$.
\end{example}
 
The above examples suggests that one needs to devise a different approach to solving the distributed problem \eqref{distr probl_main} with biased compressors. We resolve this problem by employing a memory feedback mechanism.

\subsection{Error Feedback} We show that distributed version of Distributed SGD wtih Error-Feedback~\citep{karimireddy2019error}, displayed in Algorithm~\ref{alg}, is able to resolve the issue. Moreover, this algorithm allows for the computation of stochastic gradients. Each step starts with all machines $i$ in parallel computing a stochastic gradient $g_i^k$ of the form
\begin{equation}\label{stgr1_main}
    g_i^k = \nabla  f_i(x^k) + \xi_i^k,
\end{equation}
where $\nabla  f_i(x^k)$ is the true gradient, and $\xi_i^k$ is a stochastic error. Then, this is multiplied by a stepsize $\eta^k$ and added to the memory/error-feedback term $e_i^k$, and subsequently compressed. The compressed messages are communicated and aggregated. The difference of message we wanted to send and its compressed version becomes stored as $e_i^{k+1}$ for further correction in the next communication round. The output $\overline x^K$  is an ergodic average of the form
\begin{equation}
    \label{answer}
  \overline x^K \eqdef \frac{1}{W^K}\sum_{k=0}^K w^k x^k, \qquad W^K \eqdef \sum_{k=0}^K w^k.
\end{equation}

\begin{algorithm} [th]
    \caption{Distributed SGD with Biased Compression and Error Feedback}
    \label{alg}
    \begin{algorithmic}
\STATE 
\noindent {\bf Parameters:}  Compressors  $\cC_i^k \in \mathbb{B}^3(\delta)$; Stepsizes $\{\eta^k\}_{k\geq 0}$; Iteration count $K$  \\
\noindent {\bf Initialization:} Choose  $x^0\in \R^d$ and $e_i^0 = 0$ for all $i$
\FOR {$k=0,1, 2, \ldots, K$ } 
\item Server sends  $x^k$ to all $n$ machines
\item All machines in parallel perform these updates:
\begin{eqnarray}
    \label{comp_grad}
    & \tilde{g}_i^k = \cC^k_i(e_i^k+ \stepsize^k g_i^k) \\
    \label{error}
    & e_i^{k+1} = e_i^k + \stepsize^k g_i^k - \tilde{g}_i^k 
\end{eqnarray}
\item Each machine $i$ sends  $\tilde{g}_i^k$ to the server
\item Server performs aggregation:
\begin{eqnarray}
    \label{step}
    x^{k+1} = x^k -\frac{1}{n} \sum\limits_{i=1}^n \tilde{g}_i^k
\end{eqnarray}
\ENDFOR

\noindent {\bf Output:} Weighted average of the iterates: $\overline x^K$ \eqref{answer}
    \end{algorithmic}
\end{algorithm}

\subsection{Complexity theory} 

We assume the stochastic error $\xi_i^k$ in \eqref{stgr1_main}  satisfies the following condition.
\begin{assumption} Stochastic error  $\xi_i^k$ is unbiased, i.e. $\Exp{\xi_i^k} = 0$, and for some constants $B,C\geq 0$
\begin{equation}
    \label{stgr3_main}
    \Exp{\norm{\xi_i^k}^2} \leq B \norm{\nabla f_i(x^k)}^2 + C, \qquad \forall i\in[n], k\ge0.
\end{equation}
\end{assumption}

Note that this assumption is much weaker than the bounded variance assumption (i.e., $\Exp{\norm{\xi_i^k}^2} \leq  C$) and bounded gradient assumption (i.e., $\Exp{\norm{g_i^k}^2} \leq  C$).
We can now state the main result of this section. To the best of our knowledge, {\em  this was an open problem}: we are not aware of any convergence results for distributed optimization that tolerate general classes of {\em biased} compression operators and have reasonable assumptions on the stochastic gradient.
\begin{theorem}[Convergence guarantees for Algorithm \ref{alg}]
\label{thm:sparsified}
Let $\{x^k\}_{k \geq 0}$ denote the iterates of Algorithm~\ref{alg} for solving problem \eqref{eq:probR}, where each $f_i$ is $L$-smooth and $\mu$-strongly convex. Let $x^\star$ be the minimizer of $f$ and let $f^\star\eqdef f(x^\star)$ and
$$
D\eqdef \frac{1}{n}\sum_{i=1}^n \norm{\nabla f_i(x^\star)}^2.
$$
Assume the compression operator used by all nodes is in $\mathbb{B}^3(\delta)$. Then we have the following convergence rates under three different stepsize and iterate weighting regimes:

\begin{itemize}

\item[(i)] {\bf $\cO(\frac{1}{k})$ stepsizes \& $\cO(k)$ weights.} Let, for all $k\ge0$, the stepsizes and weights be set as $\stepsize^k = \frac{4}{\mu(\kappa + k)}$ and  $w^k = \kappa +k$, respectively, where $\kappa = \frac{56(2\delta + B)L}{\mu}$. Then
\begin{eqnarray*}
   \boxed{\Exp{f(\bar x^K)}-f^\star = 
   \cO \left(\frac{A_1}{K^2}  +  \frac{A_2}{ K} \right)}\,,
\end{eqnarray*}
where $A_1 \eqdef \frac{L^2(2\delta+B)^2}{\mu}\norm{x^0-x^\star}^2 $ and $A_2\eqdef  \frac{C\left(1+\nicefrac{1}{n}\right) + D \left(\nicefrac{2B}{n} +3\delta\right)}{\mu}$.

\item[(ii)] {\bf  $\cO(1)$ stepsizes \& $\cO(e^{-k})$ weights.} Let, for all $k\ge0$, the stepsizes and weights be set as $\stepsize^k = \stepsize$ and $w^k = (1-\mu \stepsize/2)^{-(k+1)}$, respectively, where $\stepsize \leq \frac{1}{14(2\delta + B) L}$. Then
\begin{eqnarray*}
  \boxed{\Exp{f(\bar x^K)}-f^\star = \tilde \cO \left( A_3 \exp \left[- \frac{K}{A_4} \right] + \frac{A_2}{ K}\right)}\,,
\end{eqnarray*}
where $A_3\eqdef L(2\delta + B)\norm{x^0-x^\star}^2$ and  $A_4 \eqdef  \frac{28L(2\delta + B)}{\mu} $.
 
\item[(iii)]  {\bf $\cO(1)$ stepsizes \& equal weights.} Let, for all $k\ge0$, the stepsizes and weights be set as $\stepsize^k = \stepsize$ and $w^k = 1$, respectively,  where $\stepsize \leq \frac{1}{14(2\delta + B) L}$. Then
\begin{eqnarray*}
 \boxed{\Exp{f(\bar x^K)}-f^\star = \cO \left(  \frac{A_3}{ K } + \frac{A_5}{\sqrt{K}}  \right)}\,,
\end{eqnarray*}
where $A_5\eqdef  \sqrt{C\left(1 + \nicefrac{1}{n}\right) + D \left(\nicefrac{2B}{n} +3\delta\right)}\norm{x^0-x^\star} $.

\end{itemize}

\end{theorem}

Let us make a few observations on these results.
First, Algorithm \ref{alg} employing general biased compressors and error feedback mechanism indeed {\em resolves} convergence issues of DCGD method by {\em converging the optimal solution $x^*$}.
Second, note that the choice of stepsizes $\stepsize^k$ and weights $w^k$ leading to convergence is not unique and {\em several schedules are feasible}.
Third, all the rates are sublinear and based on the second rate {\it (ii)} above, {\em linear convergence} is guaranteed if $C=D=0$. Based on \eqref{stgr3_main}, one setup when the condition $C=0$ holds is when all devices compute full local gradients (i.e., $g_i^k = \nabla f_i(x^k)$). Furthermore, the condition $D=0$ is equivalent to $\nabla f_i(x^\star) = 0$ for all $i\in[n]$, which is typically satisfied for over-parameterized models.
Lastly, under these two assumptions (i.e., devices can compute full local gradients and the model is over-parameterized), we show that distributed SGD method with error feedback converges with the same $\cO\left(\delta \frac{L}{\mu} \log \frac{1}{\epsilon}\right)$ linear rate as single node CGD algorithm. {\em To the best of our knowledge, this was the first regime where distributed first order method with biased compression is guaranteed to converge linearly.}

\begin{figure*}[th]
\centering
\includegraphics[width=.35\linewidth]{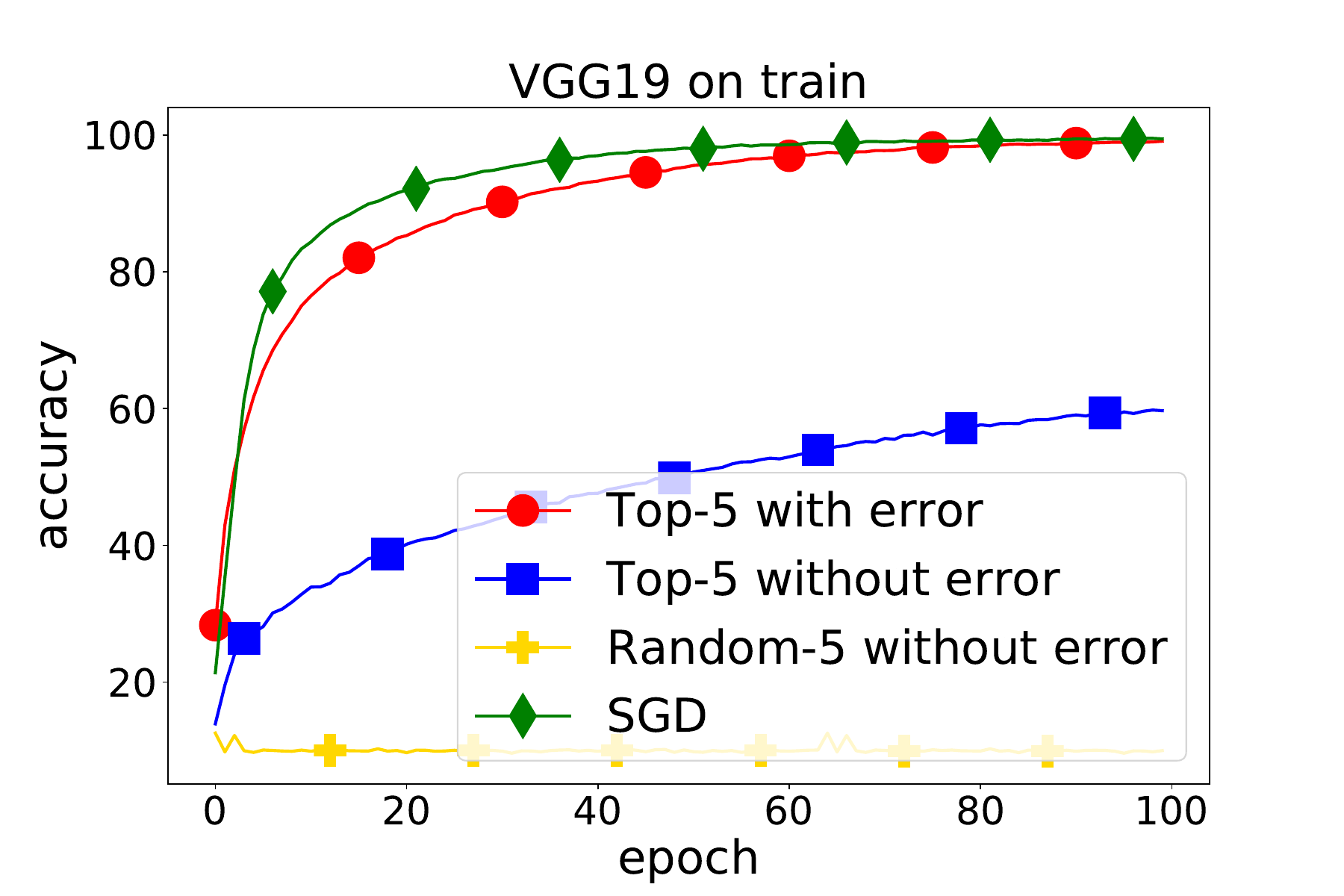}
\includegraphics[width=.35\linewidth]{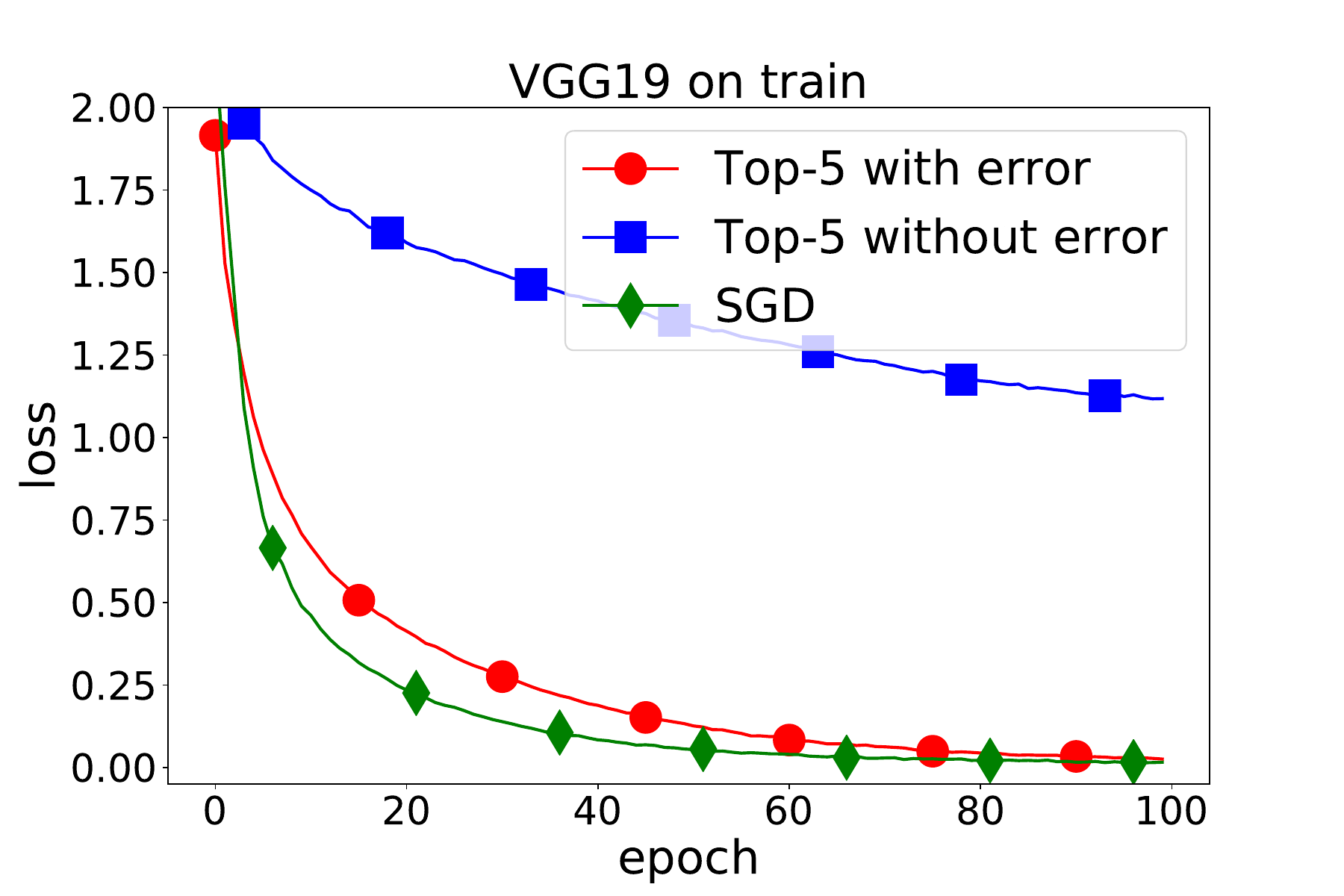}
\includegraphics[width=.35\linewidth]{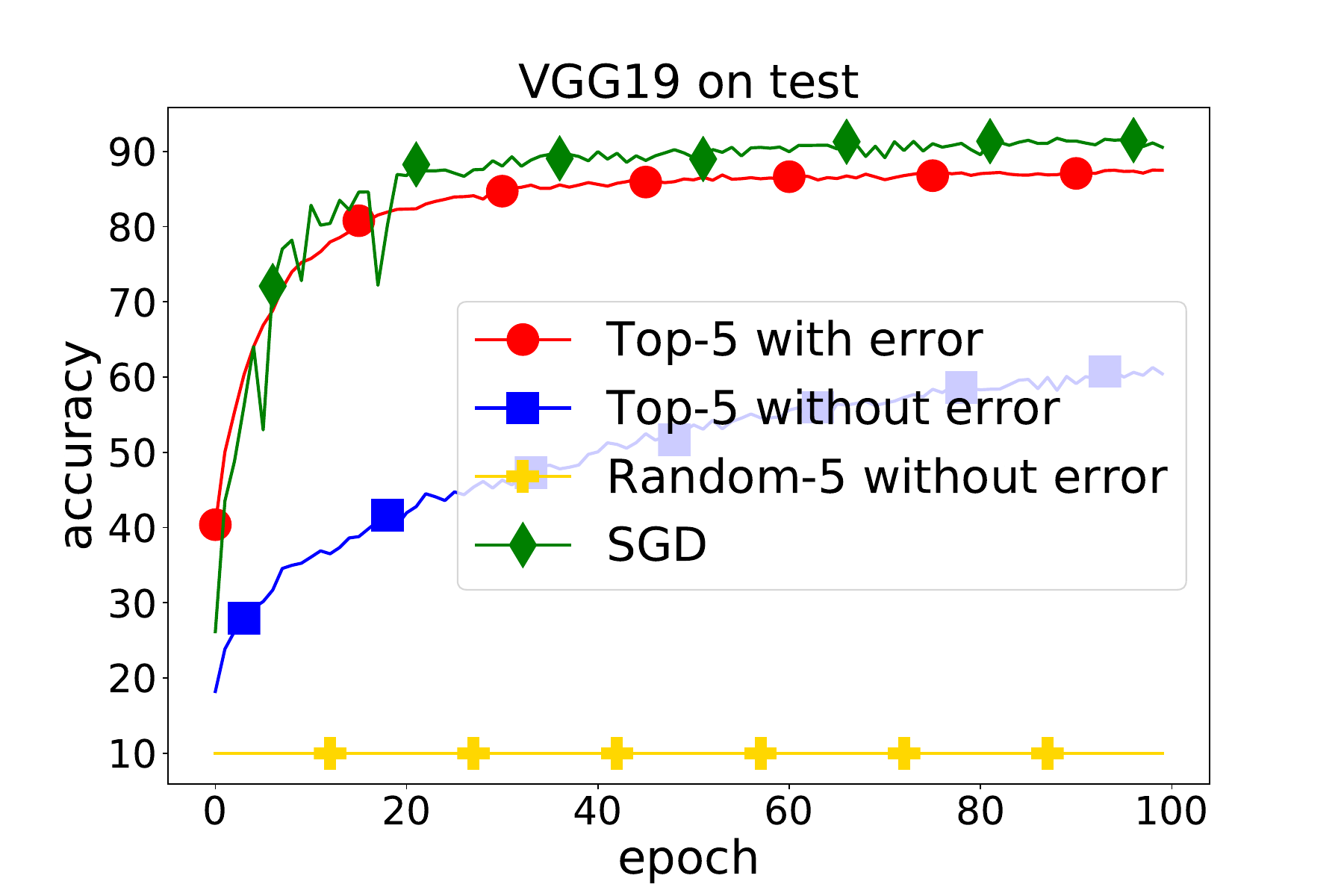}
\includegraphics[width=.35\linewidth]{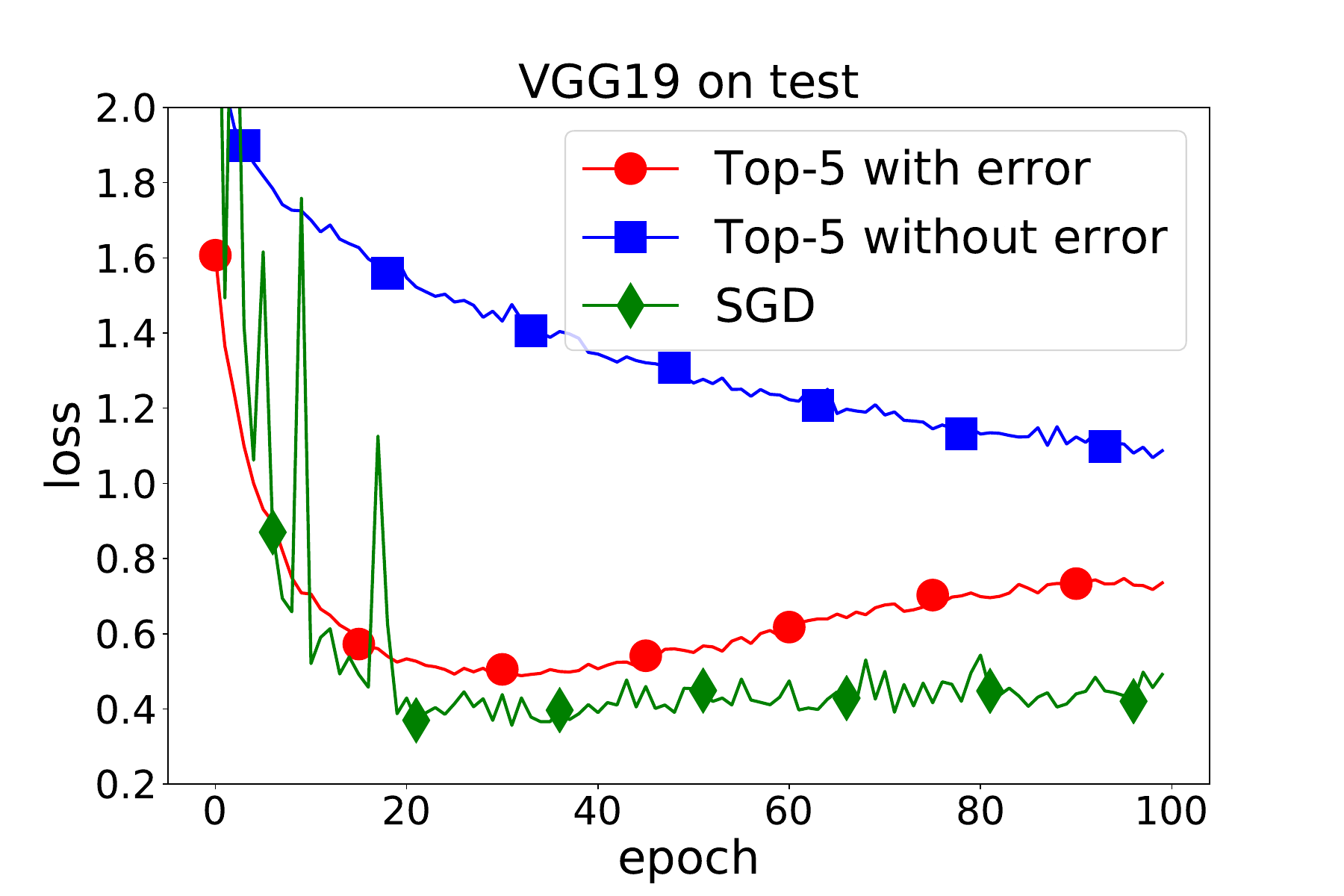}
\caption{Training/Test loss and accuracy  for VGG19 on CIFAR10 distributed among $4$ nodes for $4$ different compression operators.}
\label{fig:exper4_main}
\end{figure*}

\begin{figure*}[th]
\centering
\includegraphics[width=.35\linewidth]{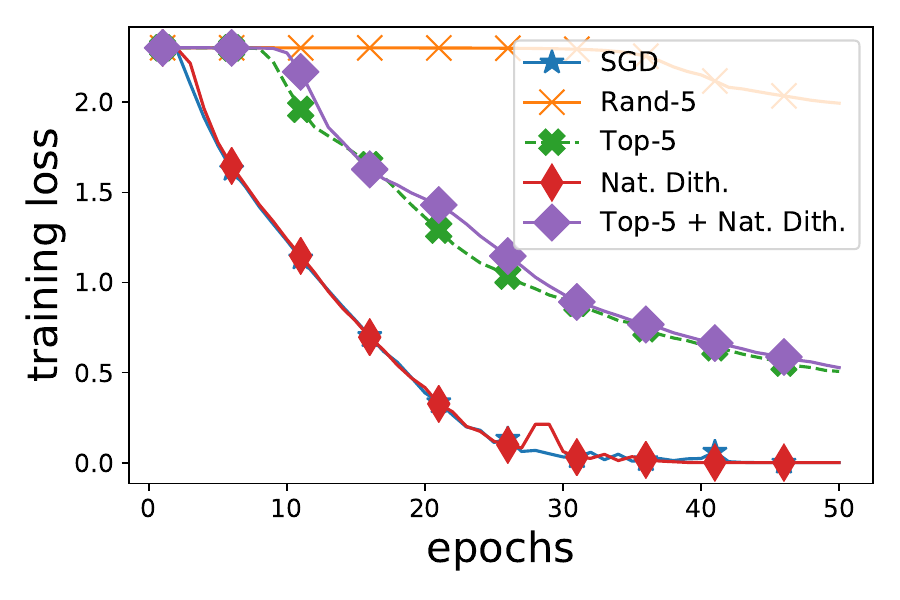}
\includegraphics[width=.35\linewidth]{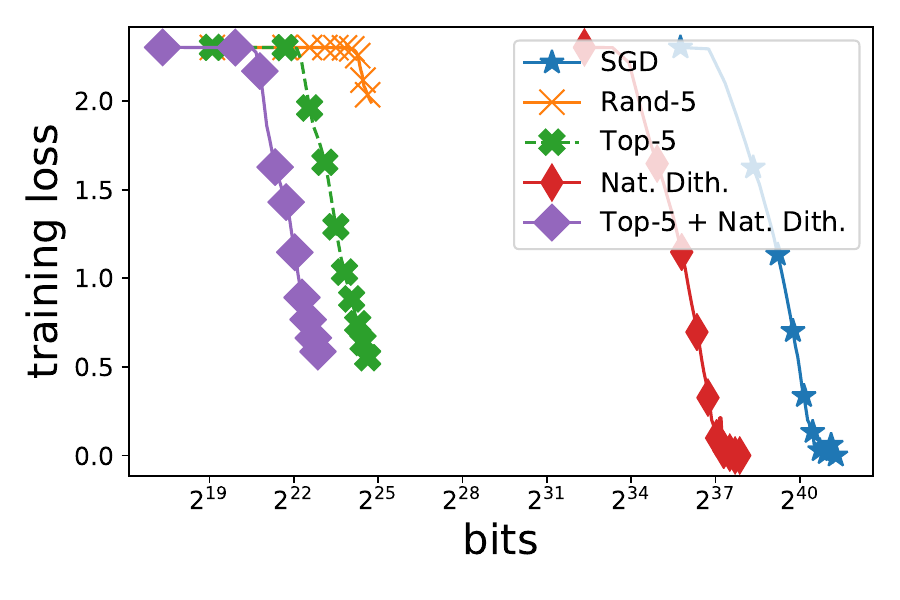}
\includegraphics[width=.35\linewidth]{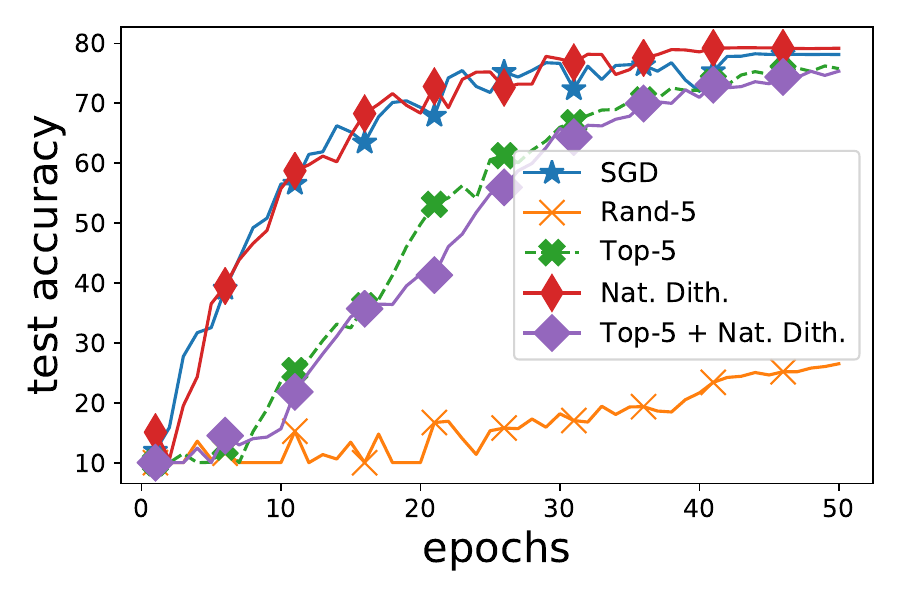}
\includegraphics[width=.35\linewidth]{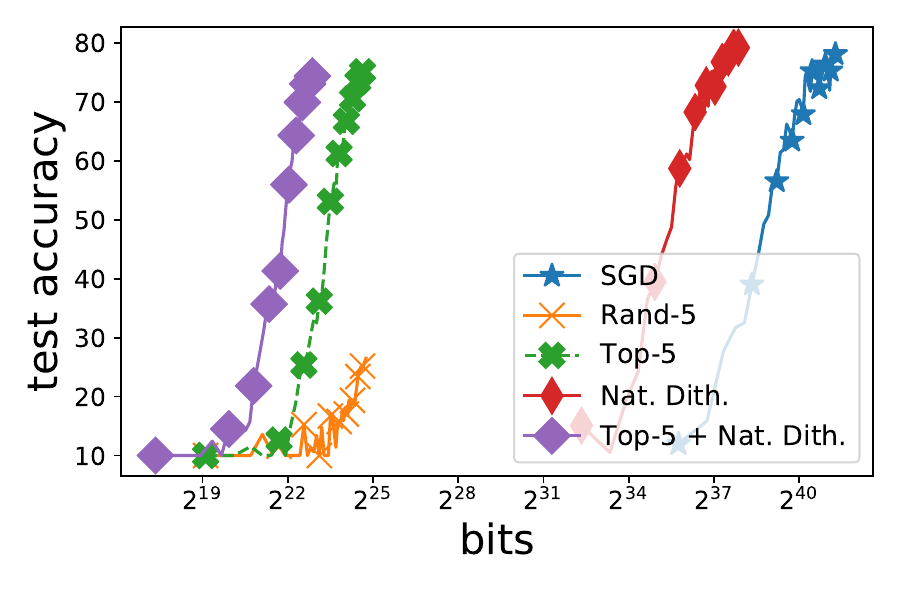}
\caption{Training loss and test accuracy  for VGG11 on CIFAR10 distributed among $4$ nodes for $5$ different compression operators.}
\label{fig:top_k_plus_nat_dit}
\end{figure*}

\section{Experiments}


In Sections \ref{sec:exp1}--\ref{sec:expl}, we present our experiments, which are primarily focused on supporting our theoretical findings. Therefore,  we simulate these experiments on one machine which enable us to do rapid direct comparisons against the prior methods. In more details, we use the machine with 24 Intel(R) Xeon(R) Gold 6146 CPU @ 3.20GHz cores and GPU GeForce GTX 1080 Ti. Section \ref{sec:exp_albert} is devoted to real experiments with a large model and big data. For these experiments, we use a computational cluster with 10 GPUs Tesla T4. We implement all methods in Python 3.7 using Pytorch~\cite{NEURIPS2019_9015}.


\subsection{Lower empirical variance induced by biased compressors during deep network training} \label{sec:exp1}
Motivated by our theoretical results in Section~\ref{sec:stat}, we show that similar behaviour can be seen in the empirical variance of gradients. We run 2 sets of experiments with Resnet18 on CIFAR10 dataset. In Figure~\ref{fig:emp_variance}, we display empirical variance, which is obtained by running a training procedure with specific compression. We compare unbiased and biased compressions with the same communication complexities--deterministic with classic/unbiased $\NC$ and Top-$k$ with Rand-$k$ with  $k$ to be $\nicefrac{1}{5}$ of coordinates. One can clearly see, that there is a gap in empirical variance between biased and unbiased methods, similar to what we have shown in theory, see Section~\ref{sec:stat}. 

\subsection{Error-feedback is needed in distributed training with biased compression}
The next experiment shows the need of error-feedback for methods with biased compression operators. Based on Example 1, error feedback is necessary to prevent divergence from the optimal solution. Figure~\ref{fig:exper4_main} displays training/test loss and accuracy for VGG19 on CIFAR10 with data equally distributed among $4$ nodes. We use plain SGD with a default step size equal to $0.01$ for all methods, i.e. Top-$5$ with and without error feedback, Rand-$5$ and no compression. As suggested by the counterexample, not using error feedback can really hurt the performance when biased compressions are used. Also note, that performance of Rand-$5$ is significantly worse than Top-$5$.

\begin{figure}[t!]
\centering
\includegraphics[width=.3\linewidth]{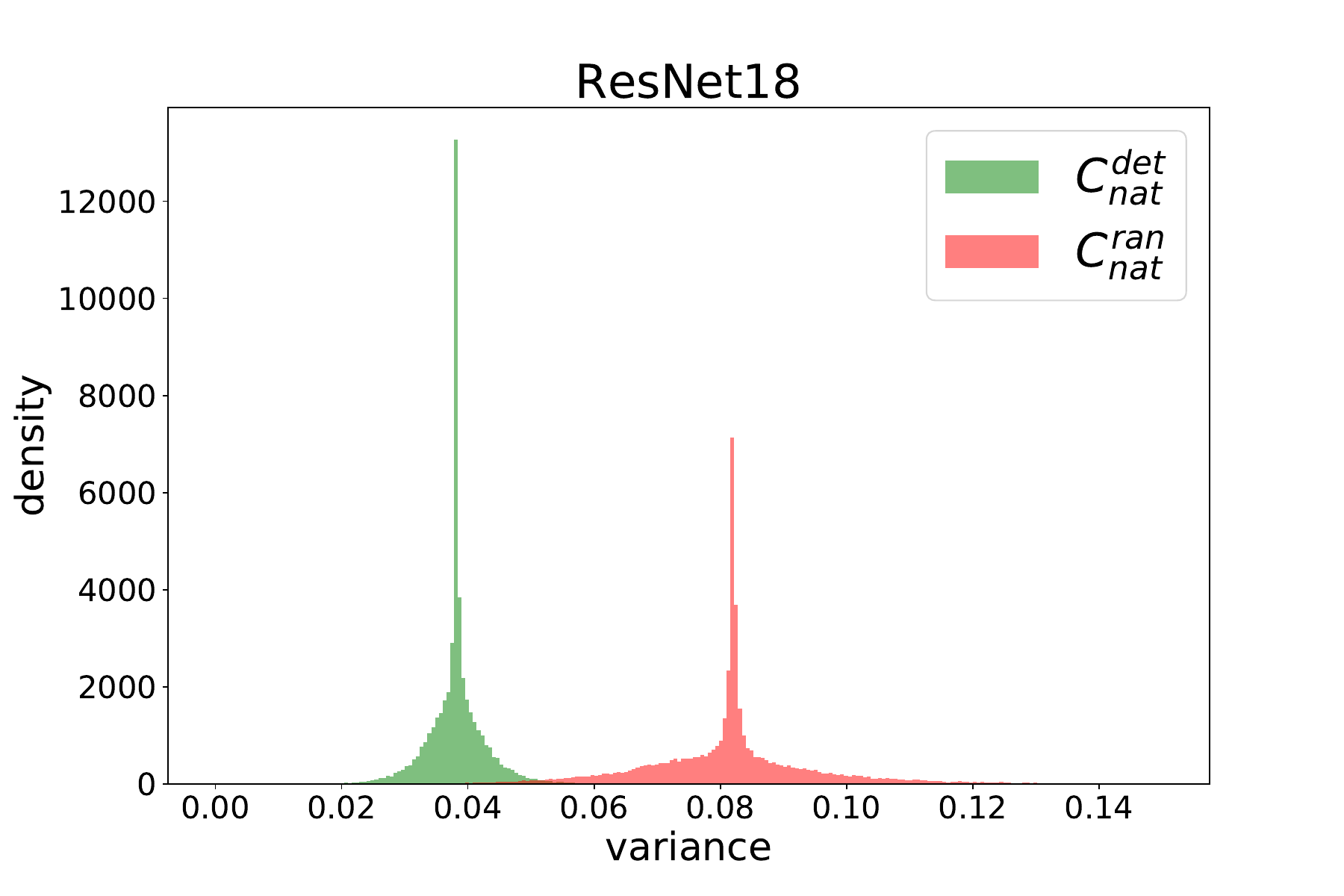}
\includegraphics[width=.3\linewidth]{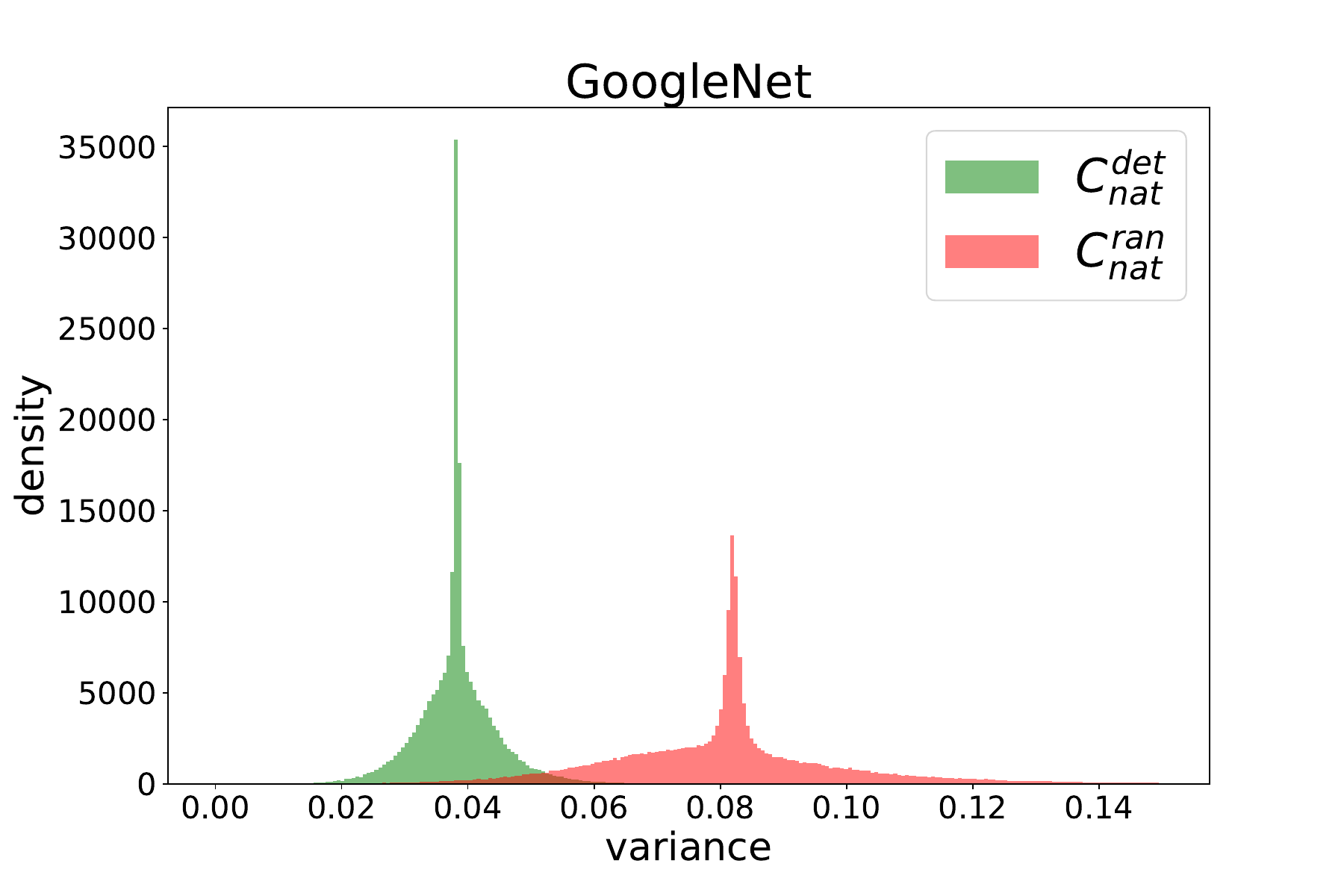}
\includegraphics[width=.3\linewidth]{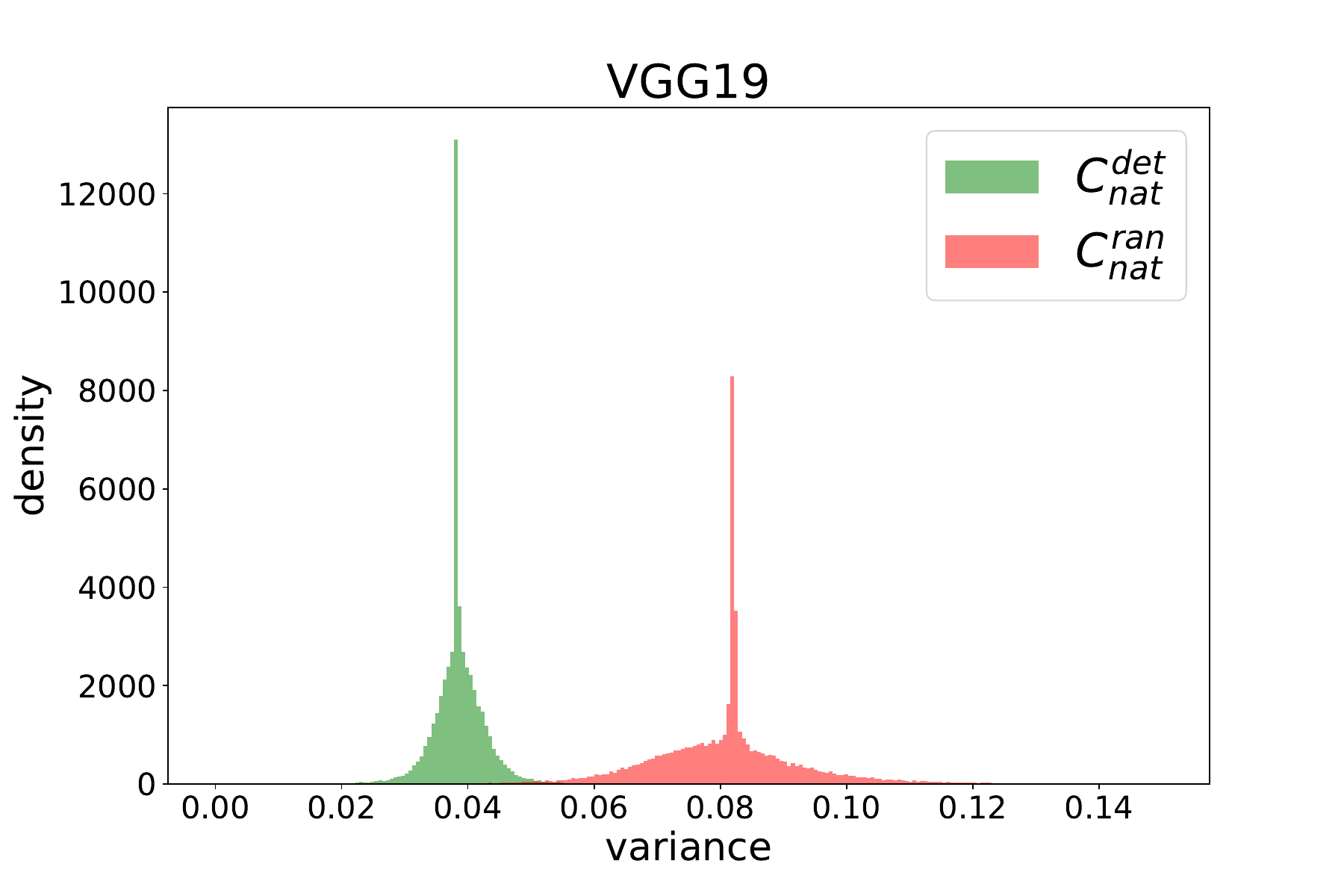}
\\
\includegraphics[width=.3\linewidth]{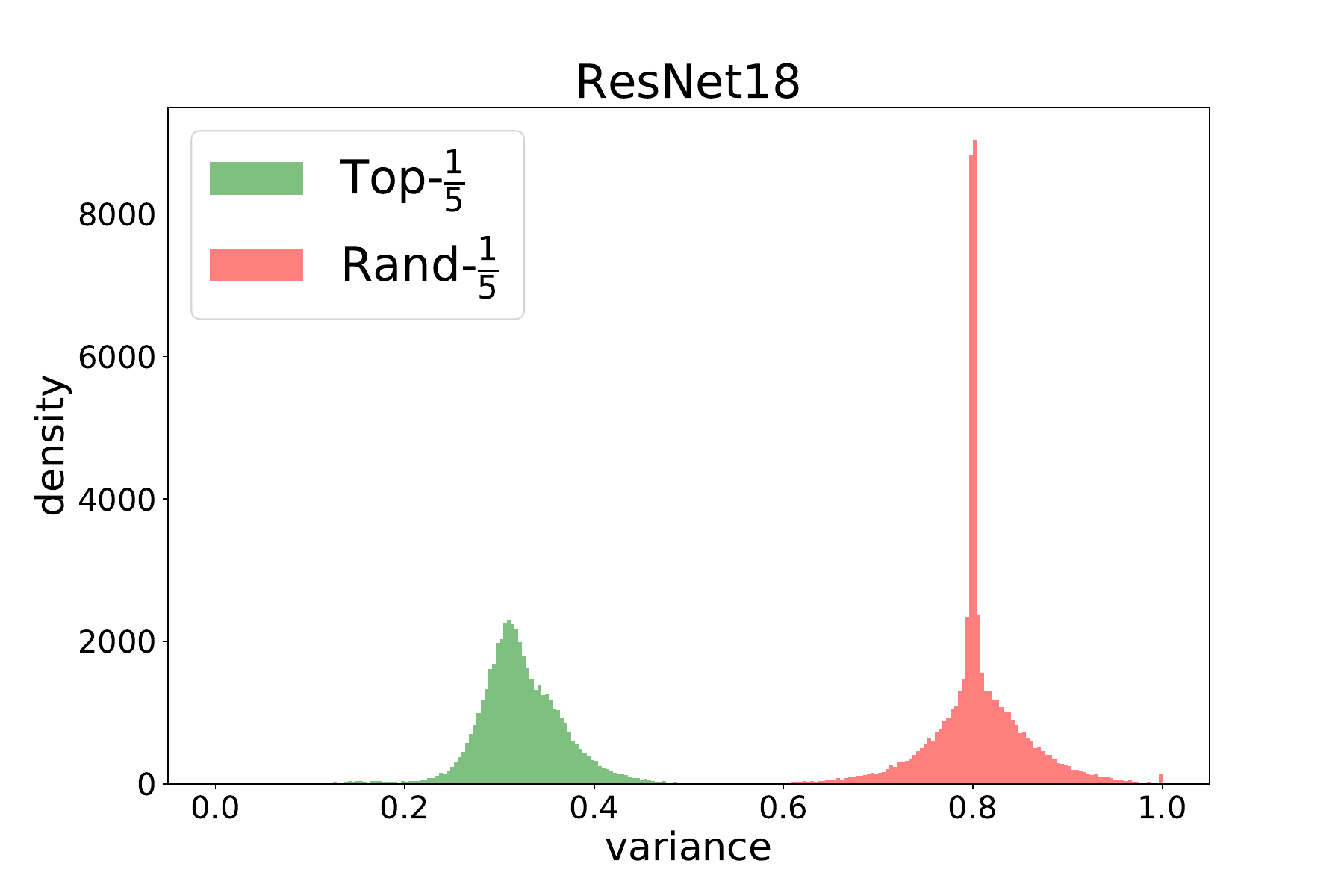}
\includegraphics[width=.3\linewidth]{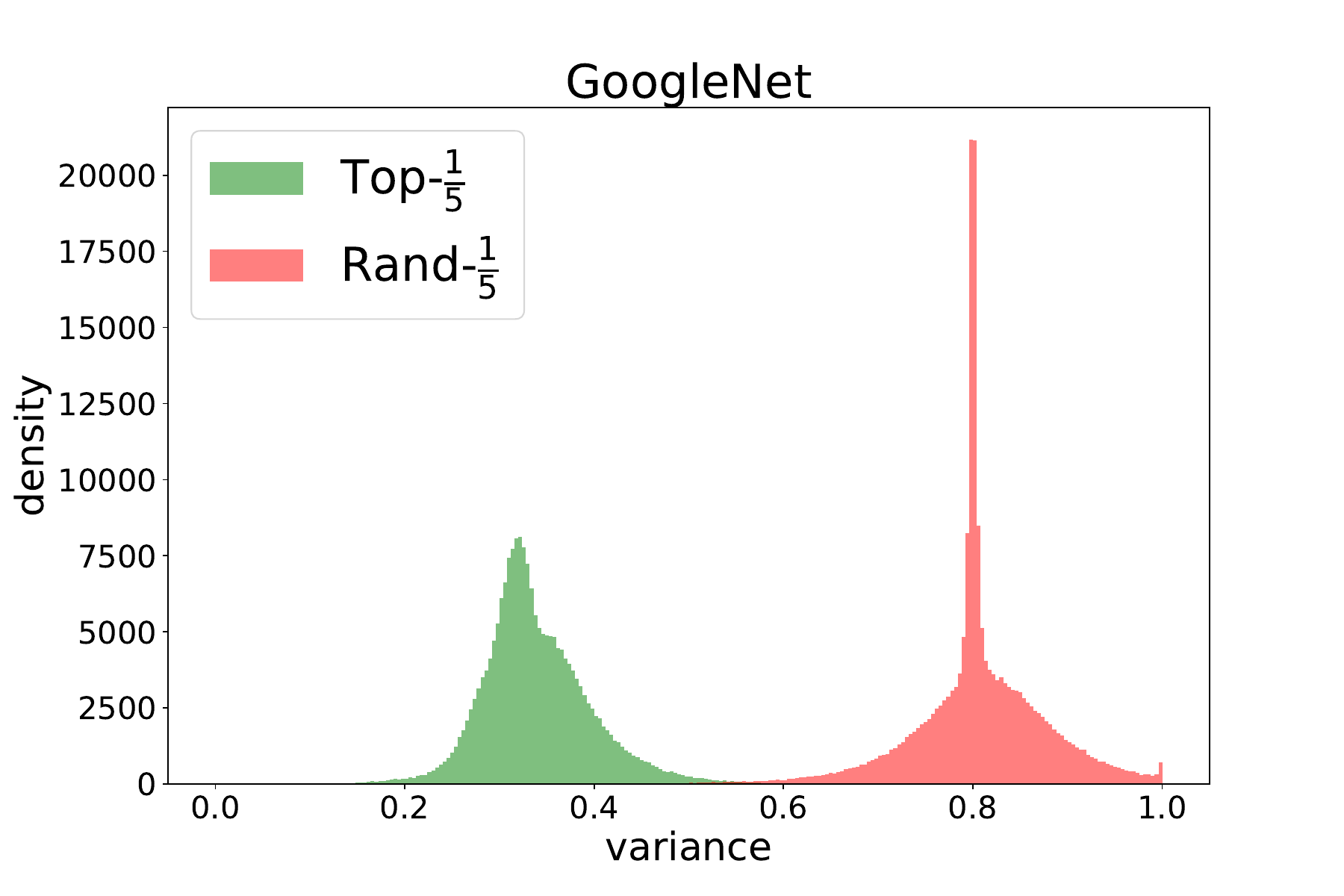}
\includegraphics[width=.3\linewidth]{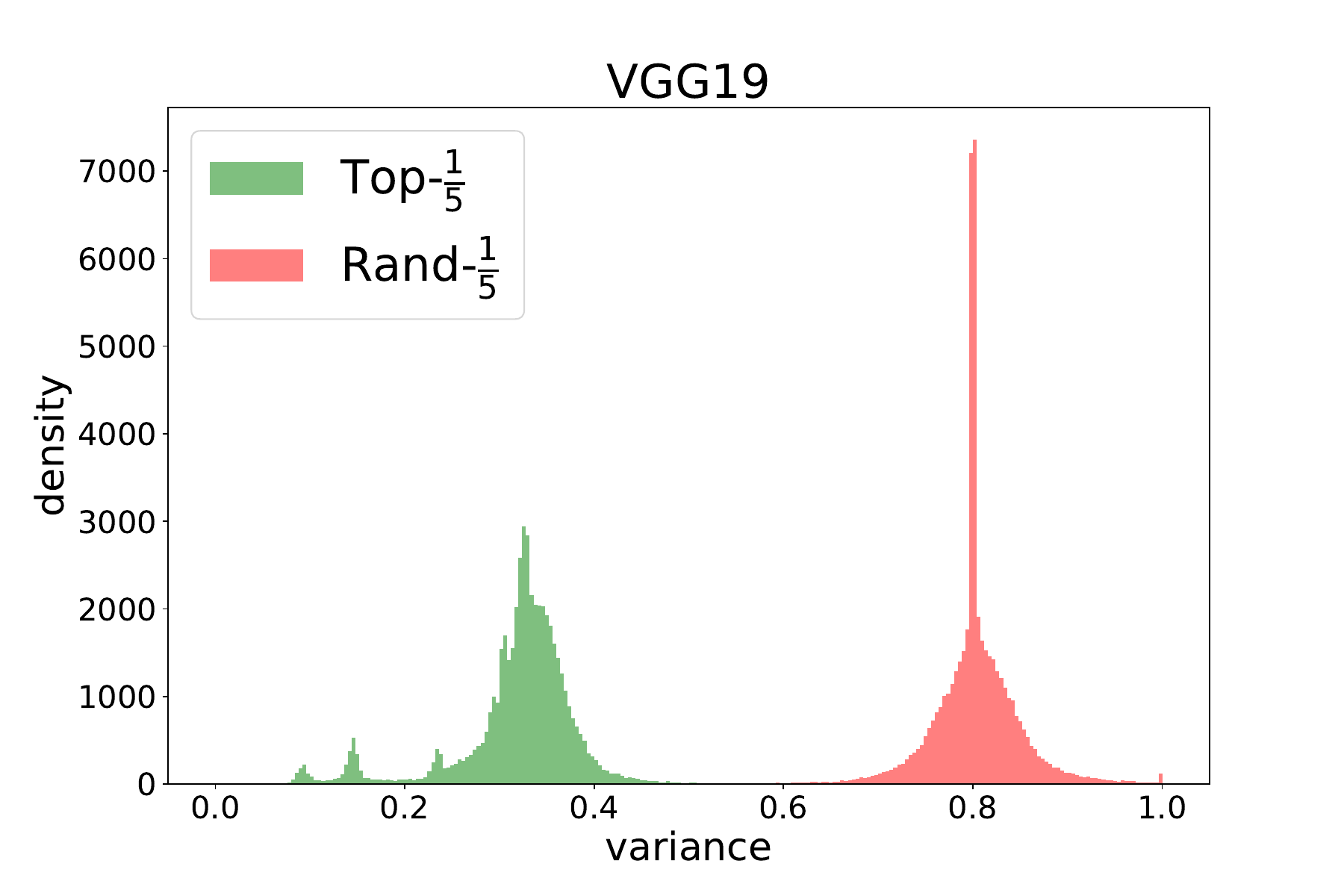}
\caption{Comparison of empirical variance $\norm{\cC(x) - x}^2/\norm{x}^2$ during training procedure for two pairs of method-- deterministic with classic/unbiased $\NC$ and Top-$k$ with Rand-$k$ with Top-$\nicefrac{1}{5}$ of coordinates. Both of the plots were produced using ResNet18, GoogleNet, and VGG19 on CIFAR10 dataset.}
\label{fig:emp_variance}
\end{figure}

\subsection{Top-$k$ mixed with natural dithering saves in communication significantly}
Next, we experimentally show the superiority of our newly proposed compressor--Top-$k$ combined with natural dithering. We compare this against current state-of-the-art for low bandwidth approach Top-$k$ for some small $k$. In Figure~\ref{fig:top_k_plus_nat_dit}, we plot comparison of $5$ methods--Top-$k$, Rand-$k$, natural dithering, Top-$k$ combined with natural dithering and plain SGD. We use $2$ levels with infinity norm for natural dithering and $k=5$ for sparsification methods. For all the compression operators, we train VGG11 on CIFAR10 with plain SGD as an optimizer and default step size equal to $0.01$.  We can see that adding natural dithering after Top-$k$ has the same effect as the natural dithering comparing to no compression, which is a significant reduction in communications without almost no effect on convergence or generalization. Using this intuition, one can come to the conclusion that Top-$k$  with natural dithering is the best compression operator for any bandwidth, where we adjust to given bandwidth by adjusting $k$. This exactly matches with our previous theoretical variance estimates displayed in Figure~\ref{fig:vb_comparison}.

\subsection{Theoretical behavior predicts the actual performance in practice} \label{sec:expl}

In the next experiment, we provide numerical results to further show that our predicted theoretical behavior matches the actual performance observed in practice. We run two regression experiments optimized by gradient descent with step-size $\eta = \frac{1}{L}$. We use a slightly adjusted version of Theorem~\ref{thm:main-III} with adaptive step-sizes, namely
$$
\frac{f(x^k) - f(x^\star)}{f(x^0) - f(x^\star)} \leq  \prod \limits_{i=1}^k \left(1 - \frac{\mu}{L \delta_i}\right),
 $$
where 
$$
1 - \frac{1}{\delta_i} = \frac{\norm{\cC(\nabla f(x^i)) - \nabla f(x^i)}^2}{\norm{\nabla f(x^i)}^2}.
$$
Note that this is the direct consequence of our analysis. We apply this property to display the theoretical convergence. In the first experiment depicted in Figure \ref{fig:exper7}, we randomly generate random square matrix $A$ of dimension $100$ where it is constructed in the following way: we sample random diagonal matrix $D$, which elements are independently sampled from the uniform distribution $(1,10)$,  $(1,100)$, and  $(1,1000)$, respectively. $A$ is then constructed using $Q^\top D Q$, where $P = QR$ is a random matrix and $QR$ is obtained using QR-decomposition. The label $y$ is generated the same way from the uniform distribution $(0,1)$. The optimization objective is then 
 $$
 \min_{x \in \reals^d} x^\top A x - y^\top x.
 $$

For the second experiment shown in Figure \ref{fig:exper8}, we run standard linear regression on two scikit-learn datasets--\textit{Boston} and \textit{Diabetes}--and applied data normalization as the preprocessing step.

\begin{figure}[th]
\centering
\includegraphics[width=.32\linewidth]{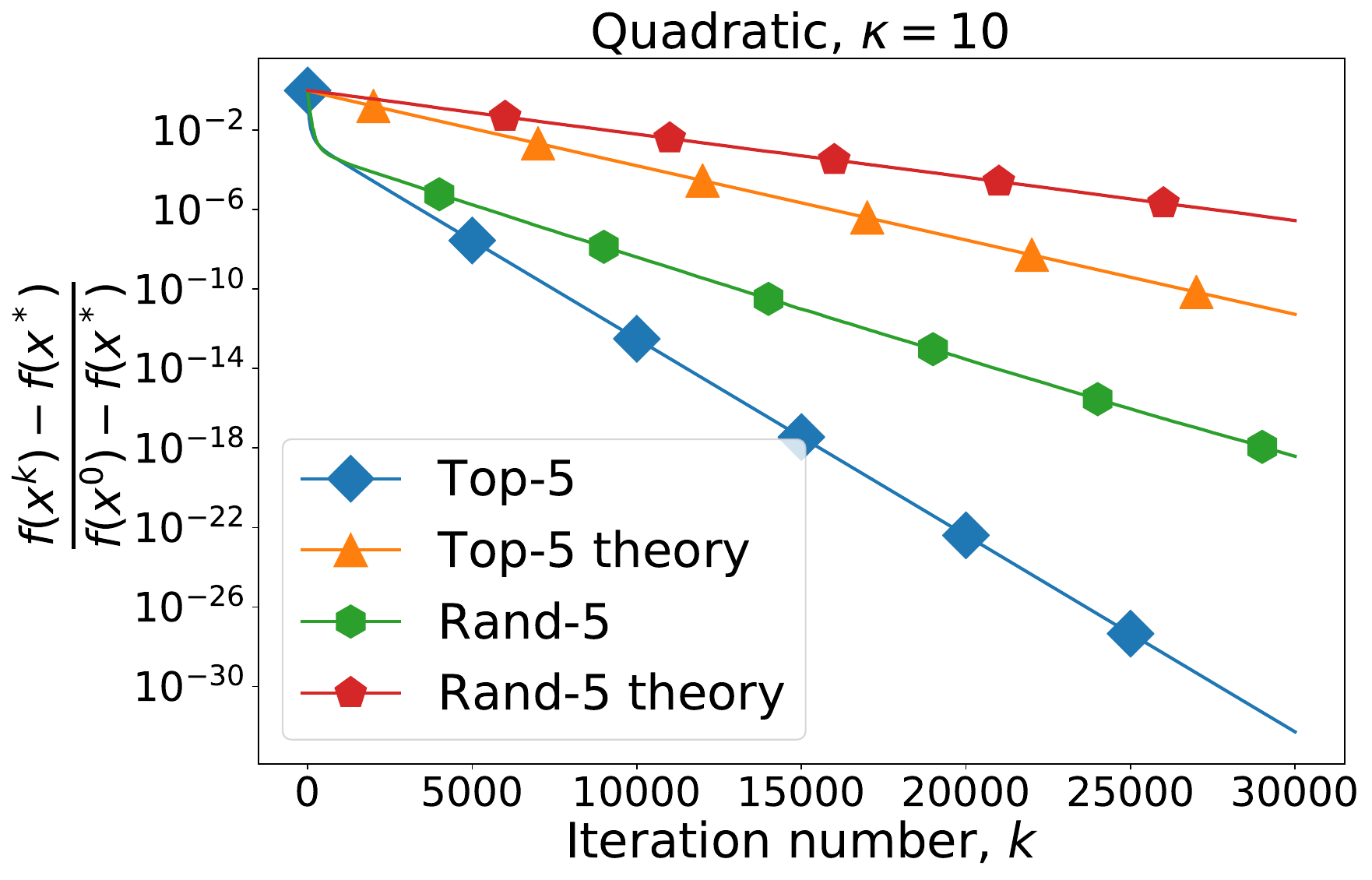}
\includegraphics[width=.32\linewidth]{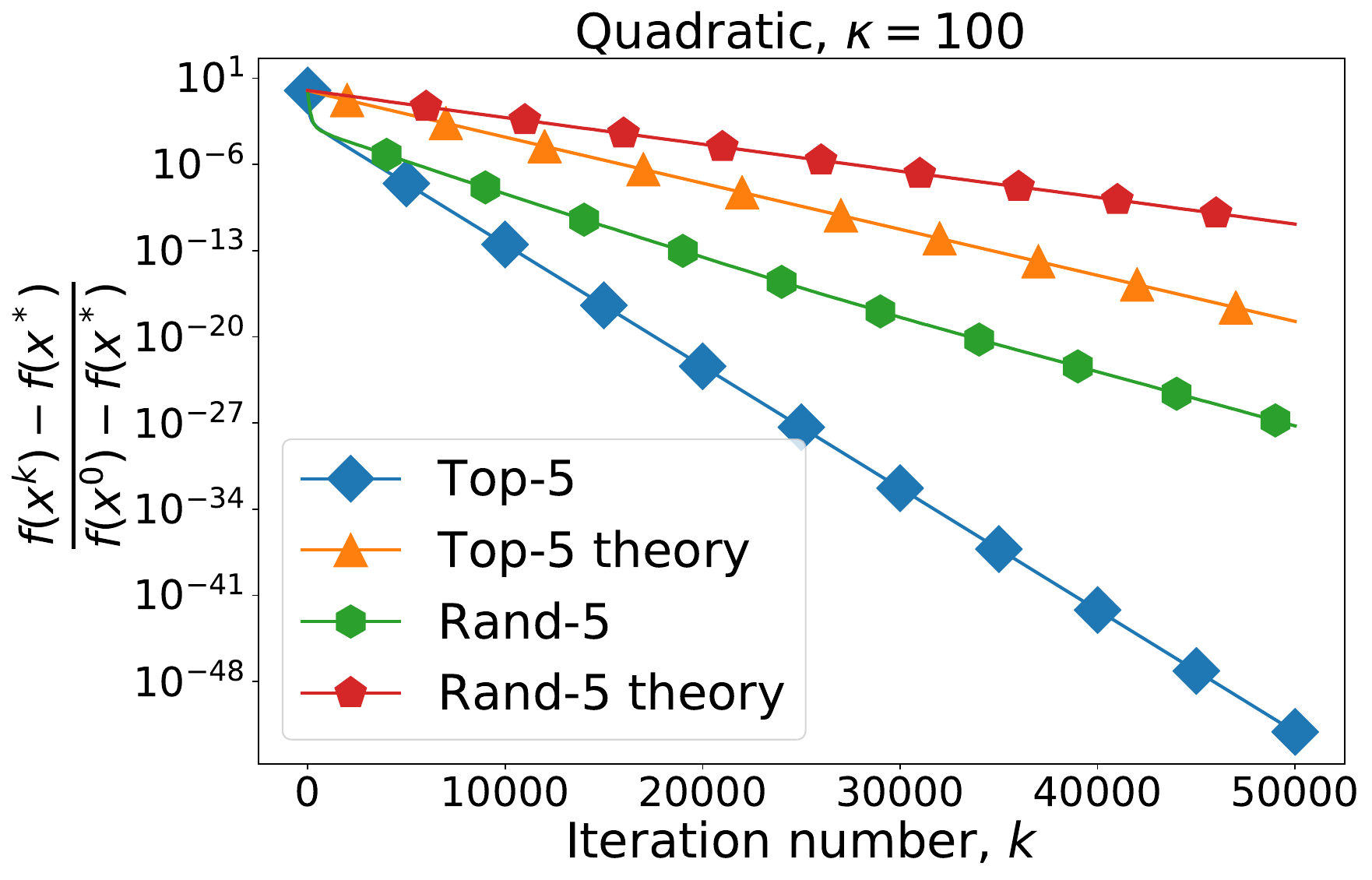}
\includegraphics[width=.32\linewidth]{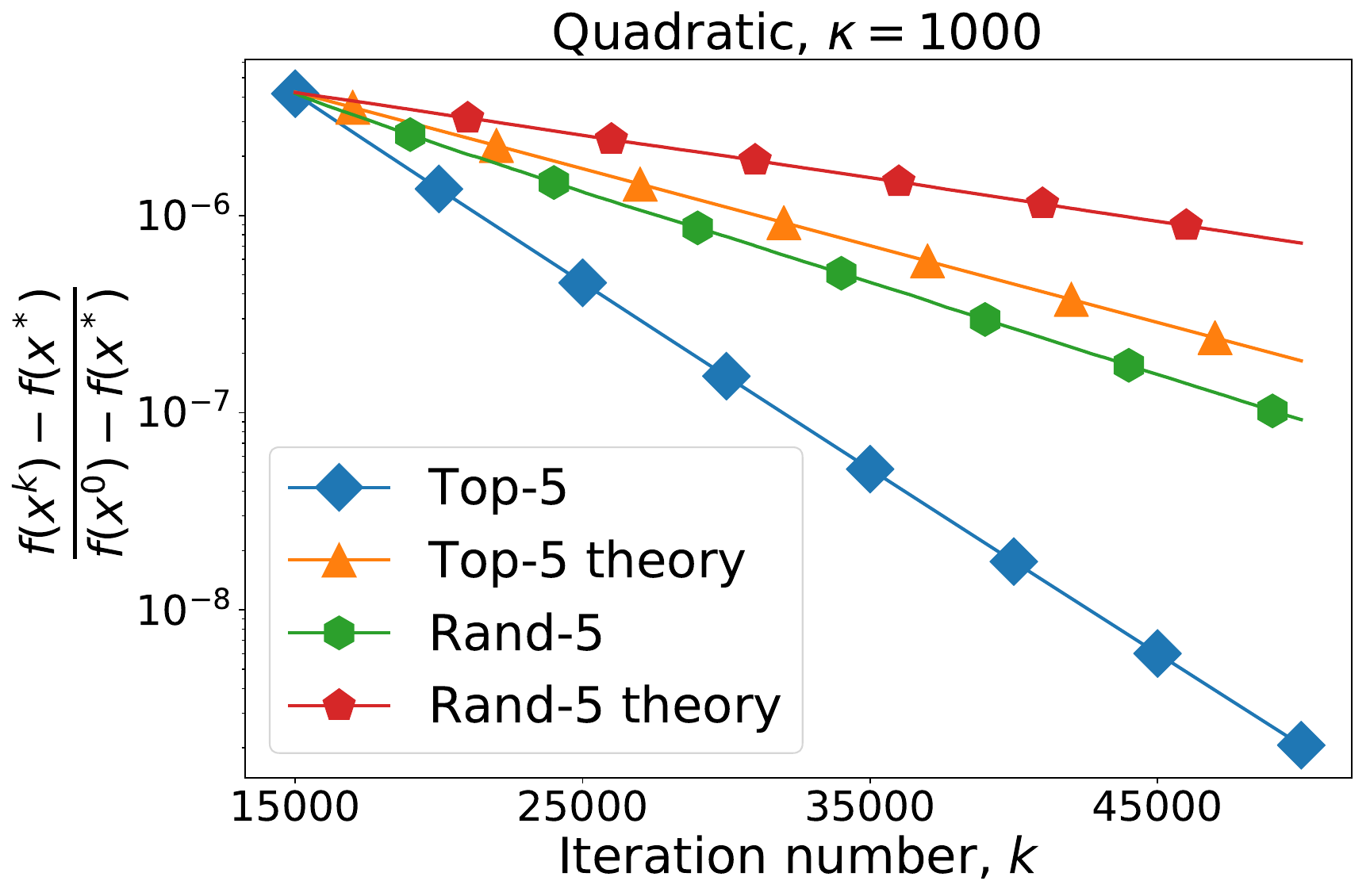}
\caption{Theoretical vs. Practical Convergence of Compressed Gradient Descent on Quadratics problem with different condition number $\kappa$ for Top-5 and Rand-5 compression operators.}
\label{fig:exper7}
\end{figure}

\begin{figure}[th]
\centering
\includegraphics[width=.4\linewidth]{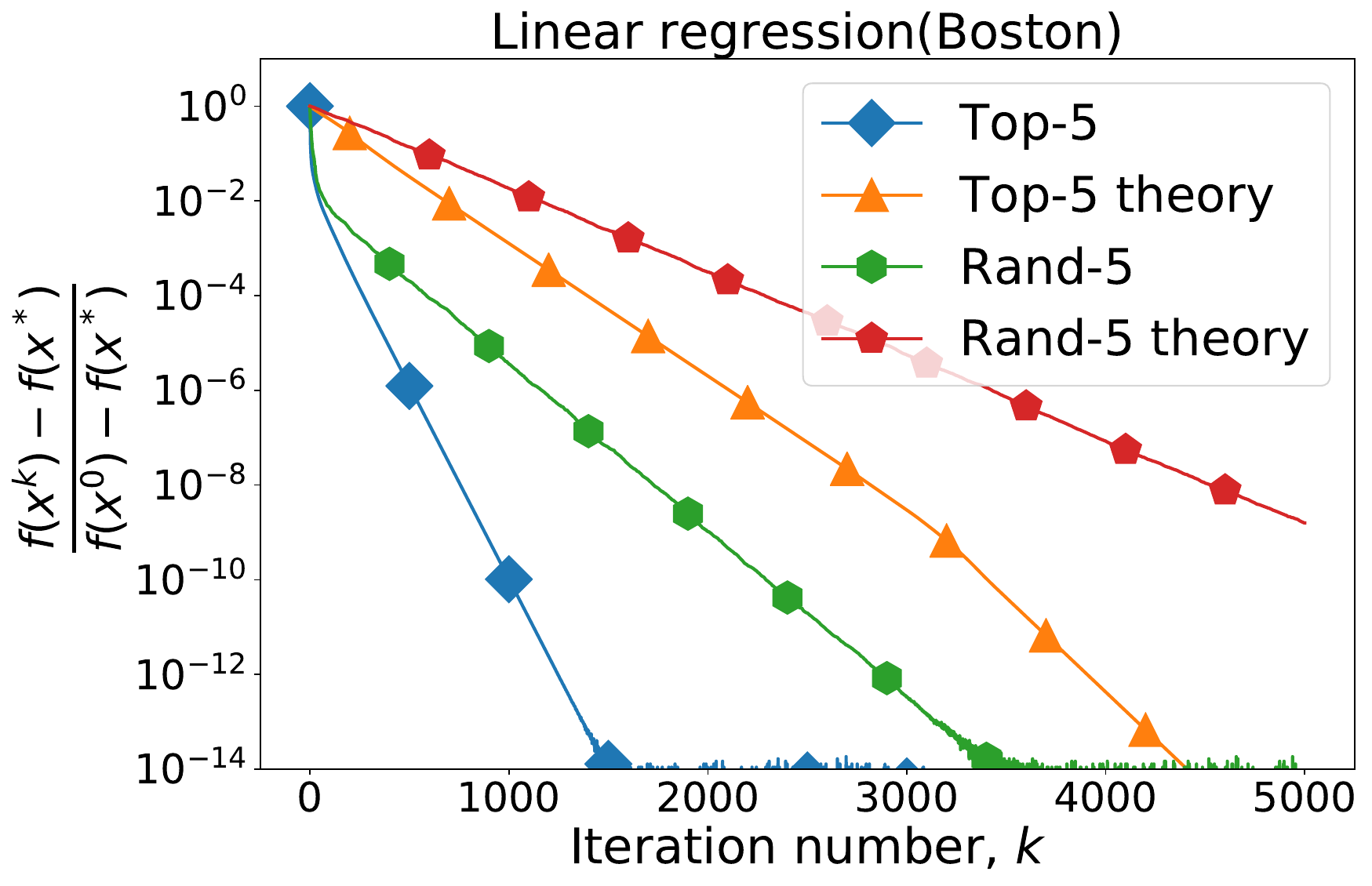}
\includegraphics[width=.4\linewidth]{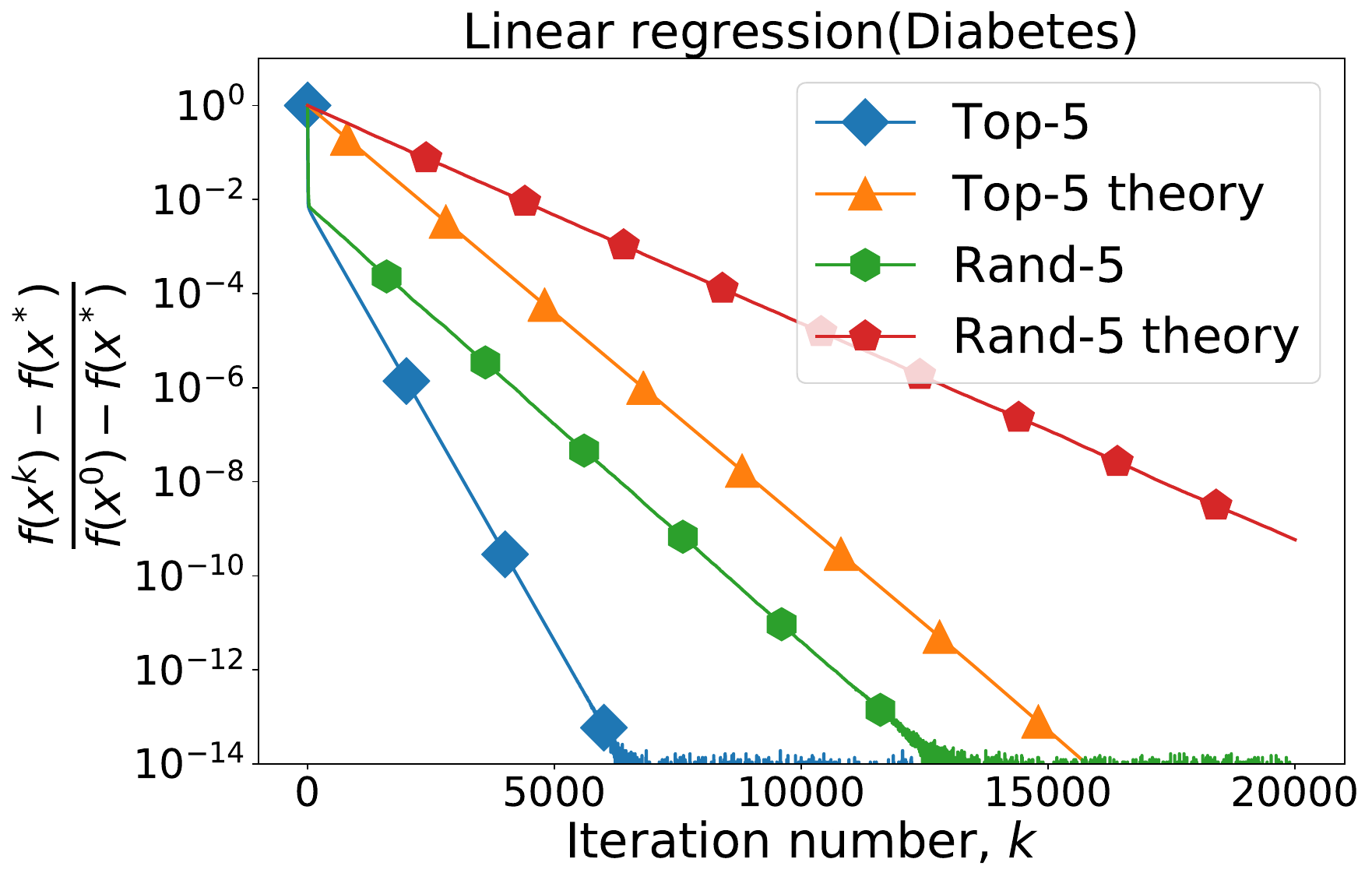}
\caption{Theoretical vs. Practical Convergence of Compressed Gradient Descent on Linear Regression problem for \textit{Boston} and \textit{Diabetes} datasets with Top-5 and Rand-5 compression operators.}
\label{fig:exper8}
\end{figure}

Looking into Figures~\ref{fig:exper7} and \ref{fig:exper8}, one can clearly see that as predicted by our theory, biased compression with less empirical variance leads to better convergence in practice and the gap almost matches the improvement.

\subsection{Transformer training}\label{sec:exp_albert}

\new{In the last experiment, we work with a real big model. In particular, we train ALBERT-large~\citep{albert} (18M parameters) with layer sharing on a combination of Bookcorpus~\citep{zhu2015aligning} and Wikipedia~\citep{devlin2018bert} datasets. We use the same optimizer (LAMB) and the same tuning for it as in the original paper~\citep{albert}. 
Our goal is to find an unbiased and biased operators such that we maximize the improvement in terms of communication cost without losing much in terms of training quality. In this case, we include in the communication cost both the time to perform the communication round and the time to perform the compression and decompression operations. It is important to note that we do not compress packages with gradients corresponding to LayerNorm scales, but this is less than $1$ percent of the whole package. Among unbiased compressors, we try natural compression (Section 2.2 (g)) and random sparsification (Section 2.2 (a)). The best result is shown by natural compression, which compresses the packages by a factor of $4$. The Rand-$25$ operator (which also compresses the information by a factor of $4$) performs much worse even with the use of the error feedback technique. For communications with natural compression, we use the classical allreduce procedure. Among unbiased compressors, we try Top-$k$ sparsification (Section 2.2 (d)) and Power compression~\citep{vogels}. The best result is shown by Power compression with the rank parameter $r{=}8$ and the error feedback. The organization of communications (allreduce procedures) occurs as in the original paper~\citep{vogels}. We measure how the training loss changes (Figure~\ref{fig:albert}) as well as at the end of training we evaluate the final performance for each model on several popular tasks from~\citep{glue} (Table \ref{tab:albert}). The results show that the use of biased compression can significantly reduce the communication time cost compared to uncompressed and even unbiased compression setups. At the same time, the quality of the training does not drop much. }

\begin{figure}[h]
\centering
\includegraphics[width=0.4\textwidth]{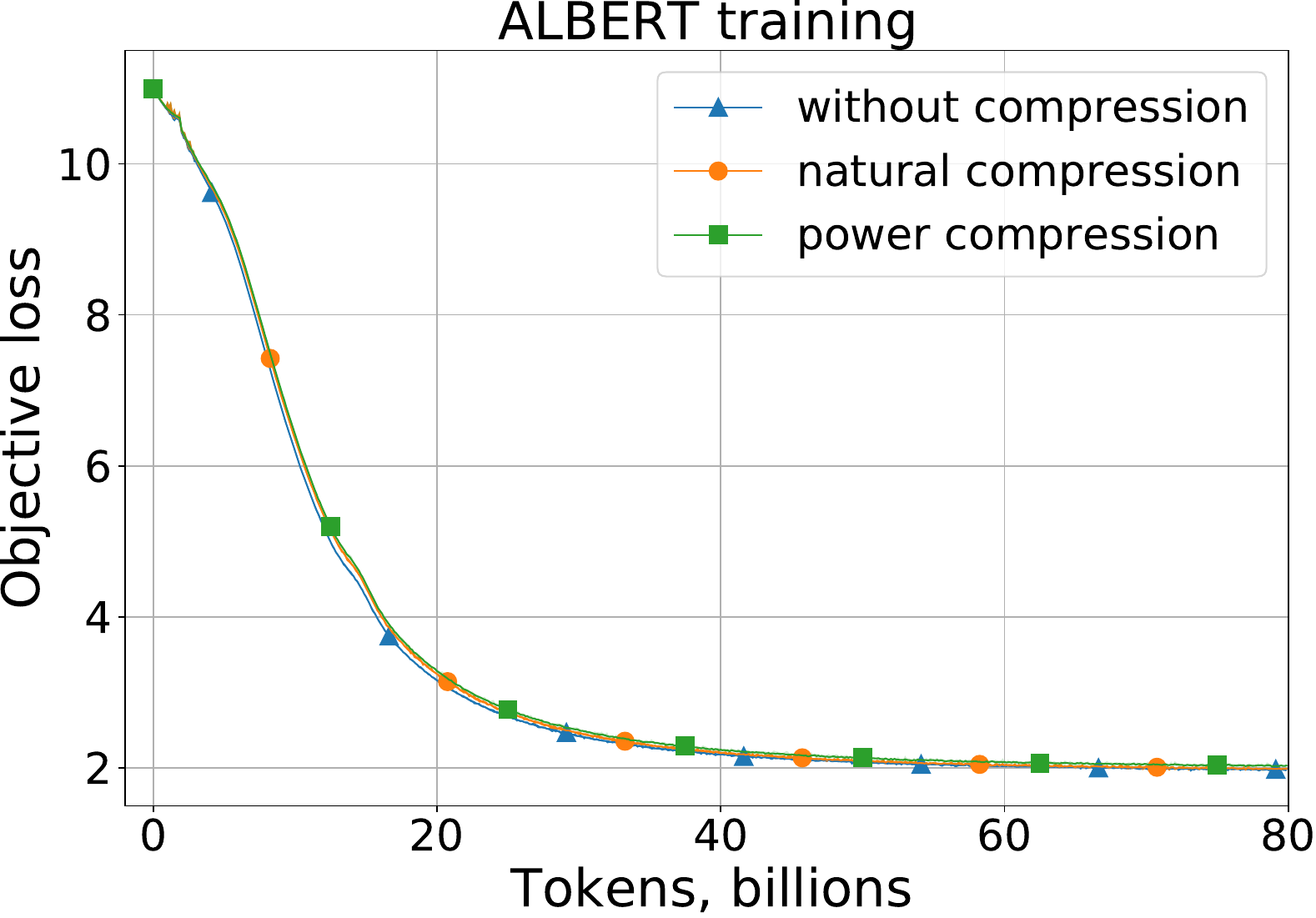}
\includegraphics[width=0.4\textwidth]{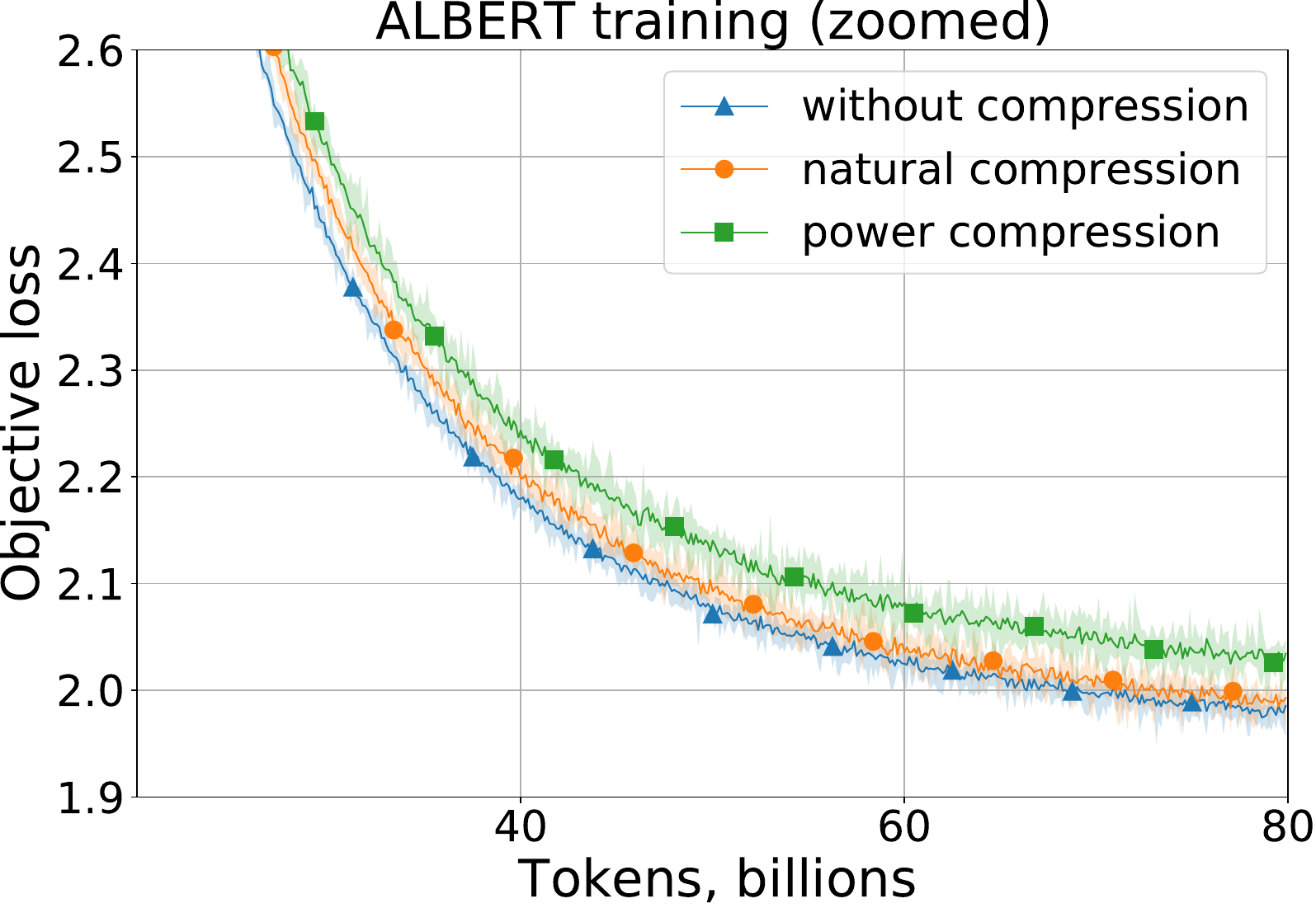}
\caption{Training objective value for ALBERT-large with different compression operators.}
\label{fig:albert}
\end{figure}

\begin{table}
\centering
\footnotesize
\begin{tabular}{|l|c|c|c|c|c|c|c|c|c|}
\hline 
Setup & Avg time & CoLA & MNLI & MRPC & QNLI & QQP & RTE & SST2 & STS-B \\ 
\hline
Without compression & 9.62 $\pm$  0.03 &  46.2 & \textbf{81.1} &  82.5 & {87.9} & \textbf{88.0} & \textbf{66.3} & 85.1 & \textbf{88.0}   \\
Natural compression & 4.05 $\pm$ 0.05 & \textbf{48.3} & 81.0 & \textbf{87.5} & 87.8 & 84.4 & 63.2 & 87.8 & 86.9   \\
Power compression &  \textbf{1.04 $\pm$ 0.04} & 42.4 & 80.3 & 85.1 & \textbf{88.2} & 85.3  & 46.3 & \textbf{88.0} & 87.4  \\
\hline
\end{tabular}
\caption{Comparison of average time per one communication round and downstream evaluation scores on GLUE benchmark tasks.}
\label{tab:albert}
\end{table}


\appendix
\part*{Appendix}

\section{Basic Facts and Inequalities}

\subsection{Strong convexity}

 Function $f$ is  strongly convex on $\R^d$ when it is continuously differentiable and there is a constant $\mu > 0$ such that  the following inequality holds:
\begin{equation}
    \label{strong_conv1}
    \frac{\mu}{2}\norm{x-y}^2 \leq f(x) - f(y) -  \lin{ \nabla f(y), x-y },  \qquad \forall x,y\in \R^d.
\end{equation}

\subsection{Smoothness}

 Function $f$ is called $L$-smooth in $\R^d$ with $L > 0$ when it is differentiable and its gradient is $L$-Lipschitz continuous, i.e.\
\begin{equation*}
    \label{L-smooth1}
    \norm{\nabla f(x) - \nabla f(y) } \leq L \norm{x-y } , \qquad \forall x,y\in \R^d.
\end{equation*}

If convexity is assumed as well, then the following inequalities hold:
\begin{equation}
    \label{L-smooth3}
    \frac{1}{2L}\norm{\nabla f(x) - \nabla f(y)}^2 \leq f(x) - f(y) - \lin{ \nabla f(y), x-y },\qquad \forall x,y\in \R^d
\end{equation}

By plugging $y=x^*$ to \eqref{L-smooth3}, we get
\begin{equation}
    \label{L-smooth4}
    \norm{\nabla f(x)}^2 \leq 2L(f(x) - f(x^*)), \qquad \forall x\in \R^d.
\end{equation}

\subsection{Useful inequalities}

For all $a,b,x_1, \ldots, x_n \in \R^d$ and $\xi > 0$ the following inequalities holds:
\begin{equation}
  \label{inner_prod}
  2\lin{a,b} \leq \frac{\norm{a}^2}{\xi} + \xi
  \norm{b}^2,
\end{equation}
\begin{equation}
  \label{inner_prod_and_sqr}
  \norm{a+b}^2 \leq \left(1 + \frac{1}{\xi}\right)\norm{a}^2 + (1+\xi)
  \norm{b}^2,
\end{equation}
\begin{equation}
  \label{sum_sqr}
  \norm{\sum\limits_{i=1}^n x_i}^2 \leq n \cdot \sum\limits_{i=1}^n \norm{x_i}^2.
\end{equation}

\subsection{\new{Facts from order statistics}}\label{sec:order_stat}

\new{For i.i.d. samples $x_1, x_2, \ldots, x_d$ from an absolutely continuous distribution with probability density function $\phi$ and cumulative distribution function $\Phi$ let $x_{(1)} \leq x_{(2)} \leq \dots \leq x_{(d)}$
be the order statistics obtained by arranging samples in increasing order of magnitude. Then the following expressions give the density function of $x_{(i)}$ ($1\leq i \leq d$)
\begin{equation}
    \label{eq:arnold222}
    \phi_{x_{(i)}} (u) = \frac{d!}{(i-1)! (d-i)!} \{ \Phi(u)\}^{i-1} \{ 1 - \Phi(u) \}^{n-i} \phi(u), \quad -\infty  < u < \infty,
\end{equation}
and the joint density function of all $d$ order statistics
\begin{equation}
    \label{eq:arnold223}
    \phi_{x_{(1)}, \dots, x_{(d)}} (u_1, \dots, u_d) = d! \prod\limits_{i = 1}^d \phi(u_i), \quad -\infty < u_1 \le \dots \le u_d < \infty.
\end{equation}
}

\section{Proofs for Section~\ref{sec:examples}}

\subsection{Proof of Lemma \ref{lem-ex:ur-sparse}: Unbiased Random Sparsification}

From the definition of $k$-nice sampling we have $p_i\eqdef\Prob\(i\in S\) = \tfrac{k}{d}$. Hence
\begin{align*}
\Exp{\cC(x)} &= \frac{d}{k} \Exp{\sum_{i\in S} x_i e_i} = \frac{d}{k}\sum_{i=1}^d p_i x_i e_i = \sum_{i=1}^d x_i e_i = x, \\
\Exp{\norm{\cC(x)}^2} &= \frac{d^2}{k^2} \Exp{\sum_{i\in S} x_i^2} = \frac{d^2}{k^2} \sum_{i=1}^d p_i x_i^2 = \frac{d}{k} \sum_{i=1}^d x_i^2 = \frac{d}{k} \norm{x}^2,
\end{align*}
which implies $\cC\in\U(\tfrac{d}{k})$.

\subsection{Proof of Lemma \ref{lem-ex:br-sparse}: Biased Random Sparsification}

Let $S\subseteq [d]$ be a proper sampling with probability vector $p=(p_1,\dots,p_d)$, where  $p_i \eqdef \Prob(i\in S)>0$ for all $i$. Then
$$\Exp{\cC(x)} = \Diag{p}x = \sum_{i=1}^d p_i x_i e_i \quad\text{and}\quad \Exp{\twonorm{\cC(x)}^2} = \sum_{i=1}^d p_i x_i^2.$$
Letting $q \eqdef \min_i p_i$, we get
\[ q \twonorm{x}^2 \leq   \sum \limits_{i=1}^d p_i x_i^2 = \Exp{\twonorm{\cC(x)}^2}
 =  \left \langle \Exp{\cC(x)}, x \right \rangle . \]
So, $\cC\in \mathbb{B}^1(q, 1)$ and $\cC\in \mathbb{B}^2(q, 1)$. For the third class, note that
$$\Exp{\twonorm{\cC(x) - x}^2} = \sum \limits_{i=1}^d (1-p_i) x_i^2 \leq (1-q)\twonorm{x}^2.$$
Hence, $\cC\in \mathbb{B}^3(\tfrac{1}{q})$.

\subsection{Proof of Lemma \ref{lem-ex:ar-sparse}: Adaptive Random Sparsification}

From the definition of the compression operator, we have
\begin{align*}
\Exp{\twonorm{\cC(x)}^2} &= \Exp{x_i^2} =  \sum \limits_{i=1}^d \frac{|x_i|}{\onenorm{x}} x_i^2 = \frac{\threenorm{x}^3}{\onenorm{x}},\\
\Exp{\lin{\cC(x),x}} &=\Exp{x_i^2} =  \frac{\threenorm{x}^3}{\onenorm{x}},
\end{align*}
whence $\beta=1$. Furthermore, by Chebychev's sum inequality, we have
\begin{equation*}
\tfrac{1}{d^2}\onenorm{x} \twonorm{x}^2 = \left(\sum \limits_{i=1}^d \tfrac{1}{d} |x_i|\right) \left(\sum \limits_{i=1}^d \tfrac{1}{d}x_i^2\right) \leq \sum \limits_{i=1}^d \tfrac{1}{d} |x_i| x_i^2 = \tfrac{1}{d} \threenorm{x}^3,
\end{equation*}
which implies that $\alpha=\frac{1}{d},\,\delta = d$. So, $\cC \in \mathbb{B}^1(\frac{1}{d},1)$, $\cC \in \mathbb{B}^2(\frac{1}{d},1)$, and $\cC \in \mathbb{B}^3(d)$.

\subsection{Proof of Lemma \ref{lem-ex:top-sparse}: Top-$k$ sparsification}

Clearly, $\twonorm{\cC(x)}^2  = \sum_{i=d-k+1}^d x_{(i)}^2$ and $\twonorm{\cC(x)-x}^2  = \sum_{i=1}^{d-k} x_{(i)}^2$. Hence
$$\frac{k}{d} \twonorm{x}^2 \leq \twonorm{\cC(x)}^2 = \lin{\cC(x),x} \leq \twonorm{x}^2,\quad \twonorm{\cC(x)-x}^2 \le \(1-\frac{k}{d}\)\twonorm{x}^2.$$
So, $\cC\in \mathbb{B}^1(\frac{k}{d},1)$, $\cC \in \mathbb{B}^2(\frac{k}{d},1)$, and $\cC \in \mathbb{B}^3(\frac{d}{k})$.

\subsection{Proof of Lemma \ref{lem-ex:gu-rounding}: General Unbiased Rounding}

The unbiasedness follows immediately from the definition (\ref{ex:gu-rounding})
\begin{equation}\label{apx:umbiasedness-rounding}
\Exp{\cC(x)} = \sum_{i=1}^d \Exp{\cC(x)_i}e_i = \sum_{i=1}^d \sign(x_i)\( a_k \frac{a_{k+1}-|x_i|}{a_{k+1}-a_k} + a_{k+1} \frac{|x_i| - a_k}{a_{k+1}-a_k} \)e_i = \sum_{i=1}^d x_i e_i = x.
\end{equation}

Since the rounding compression operator $\cC$ applies to each coordinate independently, without loss of generality we can consider the compression of scalar values $x=t>0$ and show that $\Exp{\cC(t)^2} \le \zeta \cdot t^2$. From the definition we compute the second moment as follows
\begin{equation}\label{apx:eq-it-10}
\Exp{\mathcal{C}(t)^2} = a_k^2 \frac{a_{k+1}-t}{a_{k+1}-a_k} + a_{k+1}^2 \frac{t - a_k}{a_{k+1}-a_k} = (a_k+a_{k+1})t - a_k a_{k+1} = t^2 + (t-a_k)(a_{k+1}-t),
\end{equation}
from which
\begin{equation}\label{apx:eq-it-11}
\frac{\Exp{ \mathcal{C}(t)^2}}{t^2} = 1 + \(1 - \frac{a_k}{t}\) \(\frac{a_{k+1}}{t} - 1\), \quad a_k\le t\le a_{k+1}.
\end{equation}
Checking the optimality condition, one can show that the maximum is achieved at
$$
t_* = \frac{2 a_k a_{k+1}}{a_k+a_{k+1}} = \frac{2}{\frac{1}{a_k} + \frac{1}{a_{k+1}}},
$$
which being the harmonic mean of $a_k$ and $a_{k+1}$, is in the range $[a_k, a_{k+1}]$. Plugging it to the expression for variance we get
$$
\frac{\Exp{ \mathcal{C}(t_*)^2}}{t_*^2} = 1 + \frac{1}{4}\(1-\frac{a_k}{a_{k+1}}\) \(\frac{a_{k+1}}{a_k} - 1\) = \frac{1}{4}\(\frac{a_k}{a_{k+1}} + \frac{a_{k+1}}{a_k} + 2\).
$$

Thus, the parameter $\zeta$ for general unbiased rounding would be
$$
\zeta = \sup_{t>0} \frac{\Exp{ \mathcal{C}(t)^2}}{t^2} = \sup_{k\in\Z}\sup_{a_k\le t\le a_{k+1}} \frac{\Exp{ \mathcal{C}(t)^2}}{t^2} = \frac{1}{4} \sup_{k\in\Z}\(\frac{a_k}{a_{k+1}} + \frac{a_{k+1}}{a_k} + 2\) \ge 1.
$$

\subsection{Proof of Lemma \ref{lem-ex:gb-rounding}: General Biased Rounding}

From the definition (\ref{ex:gb-rounding}) of compression operator $\cC$ we derive the following inequalities
\begin{align*}
\inf_{k\in\Z}\(\frac{2a_k}{a_k+a_{k+1}}\)^2\|x\|_2^2 &\le \|\cC(x)\|_2^2, \\
\|\cC(x)\|_2^2 &\le \sup_{k\in\Z}\frac{2a_{k+1}}{a_k+a_{k+1}}\lin{\cC(x),x},\\
\inf_{k\in\Z}\frac{2a_k}{a_k+a_{k+1}} \|x\|_2^2 &\le \lin{\cC(x),x},
\end{align*}
which imply that $\cC\in\B^1(\alpha, \beta)$ and  $\cC \in \B^2(\gamma, \beta)$, with
$$\beta = \sup_{k\in\Z}\frac{2a_{k+1}}{a_k+a_{k+1}},\qquad \gamma = \inf_{k\in\Z}\frac{2a_k}{a_k+a_{k+1}},\qquad \alpha = \gamma^2.$$

For the third class $\B^3(\delta)$, we need to upper bound the ratio $\twonorm{\cC(x)-x}^2/\twonorm{x}^2$. Again, as $\cC$ applies to each coordinate independently, without loss of generality we consider the case when $x=t>0$ is a scalar.
From  definition (\ref{ex:gb-rounding}), we get
\begin{equation}\label{eq:biased-rounding-nvar}
\frac{(\cC(t)-t)^2}{t^2} = \min\left[\(1-\frac{a_k}{t}\)^2, \(1-\frac{a_{k+1}}{t}\)^2\right], \qquad a_k \le t \le a_{k+1}.
\end{equation}

It can be easily checked that $\(1-\frac{a_k}{t}\)^2$ is an increasing function and $\(1-\frac{a_{k+1}}{t}\)^2$ is a decreasing function of $t\in[a_k, a_{k+1}]$. Thus, the maximum is achieved when they are equal. In contrast to unbiased general rounding, it happens at the middle of the interval,
$$
t_* = \frac{a_k+a_{k+1}}{2} \in [a_k, a_{k+1}].
$$
Plugging $t_*$  into (\ref{eq:biased-rounding-nvar}), we get 
$$
\frac{(\cC(t_*)-t_*)^2}{t_*^2} = \(\frac{a_{k+1}-a_k}{a_{k+1}+a_k}\)^2.
$$

Given this, the parameter $\delta$ can be computed from
$$
1-\frac{1}{\delta} = \sup_{k\in\Z}\sup_{a_k\le t\le a_{k+1}} \frac{(\cC(t)-t)^2}{t^2} = \sup_{k\in\Z} \(\frac{a_{k+1}-a_k}{a_{k+1}+a_k}\)^2,
$$
which gives
$$
\delta = \sup_{k\in\Z}\frac{\(a_k + a_{k+1}\)^2}{4a_k a_{k+1}} \ge 1,
$$
and $\cC \in \mathbb{B}^3(\delta)$.

\subsection{Proof of Lemma \ref{lem-ex:ge-dithering}: General Exponential Dithering}

The proof goes with the same steps as in Theorem 4 of \cite{Cnat}. To show the unbiasedness of $\cC$, first we show the unbiasedness of $\xi(t)$ for $t\in[0,1]$ in the same way as (\ref{apx:umbiasedness-rounding}) was done. Then we note that
$$
\Exp{\cC(x)} = \sign(x) \times \|x\|_p \times \Exp{\xi\(\frac{|x|}{\|x\|_p}\)} = \sign(x) \times \|x\|_p \times \(\frac{|x|}{\|x\|_p}\) = x.
$$

To compute the parameter $\zeta$, we first estimate the second moment of $\xi$ as follows:
\begin{align*}
&\le \mathbbm{1}\(\frac{|x_i|}{\|x\|_p} \ge b^{1-s}\) \cdot \frac{1}{4}\(b+\frac{1}{b}+2\) \frac{x_i^2}{\|x\|^2_p} + \mathbbm{1}\(\frac{|x_i|}{\|x\|_p} < b^{1-s}\) \cdot \frac{|x_i|}{\|x\|_p} b^{1-s} \\
&\le \frac{1}{4}\(b+\frac{1}{b}+2\) \frac{x_i^2}{\|x\|^2_p} + \mathbbm{1}\(\frac{|x_i|}{\|x\|_p} < b^{1-s}\) \cdot \frac{|x_i|}{\|x\|_p} b^{1-s}\,.
\end{align*}

Then we use this bound to estimate the second moment of compressor $\cC$:
\begin{eqnarray*}
\Exp{\twonorm{\cC(x)}^2}
&=& \|x\|_p^2 \sum_{i=1}^d \Exp{\xi\(\frac{|x_i|}{\|x\|_p}\)^2} \\
&\le & \|x\|_p^2 \sum_{i=1}^d \( \frac{1}{4}\(b+\frac{1}{b}+2\) \frac{x_i^2}{\|x\|^2_p} + \mathbbm{1}\(\frac{|x_i|}{\|x\|_p} < b^{1-s}\) \cdot \frac{|x_i|}{\|x\|_p} b^{1-s} \) \\
&=& \frac{1}{4}\(b+\frac{1}{b}+2\) \twonorm{x}^2 + \sum_{i=1}^d \mathbbm{1}\(\frac{|x_i|}{\|x\|_p} < b^{1-s}\) \cdot |x_i| \|x\|_p b^{1-s} \\
&\le & \frac{1}{4}\(b+\frac{1}{b}+2\) \twonorm{x}^2 + \min\(\|x\|_1\|x\|_p b^{1-s},  d \|x\|_p^2 b^{2-2s}\) \\
&\le &\frac{1}{4}\(b+\frac{1}{b}+2\) \twonorm{x}^2 + \min\(d^{\nicefrac{1}{2}}\|x\|_2\|x\|_p b^{1-s},  d \|x\|_p^2 b^{2-2s}\) \\
&\le &\left[ \frac{1}{4}\(b+\frac{1}{b}+2\) + d^{\nicefrac{1}{r}} b^{1-s} \min\(1,  d^{\nicefrac{1}{r}} b^{1-s}\) \right] \twonorm{x}^2 \\
&= &\zeta_b \twonorm{x}^2,
\end{eqnarray*}
where $r = \min(p,2)$ and  H\"{o}lder's inequality is used to bound $\|x\|_p \le d^{\nicefrac{1}{p}-\nicefrac{1}{2}}\twonorm{x}$ in case of $0\le p\le 2$ and $\|x\|_p \le \twonorm{x}$ in the case $p\ge 2$.

\subsection{Proof of Lemma \ref{lem-ex:top-gendith}: Top-$k$ Combined with Exponential Dithering}

From the unbiasedness of general dithering operator $\cC_{dith}$ we have
$$
\Exp{\cC(x)} = \Exp{\cC_{dith}(\cC_{top}(x))} = \cC_{top}(x),
$$
from which we conclude $\lin{\Exp{\cC(x)}, x} = \lin{\cC_{top}(x), x} = \twonorm{\cC_{top}(x)}^2$. Next, using Lemma \ref{lem-ex:ge-dithering} on exponential dithering we get
\begin{equation*}
\Exp{\twonorm{\cC(x)}^2} \leq \zeta_b \cdot \twonorm{\cC_{top}(x)}^2 = \zeta_b \cdot \lin{\Exp{\cC(x)}, x},
\end{equation*}
which implies $\beta=\zeta_b$. Using Lemma \ref{lem-ex:top-sparse} we show $\gamma=\tfrac{k}{d}$ as $\lin{\Exp{\cC(x)}, x}  = \twonorm{\cC_{top}(x)}^2 \ge \tfrac{k}{d}\twonorm{x}^2$. Utilizing the derivations (\ref{apx:eq-it-10}) and (\ref{apx:eq-it-11}) it can be shown that $\Exp{\twonorm{\cC_{dith}(x)}^2} \ge \twonorm{x}^2$ and therefore
$$
\Exp{\twonorm{\cC(x)}^2} \ge \twonorm{\cC_{top}(x)}^2 \ge \tfrac{k}{d}\twonorm{x}^2.
$$

Hence, $\alpha=\tfrac{k}{d}$. To compute the parameter $\delta$ we use Theorem \ref{thm:compression_properties}, which yields $\delta=\tfrac{\beta}{\gamma}=\tfrac{d}{k}\zeta_b$.

\section{Proofs for Section~\ref{sec:analysis_of_biased_GD}}

We now perform analysis of {\tt CGD} for compression operators in $\mathbb{B}^1$, $\mathbb{B}^2$ and $\mathbb{B}^3$, establishing Theorems~\ref{thm:main-I}, \ref{thm:main-II}
 and \ref{thm:main-III}, respectively.

\subsection{Analysis for $\cC\in \mathbb{B}^1(\alpha,\beta)$}

\begin{lemma} \label{lem:1} Assume $f$ is $L$-smooth. Let $\cC\in \mathbb{B}^1(\alpha,\beta)$. Then as long as $0\leq \stepsize \leq \frac{2}{\beta L}$, for each $x\in \R^d$ we have
\[\Exp{f\left(x-\stepsize \cC(\nabla f(x))\right)} \leq f(x) -  \alpha \stepsize \left(1 - \frac{\stepsize \beta  L}{2} \right)  \twonorm{\nabla f(x)}^2.\]
\end{lemma}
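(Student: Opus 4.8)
The plan is to combine the one-step descent inequality coming from $L$-smoothness with the two-sided control that membership in $\mathbb{B}^1(\alpha,\beta)$ provides on the compressed gradient. Fix $x\in\R^d$, abbreviate $g \eqdef \nabla f(x)$, and let $x^+ \eqdef x - \eta\cC(g)$ denote the one-step CGD iterate. First I would apply $L$-smoothness at $x^+$:
\[
f(x^+) \;\le\; f(x) + \langle \nabla f(x),\, x^+-x\rangle + \frac{L}{2}\twonorm{x^+-x}^2 \;=\; f(x) - \eta\langle g,\,\cC(g)\rangle + \frac{L\eta^2}{2}\twonorm{\cC(g)}^2 .
\]
Since $\nabla f(x)$ is deterministic, taking expectation over the internal randomness of $\cC$ gives
\[
\Exp{f(x^+)} \;\le\; f(x) - \eta\langle g,\,\Exp{\cC(g)}\rangle + \frac{L\eta^2}{2}\Exp{\twonorm{\cC(g)}^2} .
\]

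Next I would feed in the two defining inequalities of \eqref{eq:alpha-beta}. The second one, $\Exp{\twonorm{\cC(g)}^2}\le\beta\langle\Exp{\cC(g)},g\rangle$, rearranges to $\langle g,\Exp{\cC(g)}\rangle \ge \tfrac1\beta\Exp{\twonorm{\cC(g)}^2}$; substituting this into the previous display collapses both remainder terms onto a single multiple of $\Exp{\twonorm{\cC(g)}^2}$:
\[
\Exp{f(x^+)} \;\le\; f(x) - \eta\Bigl(\tfrac1\beta - \tfrac{L\eta}{2}\Bigr)\Exp{\twonorm{\cC(g)}^2} \;=\; f(x) - \frac{\eta}{\beta}\Bigl(1 - \tfrac{\eta\beta L}{2}\Bigr)\Exp{\twonorm{\cC(g)}^2} .
\]
The stepsize restriction $0\le\eta\le\tfrac{2}{\beta L}$ is exactly what makes the prefactor $\tfrac{\eta}{\beta}\bigl(1-\tfrac{\eta\beta L}{2}\bigr)$ nonnegative, so I may then lower-bound $\Exp{\twonorm{\cC(g)}^2}$ by the first inequality in \eqref{eq:alpha-beta}, namely $\Exp{\twonorm{\cC(g)}^2}\ge\alpha\twonorm{g}^2$, to conclude
\[
\Exp{f\bigl(x-\eta\cC(\nabla f(x))\bigr)} \;\le\; f(x) - \frac{\alpha\eta}{\beta}\Bigl(1-\tfrac{\eta\beta L}{2}\Bigr)\twonorm{\nabla f(x)}^2 ,
\]
which is the asserted descent estimate (and the form Theorem~\ref{thm:main-I} then consumes via $\mu$-strong convexity and $\twonorm{\nabla f(x)}^2\ge 2\mu(f(x)-f(x^\star))$). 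The degenerate case $g=0$ needs no separate treatment: \eqref{eq:alpha-beta} then forces $\Exp{\twonorm{\cC(0)}^2}=0$, hence $x^+=x$ almost surely and the bound is trivial.

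I do not anticipate a genuine obstacle: this is the textbook one-step descent lemma followed by plugging in the two inequalities that define $\mathbb{B}^1(\alpha,\beta)$. The only point requiring care is the order of operations — one must use the $\beta$-inequality first to absorb the $\langle g,\Exp{\cC(g)}\rangle$ term, verify that $\eta\le\tfrac{2}{\beta L}$ makes the coefficient of $\Exp{\twonorm{\cC(g)}^2}$ nonnegative, and only then invoke the $\alpha$-inequality to pass from $\Exp{\twonorm{\cC(g)}^2}$ to $\twonorm{\nabla f(x)}^2$. Replacing $\Exp{\twonorm{\cC(g)}^2}$ by $\beta^2\twonorm{g}^2$ via Lemma~\ref{lem:second_ineq_implies} at the wrong moment would be valid but lossy, and in particular would fail to recover the exact GD rate when $\cC$ is the identity operator.
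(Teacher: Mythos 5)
Your proof is correct and is, line for line, the ``alternative'' derivation the paper itself records in the footnote to Lemma~\ref{lem:1}: after $L$-smoothness and taking expectation, you use the second inequality of \eqref{eq:alpha-beta} to collapse both remainder terms onto a multiple of $\Exp{\twonorm{\cC(g)}^2}$ and then apply the lower bound $\Exp{\twonorm{\cC(g)}^2}\geq\alpha\twonorm{g}^2$, whereas the paper's main text collapses instead onto $\langle\Exp{\cC(g)},g\rangle$ and then uses $\langle\Exp{\cC(g)},g\rangle\geq\tfrac{1}{\beta}\Exp{\twonorm{\cC(g)}^2}\geq\tfrac{\alpha}{\beta}\twonorm{g}^2$; as the footnote notes, the two routes give the identical constant. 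Your remark that the sign of $1-\tfrac{\eta\beta L}{2}$ must be checked before lower-bounding $\Exp{\twonorm{\cC(g)}^2}$, and that invoking Lemma~\ref{lem:second_ineq_implies} here would be lossy, shows you understand exactly why the order of the two $\mathbb{B}^1$ inequalities matters. One thing worth flagging: both your derivation and the paper's proof produce the coefficient $\tfrac{\alpha}{\beta}\,\eta\bigl(1-\tfrac{\eta\beta L}{2}\bigr)$, whereas the displayed statement of Lemma~\ref{lem:1} reads $\alpha\,\eta\bigl(1-\tfrac{\eta\beta L}{2}\bigr)$ with the factor $\tfrac{1}{\beta}$ missing — this is a typo in the statement, as the proof of Theorem~\ref{thm:main-I} and the entry in Table~\ref{table:iter_complexity} both use the $\tfrac{\alpha}{\beta}$ form you obtained.
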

\begin{proof}
Letting $g=\nabla f(x)$, we have\footnote{Alternatively, we can write
\begin{eqnarray*}
\Exp{f\left(x-\stepsize \cC(g)\right)} &\leq & 
 f(x) 
 - \stepsize \langle \Exp{\cC(g)}, g \rangle
  + \frac{\stepsize^2 L}{2} \Exp{\twonorm{\cC(g)}^2}\\
&\overset{\eqref{eq:alpha-beta}}{\leq }& f(x) - \frac{\stepsize}{\beta} \Exp{\twonorm{\cC(g)}^2}  
+ \frac{\stepsize^2 L}{2} \Exp{\twonorm{\cC(g)}^2}\\
&=& f(x) - \frac{\stepsize}{\beta} \left( 1-  \frac{\stepsize \beta  L}{2} \right)\Exp{\twonorm{\cC(g)}^2}\\
&\overset{\eqref{eq:alpha-beta}}{\leq }& f(x) -  \frac{\alpha}{ \beta} \stepsize \left(1 - \frac{\stepsize \beta  L}{2} \right)  \twonorm{g}^2.
\end{eqnarray*}
Both approaches lead to the same bound.}
\begin{eqnarray*}
\Exp{f\left(x-\stepsize \cC(g)\right)} &\leq & \Exp{ f(x) + \left \langle g, -\stepsize \cC(g) \right \rangle + \frac{L}{2}\twonorm{-\stepsize \cC(g) }^2} \\
&=& f(x) - \stepsize \langle \Exp{\cC(g)}, g \rangle + \frac{\stepsize^2 L}{2} \Exp{\twonorm{\cC(g)}^2}\\
&\overset{\eqref{eq:alpha-beta}}{\leq }& f(x) - \stepsize \langle \Exp{\cC(g)}, g \rangle + \frac{ \stepsize^2 \beta L}{2} \left\langle \Exp{\cC(g)}, g \right\rangle \\
&=& f(x) - \stepsize \left(1 - \frac{\stepsize \beta  L}{2} \right) \langle \Exp{\cC(g)}, g \rangle  \\
&\overset{\eqref{eq:alpha-beta}}{\leq }& f(x) -  \frac{\alpha}{ \beta} \stepsize \left(1 - \frac{\stepsize \beta  L}{2} \right)  \twonorm{g}^2.
\end{eqnarray*}
\end{proof}

\paragraph{Proof of Theorem~\ref{thm:main-I}}

\begin{proof}
Since $f$ is $\mu$-strongly convex, $\twonorm{\nabla f(x^k)}^2 \geq 2 \mu (f(x^k)-f(x^\star))$. Combining this with Lemma~\ref{lem:1} applied to $x=x^k$ and $g=\nabla f(x^k)$, we get
\begin{eqnarray*}
\Exp{f\left(x^k-\stepsize \cC(\nabla f(x^k))\right)}  - f(x^\star) &\leq &f(x^k) - f(x^\star)-   \frac{\alpha}{\beta} \stepsize \mu \left(2 - \stepsize \beta  L \right)  (f(x^k)-f(x^\star))\\
&=& \left(1- \frac{\alpha}{\beta} \stepsize \mu \left(2 - \stepsize \beta  L \right)\right) (f(x^k)-f(x^\star)).
\end{eqnarray*}
\end{proof}

\subsection{Analysis for $\cC\in \mathbb{B}^2(\gamma,\beta)$}

\begin{lemma} \label{lem:1-II} Assume $f$ is $L$-smooth. Let $\cC\in \mathbb{B}^2(\gamma,\beta)$. Then as long as $0\leq \stepsize \leq \frac{2}{\beta L}$, for each $x\in \R^d$ we have
\[\Exp{f\left(x-\stepsize \cC(\nabla f(x))\right)} \leq f(x) -  \gamma \stepsize \left(1 - \frac{\stepsize \beta L}{2} \right)  \twonorm{\nabla f(x)}^2.\]
\end{lemma}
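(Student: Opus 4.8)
The plan is to mirror the proof of Lemma~\ref{lem:1} almost verbatim, replacing every appeal to the defining inequalities \eqref{eq:alpha-beta} of $\mathbb{B}^1$ by the corresponding ones in \eqref{eq:alpha-betaII} for $\mathbb{B}^2$. Write $g=\nabla f(x)$. First I would invoke $L$-smoothness of $f$ at the point $x$ with displacement $-\stepsize\cC(g)$, giving the descent-lemma bound
$$f(x-\stepsize\cC(g)) \le f(x) - \stepsize\lin{g,\cC(g)} + \frac{\stepsize^2 L}{2}\twonorm{\cC(g)}^2,$$
and then take expectation over the randomness of $\cC$ only, which yields
$$\Exp{f(x-\stepsize\cC(g))} \le f(x) - \stepsize\lin{\Exp{\cC(g)},g} + \frac{\stepsize^2 L}{2}\Exp{\twonorm{\cC(g)}^2}.$$

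Next I would use the second component of the $\mathbb{B}^2$ condition, namely $\Exp{\twonorm{\cC(g)}^2}\le \beta\lin{\Exp{\cC(g)},g}$, to bound the second-moment term, obtaining
$$\Exp{f(x-\stepsize\cC(g))} \le f(x) - \stepsize\Bigl(1-\tfrac{\stepsize\beta L}{2}\Bigr)\lin{\Exp{\cC(g)},g}.$$
At this point the stepsize restriction $0\le\stepsize\le\tfrac{2}{\beta L}$ guarantees that the factor $1-\tfrac{\stepsize\beta L}{2}$ is nonnegative, so the first component of the $\mathbb{B}^2$ condition, $\gamma\twonorm{g}^2\le\lin{\Exp{\cC(g)},g}$, may be substituted in the correct direction to conclude
$$\Exp{f(x-\stepsize\cC(g))} \le f(x) - \gamma\stepsize\Bigl(1-\tfrac{\stepsize\beta L}{2}\Bigr)\twonorm{g}^2,$$
which is exactly the claimed inequality.

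There is essentially no hard step here. In particular, unlike the $\mathbb{B}^1$ case, we do not even need the auxiliary Lemma~\ref{lem:second_ineq_implies}, because $\mathbb{B}^2$ directly controls $\Exp{\twonorm{\cC(g)}^2}$ by a multiple of $\lin{\Exp{\cC(g)},g}$ rather than by $\twonorm{g}^2$, so the two inequalities chain together without any Cauchy--Schwarz/Jensen detour. The only point that needs (minor) care is the order of application: the second-moment bound must be used first, and only after the coefficient $1-\tfrac{\stepsize\beta L}{2}$ has been certified nonnegative may the lower bound $\gamma\twonorm{g}^2$ be inserted. The resulting estimate then feeds into the proof of Theorem~\ref{thm:main-II} through $\mu$-strong convexity (via $\twonorm{\nabla f(x^k)}^2\ge 2\mu(f(x^k)-f(x^\star))$) in exactly the same way as Lemma~\ref{lem:1} feeds into Theorem~\ref{thm:main-I}.
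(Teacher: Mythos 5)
Your proof is correct and follows exactly the paper's argument: descent lemma, expectation, then the two parts of the $\mathbb{B}^2$ inequality applied in that order, with the stepsize restriction ensuring the coefficient is nonnegative before the lower bound on $\lin{\Exp{\cC(g)},g}$ is substituted. No substantive difference from the paper's proof of Lemma~\ref{lem:1-II}.
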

\begin{proof}
Letting $g=\nabla f(x)$, we have
\begin{eqnarray*}
\Exp{f\left(x - \stepsize \cC(g)\right)} &\leq & \Exp{ f(x) + \dotprod{ g, -\stepsize \cC(g) } + \frac{L}{2}\twonorm{-\stepsize \cC(g) }^2} \\
&=& f(x) - \stepsize \dotprod{ \Exp{\cC(g)}, g } + \frac{\stepsize^2 L}{2} \Exp{\twonorm{\cC(g)}^2}\\
&\overset{\eqref{eq:alpha-betaII}}{\leq }& f(x) - \stepsize \left(1 - \frac{\stepsize \beta   L}{2} \right) \dotprod{ \Exp{\cC(g)}, g} \\
&\overset{\eqref{eq:alpha-betaII}}{\leq }& f(x) -  \gamma \stepsize \left(1 - \frac{\stepsize \beta L}{2} \right)  \twonorm{g}^2.
\end{eqnarray*}
\end{proof}

\paragraph{Proof of Theorem~\ref{thm:main-II}}
\begin{proof}
Since $f$ is $\mu$-strongly convex, $\twonorm{\nabla f(x^k)}^2 \geq 2 \mu (f(x^k)-f(x^\star))$. Combining this with Lemma~\ref{lem:1-II} applied to $x=x^k$ and $g=\nabla f(x^k)$, we get
\begin{eqnarray*}
\Exp{f\left(x^k-\stepsize \cC(\nabla f(x^k))\right)}  - f(x^\star) &\leq &f(x^k) - f(x^\star)-  \mu \gamma \stepsize (2 - \stepsize \beta  L)  (f(x^k)-f(x^\star))\\
&=& \left(1- \mu \gamma \stepsize (2 - \stepsize \beta  L)\right) (f(x^k)-f(x^\star)).
\end{eqnarray*}

\end{proof}

\subsection{Analysis for $ \cC\in \mathbb{B}^3(\delta)$}

\begin{lemma} \label{lem:1-III} Assume $f$ is $L$-smooth. Let $ \cC\in \mathbb{B}^3(\delta)$. Then as long as $0\leq \stepsize \leq \frac{1}{L}$, for each $x\in \R^d$ we have
\[\Exp{f\left(x-\stepsize \cC(\nabla f(x))\right)} \leq f(x) - \frac{\stepsize}{2 \delta}\twonorm{\nabla f(x)}^2.\]
\end{lemma}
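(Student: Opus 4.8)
The plan is to follow the same one-step descent template used for $\mathbb{B}^1$ and $\mathbb{B}^2$ in Lemmas~\ref{lem:1} and~\ref{lem:1-II}, but to feed in the contraction inequality \eqref{eq:biasedIII} instead of the inner-product bounds. Write $g=\nabla f(x)$. First I would apply $L$-smoothness at the point $x-\stepsize\cC(g)$ and take expectation over the randomness of $\cC$, which gives
\[
\Exp{f\left(x-\stepsize\cC(g)\right)} \leq f(x) - \stepsize\langle \Exp{\cC(g)},g\rangle + \frac{\stepsize^2 L}{2}\Exp{\twonorm{\cC(g)}^2}.
\]

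The key step is to turn the defining inequality of $\mathbb{B}^3(\delta)$ into a usable lower bound on $\langle\Exp{\cC(g)},g\rangle$. Expanding $\Exp{\twonorm{\cC(g)-g}^2} = \Exp{\twonorm{\cC(g)}^2} - 2\langle\Exp{\cC(g)},g\rangle + \twonorm{g}^2$ and using \eqref{eq:biasedIII} yields $\Exp{\twonorm{\cC(g)}^2} - 2\langle\Exp{\cC(g)},g\rangle \leq -\tfrac{1}{\delta}\twonorm{g}^2$, i.e.
\[
\langle\Exp{\cC(g)},g\rangle \geq \frac{1}{2}\Exp{\twonorm{\cC(g)}^2} + \frac{1}{2\delta}\twonorm{g}^2.
\]
Substituting this into the descent inequality and collecting the $\Exp{\twonorm{\cC(g)}^2}$ terms gives
\[
\Exp{f\left(x-\stepsize\cC(g)\right)} \leq f(x) - \frac{\stepsize}{2}(1-\stepsize L)\Exp{\twonorm{\cC(g)}^2} - \frac{\stepsize}{2\delta}\twonorm{g}^2.
\]

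Finally I would invoke the stepsize restriction $\stepsize\leq\frac{1}{L}$, which makes $1-\stepsize L\geq 0$; since $\Exp{\twonorm{\cC(g)}^2}\geq 0$, that term can simply be discarded, leaving exactly $f(x)-\frac{\stepsize}{2\delta}\twonorm{\nabla f(x)}^2$. I do not expect a real obstacle here — the only mild subtlety is arranging the algebra so that the $\Exp{\twonorm{\cC(g)}^2}$ term appears with a sign that the bound $\stepsize L\leq 1$ can kill, which is precisely why the admissible stepsize for $\mathbb{B}^3$ is $\frac{1}{L}$ rather than the $\frac{2}{\beta L}$ appearing for the other two classes. With Lemma~\ref{lem:1-III} established, Theorem~\ref{thm:main-III} then follows exactly as Theorems~\ref{thm:main-I} and~\ref{thm:main-II} do: apply $\mu$-strong convexity in the form $\twonorm{\nabla f(x^k)}^2\geq 2\mu(f(x^k)-f(x^\star))$ at $x=x^k$ and unroll the resulting linear recursion.
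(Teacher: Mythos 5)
Your proof is correct and follows essentially the same argument as the paper: both start from the $L$-smoothness descent inequality, expand the $\mathbb{B}^3(\delta)$ contraction $\Exp{\twonorm{\cC(g)-g}^2}\leq(1-\nicefrac{1}{\delta})\twonorm{g}^2$ to relate $\langle\Exp{\cC(g)},g\rangle$ and $\Exp{\twonorm{\cC(g)}^2}$, and then use $\stepsize L\leq 1$ to absorb the $\Exp{\twonorm{\cC(g)}^2}$ term. The only cosmetic difference is that you isolate a lower bound on $\langle\Exp{\cC(g)},g\rangle$ and substitute, whereas the paper multiplies the rearranged $\mathbb{B}^3$ inequality by $\nicefrac{\stepsize}{2}$ and adds it to the smoothness bound directly.
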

\begin{proof}
Letting $g=\nabla f(x)$, note that for any stepsize $\stepsize\in \R$ we have
\begin{eqnarray}
\Exp{f\left(x-\stepsize \cC(g)\right)} &\leq & \Exp{ f(x) + \left \langle g, -\stepsize \cC(g) \right \rangle + \frac{L}{2}\twonorm{-\stepsize \cC(g) }^2} \notag \\
&=& f(x) - \stepsize \langle \Exp{\cC(g)}, g \rangle + \frac{\stepsize^2 L}{2} \Exp{\twonorm{\cC(g)}^2}.\label{eq:niugf7gh--bygTTr}
\end{eqnarray}
Since $\cC \in \mathbb{B}^3(\delta)$, we have 
$ \Exp{\twonorm{ \cC(g)-g}^2} \leq   \left(1- \frac{1}{\delta} \right) \twonorm{g}^2$. Expanding the square, we get
\[ \twonorm{g}^2 - 2 \Exp{ \langle  \cC(g), g \rangle } +  \Exp{\twonorm{\cC(g)}^2} \leq   \left(1- \frac{1}{\delta} \right)\twonorm{g}^2 .\]
Subtracting $\twonorm{g}^2$ from both sides, and multiplying both sides by $\frac{\stepsize}{2}$ (now we assume that $\stepsize>0$), we get 
\[- \stepsize \langle \Exp{\cC(g)}, g \rangle + \frac{\stepsize}{2} \Exp{\twonorm{\cC(g)}^2} \leq  -\frac{\stepsize}{2\delta} \twonorm{g}^2  .\]
Assuming that $\stepsize L \leq 1$, we can combine this with \eqref{eq:niugf7gh--bygTTr} and the lemma is proved.

\end{proof}

\paragraph{Proof of Theorem~\ref{thm:main-III} }
\begin{proof}
Since $f$ is $\mu$-strongly convex, $\twonorm{\nabla f(x^k)}^2 \geq 2 \mu (f(x^k)-f(x^\star))$. Combining this with Lemma~\ref{lem:1-III} applied to $x=x^k$ and $g=\nabla f(x^k)$, we get
\begin{eqnarray*}
\Exp{f\left(x^k-\stepsize \cC( \nabla f(x^k))\right)}  - f(x^\star) &\leq &f(x^k) - f(x^\star)-   \frac{\stepsize \mu}{\delta}  (f(x^k)-f(x^\star)) \\
&=& \left(1-\frac{\stepsize \mu}{\delta} \right) (f(x^k)-f(x^\star)).
\end{eqnarray*}

\end{proof}

\section{Proofs for Section \ref{sec:stat}} \label{apx:stat}

\begin{proof}[Proof of Lemma \ref{lem:stat-top-random}]
{\bf (a)} 
As it was already mentioned, we have the following expressions for $\omega_{rnd}^k$ and $\omega_{top}^k$:
$$
\omega^k_{rnd}(x) = \(1-\frac{k}{d}\)\sum_{i=1}^{d} x_i^2, \quad \omega^k_{top}(x) = \sum_{i=1}^{d-k} x_{(i)}^2.
$$

The expected variance $\Exp{\omega_{rnd}^k}$ for Rand-$k$ is easy to compute as all coordinates are independent and uniformly distributed on $[0,1]$:
\begin{equation}\label{apx:id-15}
\Exp{x_i^2} \equiv \int_{[0,1]^d} x_i^2\,dx = \int_0^1 x_i^2\,d x_i = \frac{1}{3},
\end{equation}
which implies
\begin{equation}\label{apx:id-12}
\Exp{\omega_{rnd}^k(x)} = \(1-\frac{k}{d}\)\sum_{i=1}^{d} \Exp{x_i^2} = \(1-\frac{k}{d}\)\frac{d}{3} = \frac{d-k}{3}.
\end{equation}

In order to compute the expected variance $\Exp{\omega_{top}^k}$ for Top-$k$, we use the following formula from order statistics (see \eqref{eq:arnold222}, \eqref{eq:arnold223} or (2.2.2), (2.2.3) of \cite{arnold}) \footnote{see also \url{https://en.wikipedia.org/wiki/Order_statistic}, \url{https://www.sciencedirect.com/science/article/pii/S0167715212001940}}
\begin{equation}\label{apx:id-14}
\Exp{x^2_{(i)}} \equiv \int_{[0,1]^d} x_{(i)}^2\,dx = \frac{\Gamma(i+2)\Gamma(d+1)}{\Gamma(i)\Gamma(d+3)} = \frac{i(i+1)}{(d+1)(d+2)},
\end{equation}
from which we derive
\begin{align}\label{apx:id-13}
\begin{split}
\Exp{\omega_{top}^k}
&= \sum_{i=1}^{d-k} \Exp{x^2_{(i)}} = \frac{1}{(d+1)(d+2)} \sum_{i=1}^{d-k} i(i+1) \\
&= \frac{1}{(d+1)(d+2)} \cdot \frac{(d-k)(d-k+1)(d-k+2)}{3} \\
&= \frac{d-k}{3}\(1-\frac{k}{d+1}\)\(1-\frac{k}{d+2}\).
\end{split}
\end{align}

Combining (\ref{apx:id-12}) and (\ref{apx:id-13}) completes the first relation. Thus, on average (w.r.t. uniform distribution) Top-$k$ has roughly $\(1-\nicefrac{k}{d}\)^2$ times less variance than Rand-$k$.

For the second relation, we use (\ref{apx:id-15}) and (\ref{apx:id-14}) for $i=d$ and get
\begin{equation*}
\frac{\Exp{s^1_{top}(x)}}{\Exp{s^1_{rnd}(x)}} = \frac{\Exp{x^2_{(d)}}}{\Exp{x_d^2}} = \frac{\tfrac{d(d+1)}{(d+1)(d+2)}}{\tfrac{1}{3}} = \frac{3 d}{d+2}.
\end{equation*}
Clearly, one can extend this for any $k\in[d]$.

{\bf (b)} 
Recall that for the standard exponential distribution (with $\lambda=1$) probability density function (PDF) is given as follows:
\begin{equation*}\label{apx:st-exp-dist}
\phi(t) = e^{-t}, \quad t\in[0,\infty).
\end{equation*}
Both mean and variance can be shown to be equal to $1$. The expected saving $\Exp{s^1_{rnd}}$ can be computed directly:
\begin{equation*}\label{rexp}
\Exp{s^1_{rnd}(x)} = \Exp{x_d^2} = \Var{[x_d]} + \Exp{x_d}^2  = 2.
\end{equation*}

To compute the expected saving $\Exp{s^1_{top}(x)} = \Exp{x^2_{(d)}}$ we prove the following lemma:

\begin{lemma}
Let $x_1,\,x_2,\,\dots,\,x_d$ be an i.i.d.\ sample from the standard exponential distribution and
$$y_i \eqdef (d-i+1)(x_{(i)} - x_{(i-1)}), \quad  1\le i\le d,$$
where $x_{(0)} \eqdef 0$. Then $y_1,\,y_2,\,\dots,\,y_d$ is an i.i.d.\ sample from the standard exponential distribution.
\end{lemma}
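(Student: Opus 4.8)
The statement to be proved is the classical \emph{R\'enyi representation} of exponential order statistics: the normalized spacings $y_i=(d-i+1)(x_{(i)}-x_{(i-1)})$ of an i.i.d.\ standard exponential sample are themselves i.i.d.\ standard exponential. The plan is to prove it by a direct change of variables on the joint density of the order statistics; this makes every quantity explicit and sidesteps conditioning subtleties.

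First I would recall that the joint density of $(x_{(1)},\dots,x_{(d)})$ is $d!\,e^{-\sum_{i=1}^d x_{(i)}}$ on the ordered simplex $\{0<x_{(1)}<x_{(2)}<\dots<x_{(d)}\}$. Next, I would observe that the map $(x_{(1)},\dots,x_{(d)})\mapsto(y_1,\dots,y_d)$ is linear and lower bidiagonal: $y_i$ depends only on $x_{(i)}$ and $x_{(i-1)}$ (with $x_{(0)}\equiv 0$), with diagonal coefficient $d-i+1$. Hence its Jacobian determinant is $\prod_{i=1}^d(d-i+1)=d!$, and the inverse map is $x_{(i)}=\sum_{j=1}^i \frac{y_j}{d-j+1}$. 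A one-line computation then yields the key identity
\[
\sum_{i=1}^d x_{(i)}=\sum_{i=1}^d\sum_{j=1}^i\frac{y_j}{d-j+1}=\sum_{j=1}^d\frac{y_j}{d-j+1}\,(d-j+1)=\sum_{j=1}^d y_j .
\]
Moreover this bijection sends the ordered simplex exactly onto the positive orthant $(0,\infty)^d$: indeed $y_1=d\,x_{(1)}>0$ and $y_i=(d-i+1)(x_{(i)}-x_{(i-1)})>0$ for $i\ge 2$, while conversely $x_{(i)}=\sum_{j\le i}y_j/(d-j+1)$ is strictly increasing and positive whenever all $y_j>0$. Combining these, the density of $(y_1,\dots,y_d)$ is $d!\,e^{-\sum_j y_j}\cdot\frac{1}{d!}=\prod_{j=1}^d e^{-y_j}$ on $(0,\infty)^d$, which is precisely the joint density of $d$ i.i.d.\ standard exponential variables.

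An alternative route, closer to the name of the result, is induction on $d$ using memorylessness: $x_{(1)}=\min_i x_i$ is exponential with rate $d$, so $y_1=d\,x_{(1)}$ is standard exponential; conditionally on $x_{(1)}$ and the identity of the minimizer, the shifted values $x_j-x_{(1)}$ form a size-$(d-1)$ i.i.d.\ standard exponential sample independent of $x_{(1)}$, whose order statistics are $x_{(i)}-x_{(1)}$ for $i\ge 2$, and $y_2,\dots,y_d$ are exactly its normalized spacings, to which the inductive hypothesis applies. The main obstacle in either proof is purely bookkeeping: in the change-of-variables argument, verifying the triangular structure and evaluating the determinant and the sum $\sum_i x_{(i)}$; in the inductive argument, tracking that the $i$-th order statistic of the full sample becomes the $(i-1)$-th order statistic of the reduced sample and that the normalizer $d-i+1$ matches $(d-1)-(i-1)+1$. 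No genuinely hard estimate is involved.
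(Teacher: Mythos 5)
Your proposal is correct and follows essentially the same change-of-variables route as the paper: same joint density $d!\,e^{-\sum_i x_{(i)}}$ of the order statistics, same lower-triangular inverse map $x_{(i)}=\sum_{j\le i}y_j/(d-j+1)$ with Jacobian $1/d!$, and same key identity $\sum_i x_{(i)}=\sum_j y_j$. You are slightly more careful than the paper in explicitly checking that the transformation maps the ordered simplex bijectively onto the positive orthant $(0,\infty)^d$ (the paper's displayed domain $0\le u_1\le\dots\le u_d$ for the transformed density is a typo and should read $u_j>0$ for all $j$).
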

\begin{proof}
    The joint density function of $x_{(1)}, \ldots, x_{(d)}$ is given by (see \eqref{eq:arnold223})
    \begin{equation*}
        \phi_{x_{(1)}, \dots, x_{(d)}} (u_1, \dots, u_d) = d! \prod\limits_{i = 1}^d \phi(u_i) = d! \exp{\left(-\sum\limits_{i=1}^d u_i\right)}, \quad 0 \leq u_1 \le \ldots \le u_d < \infty.
    \end{equation*}
    Next we express variables $x_{(i)}$ using new variables $y_i$
    \begin{equation*}
        x_{(1)} = \frac{y_1}{d},\; x_{(2)} = \frac{y_1}{d} + \frac{y_2}{d-1},\; \ldots,\; x_{(d)}=\frac{y_1}{d} + \frac{y_2}{d-1} + \ldots + y_d,
    \end{equation*}
    with the transformation matrix
    \begin{equation*}
    A = 
    \begin{pmatrix}
    \frac{1}{d} & 0 & \ldots & 0\\
    \frac{1}{d} & \frac{1}{d-1} & \ldots & 0\\
    \vdots & \vdots & \ddots & \vdots\\
    \frac{1}{d} & \frac{1}{d-1} & \ldots & 1
    \end{pmatrix}
    \end{equation*}
    Then the joint density $\psi_{y_1, \ldots, y_d} (u) = \psi_{y_1, \ldots, y_d} (u_1,\dots,u_d)$ of new variables $y_1, \ldots, y_d$ is given as follows
    \begin{equation*}
        \psi_{y_1, \ldots, y_d} (u) = \frac{\phi_{x_{(1)}, \ldots, x_{(d)}} (Au)}{\abs{\textrm{det}\,A^{-1}}} = \abs{\textrm{det}\,A}\cdot \phi_{x_{(1)}, \ldots, x_{(d)}} (Au)
    \end{equation*}
    Notice that $\sum\limits_{i=1}^d u_i = \sum\limits_{i=1}^d (Au)_i$ and $|\textrm{det}\, A| = \nicefrac{1}{d!}$. Hence
    \begin{equation*}
        \psi_{y_1, \ldots, y_d} (u) = \exp{\left(-\sum\limits_{i=1}^d u_i\right)}, \quad 0 \leq u_1 \le \ldots \le u_d \le \infty,
    \end{equation*}
    which means that variables $y_1, \ldots y_d$ are independent and have standard exponential distribution.
\end{proof}

Using this lemma we can compute the mean and the second moment of $x_{(d)} = \sum_{i=1}^d \frac{y_i}{d-i+1}$ as follows
\begin{align*}
    \Exp{x_{(d)}} &= \sum_{i=1}^d \Exp{\frac{y_i}{d-i+1}} = \sum_{i=1}^d \frac{\Exp{y_i}}{d-i+1} = \sum\limits_{i=1}^d \frac{1}{i}, \\
    \Var{[x_{(d)}]} &= \sum_{i=1}^d \Var\left[{\frac{y_i}{d-i+1}}\right] = \sum_{i=1}^d \frac{\Var{[y_i]}}{(d-i+1)^2} = \sum\limits_{i=1}^d \frac{1}{i^2},
\end{align*}
from which we conclude the lemma as
\begin{equation*}
    \Exp{s^1_{top}(x)} = \Exp{x^2_{(d)}} = \Var{[x_{(d)}]} + \Exp{x_{(d)}}^2 = \sum\limits_{i=1}^d \frac{1}{i^2} + \left( \sum\limits_{i=1}^d \frac{1}{i}\right)^2 \approx \cO(\log^2 d).
\end{equation*}

\end{proof}



\subsection{Proof of Theorem \ref{thm:sparsified} (Convergence guarantees for Algorithm \ref{alg})}

In this section, we include our analysis for the Distributed SGD with biased compression. Our analysis is closely related to the analysis of \cite{stich2019}.

We start with the definition of some auxiliary objects:
\begin{definition}
    The sequence $\{a^k\}_{k\geq0}$ of positive values is $\tau$-slow decreasing for parameter $\tau$:
    \begin{eqnarray}
    \label{decrease}
    a^{k+1} \leq a^{k}, \quad a^{k+1}\left(1 + \frac{1}{2\tau} \right) \geq a^k, \quad \forall k \geq 0
    \end{eqnarray}
    The sequence $\{a^k\}_{k\geq0}$ of positive values is $\tau$-slow increasing for parameter $\tau$:
    \begin{eqnarray}
    \label{increase}
    a^{k+1} \geq a^{k}, \quad a^{k+1} \leq a^k\left(1 + \frac{1}{2\tau} \right), \quad \forall k \geq 0
    \end{eqnarray}
\end{definition}
And let:
\begin{equation}
    \label{tx}
    \tilde x^k = x^k - \frac{1}{n} \sum\limits_{i=1}^n e_i^k \,, \quad \forall k \geq 0
\end{equation}
\begin{equation}
    \label{gt}
 g^k = \frac{1}{n} \sum\limits_{i=1}^n g_i^k
\end{equation}
It is easy to see:
\begin{eqnarray}
 \label{eq:tilde}
 \tilde x^{k+1} &=& x^{k+1} - \frac{1}{n} \sum\limits_{i=1}^n e_i^{k+1} \nonumber\\
 &\stackrel{\eqref{error}, \eqref{step}}{=} &
 \left(x^k - \frac{1}{n} \sum\limits_{i=1}^n \tilde g_i^k \right) - \left(\frac{1}{n} \sum\limits_{i=1}^n [e_i^k + \stepsize^k g_i^k- \tilde g_i^k ]\right) \nonumber\\
 &=& \tilde x^k - \frac{\stepsize^k}{n} \sum\limits_{i=1}^n  g_i^k
\end{eqnarray}
\begin{lemma}
\label{lemma:main}
If $\stepsize^k \leq \frac{1}{4L\left(1 + \nicefrac{2B}{n}\right)}$, $\forall k \geq 0$, then for $\{\tilde x^k\}_{k \geq 0}$ defined as in~\eqref{tx},
\begin{eqnarray}
\label{eq:main}
  \Exp{\norm{\tilde x^{k+1} - x^*}^2}  &\leq&
 \left(1-\frac{\mu \stepsize^k}{2}\right) \Exp{\norm{\tilde x^{k} - x^*}^2} 
 - \frac{\stepsize^k}{2} \Exp{f(x^k)- f^*} \nonumber\\
  & & ~ + ~3 L \stepsize^k   \Exp{\norm{x^k - \tilde x^k}^2} + (\stepsize^k)^2\frac{C+2BD}{n}
\end{eqnarray}

\end{lemma}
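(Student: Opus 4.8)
The plan is to track the \emph{virtual iterate} $\tilde x^k$ defined in \eqref{tx} rather than $x^k$ itself. The point is that, by \eqref{eq:tilde}, the update of $\tilde x^k$ is the plain stochastic gradient step $\tilde x^{k+1}=\tilde x^k-\eta^k g^k$ with $g^k=\frac1n\sum_{i=1}^n g_i^k$, which does not involve the compression operators at all; the whole compression error is hidden in the gap $x^k-\tilde x^k$, which is controlled separately. So I would start from the identity
\[
\norm{\tilde x^{k+1}-x^*}^2=\norm{\tilde x^k-x^*}^2-2\eta^k\lin{g^k,\tilde x^k-x^*}+(\eta^k)^2\norm{g^k}^2,
\]
take expectation conditioned on the history up to step $k$ (so that $x^k,\tilde x^k,e_i^k$ are fixed), and use $\Exp{g^k}=\nabla f(x^k)$, which follows from \eqref{stgr1_main} and $\Exp{\xi_i^k}=0$. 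It then remains to bound the last two terms.

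For the second-moment term, writing $g^k=\nabla f(x^k)+\frac1n\sum_i\xi_i^k$ and using that the $\xi_i^k$ are zero-mean and independent across the $n$ workers, $\Exp{\norm{g^k}^2}=\norm{\nabla f(x^k)}^2+\frac1{n^2}\sum_{i=1}^n\Exp{\norm{\xi_i^k}^2}$, which by \eqref{stgr3_main} is at most $\norm{\nabla f(x^k)}^2+\frac{B}{n^2}\sum_i\norm{\nabla f_i(x^k)}^2+\frac{C}{n}$. The key auxiliary estimate is the ``averaged smoothness'' bound
\[
\frac1n\sum_{i=1}^n\norm{\nabla f_i(x^k)}^2\le 4L\bigl(f(x^k)-f^*\bigr)+2D,
\]
which I would obtain by applying \eqref{L-smooth3} to each convex $L$-smooth $f_i$ at the pair $(x^k,x^*)$, averaging over $i$, killing the linear term because $\nabla f(x^*)=0$, and splitting $\nabla f_i(x^k)=(\nabla f_i(x^k)-\nabla f_i(x^*))+\nabla f_i(x^*)$ with \eqref{inner_prod_and_sqr} (so that $\tfrac1n\sum_i\norm{\nabla f_i(x^*)}^2=D$ appears). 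Together with $\norm{\nabla f(x^k)}^2\le 2L(f(x^k)-f^*)$ from \eqref{L-smooth4}, this gives $\Exp{\norm{g^k}^2}\le 2L\left(1+\tfrac{2B}{n}\right)(f(x^k)-f^*)+\tfrac{C+2BD}{n}$, hence, under $\eta^k\le\frac{1}{4L(1+2B/n)}$, we get $(\eta^k)^2\Exp{\norm{g^k}^2}\le\tfrac{\eta^k}{2}(f(x^k)-f^*)+(\eta^k)^2\tfrac{C+2BD}{n}$.

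For the inner-product term I would split $\tilde x^k-x^*=(x^k-x^*)+(\tilde x^k-x^k)$. On $-2\eta^k\lin{\nabla f(x^k),x^k-x^*}$ apply $\mu$-strong convexity \eqref{strong_conv1}, which gives $-2\eta^k(f(x^k)-f^*)-\eta^k\mu\norm{x^k-x^*}^2$. On the cross term $2\eta^k\lin{\nabla f(x^k),x^k-\tilde x^k}$ apply Young's inequality \eqref{inner_prod} with parameter $\xi=2L$, bounding it by $2L\eta^k\norm{x^k-\tilde x^k}^2+\tfrac{\eta^k}{2L}\norm{\nabla f(x^k)}^2\le 2L\eta^k\norm{x^k-\tilde x^k}^2+\eta^k(f(x^k)-f^*)$ using \eqref{L-smooth4} once more. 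The delicate point is that the value $\xi=2L$ is tuned precisely so that the three contributions to the coefficient of $(f(x^k)-f^*)$ --- $-2\eta^k$ from strong convexity, $+\eta^k$ from Young, and $+\tfrac{\eta^k}{2}$ from the second-moment term --- add up to exactly $-\tfrac{\eta^k}{2}$. Finally I would rewrite $-\eta^k\mu\norm{x^k-x^*}^2$ using \eqref{inner_prod_and_sqr} as $-\tfrac{\mu\eta^k}{2}\norm{\tilde x^k-x^*}^2+\eta^k\mu\norm{x^k-\tilde x^k}^2$, and bound the resulting total coefficient of $\norm{x^k-\tilde x^k}^2$, namely $(2L+\mu)\eta^k$, by $3L\eta^k$ using $\mu\le L$. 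Collecting the three blocks and taking full expectation by the tower property yields \eqref{eq:main}.

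The computation is otherwise routine: the only thing requiring care is this constant bookkeeping --- the Young parameter $\xi=2L$, the stepsize threshold $\tfrac1{4L(1+2B/n)}$, and $\mu\le L$ --- arranged so that the surviving suboptimality term lands on exactly $-\tfrac{\eta^k}{2}(f(x^k)-f^*)$ and the perturbation term on exactly $3L\eta^k\norm{x^k-\tilde x^k}^2$. This parallels the error-feedback argument of \citet{stich2019}.
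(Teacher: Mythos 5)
Your proof is correct and follows essentially the same route as the paper's: track the virtual iterate $\tilde x^k$, split $\tilde x^k - x^* = (x^k - x^*) + (\tilde x^k - x^k)$, bound the second moment of $g^k$ via independence, zero-mean, and the ``averaged smoothness'' inequality $\frac{1}{n}\sum_i\norm{\nabla f_i(x^k)}^2 \leq 4L(f(x^k)-f^*)+2D$, apply strong convexity and a Young's inequality tuned at $2L$, then absorb $(2L+\mu)\eta^k$ into $3L\eta^k$ via $\mu\leq L$. The constant bookkeeping you flag as the delicate point ($-2\eta^k+\eta^k+\eta^k/2=-\eta^k/2$ using the stepsize threshold) is exactly the same accounting the paper performs, so this is a faithful reconstruction of the original argument.
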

\begin{proof}
We consider the following equalities, using the relationship between  $\tilde x_{k+1}$ and $\tilde x_k$:
\begin{eqnarray*}
  \norm{\tilde x^{k+1} - x^*}^2  &\stackrel{\eqref{gt},\eqref{eq:tilde}}{=}& \norm{\tilde x^{k} - x^*}^2 - 2\stepsize^k \lin{g^k, \tilde x^k-x^*}+ (\stepsize^k)^2 \norm{g^k}^2 \\
  &=& \norm{\tilde x^{k} - x^*}^2 - 2\stepsize^k \lin{g^k, x^k-x^*}+ (\stepsize^k)^2 \norm{g^k}^2  + 2\stepsize^k\lin{g^k, x^k - \tilde x^k}.
\end{eqnarray*}
Taking the conditional expectation conditioned on previous iterates, we get
\begin{eqnarray*}
  && \Exp{\norm{\tilde x^{k+1} - x^*}^2} \\
  &=& \norm{\tilde x^{k} - x^*}^2 - 2\stepsize^k \lin{\Exp{g^k}, x^k-x^*} + (\stepsize^k)^2 \cdot \Exp{\norm{g^k}^2} + 2\stepsize^k\lin{ \Exp{g^k}, x^k - \tilde x^k}  \nonumber\\
  &\stackrel{\eqref{stgr1_main},\eqref{gt}}{=}& \norm{\tilde x^{k} - x^*}^2 - 2\stepsize^k \lin{ \Exp{g^k}, x^k-x^*}  \nonumber\\
  & & + (\stepsize^k)^2 \cdot \Exp{\norm{\nabla f(x^k) + \frac{1}{n} \sum\limits_{i=1}^n \xi_i^k }^2} + 2\stepsize^k \lin{\Exp{g^k}, x^k - \tilde x^k}  \nonumber\\
  &=& \norm{\tilde x^{k} - x^*}^2 - 2\stepsize^k \lin{ \Exp{g^k}, x^k-x^*}  \nonumber\\
  & &+ (\stepsize^k)^2 \cdot \Exp{\norm{\nabla f(x^k)}^2 + 2 \lin{\nabla f(x^k), \frac{1}{n} \sum\limits_{i=1}^n \xi_i^k} + \norm{\frac{1}{n} \sum\limits_{i=1}^n \xi_i^k }^2}  + 2\stepsize^k \lin{\Exp{g^k}, x^k - \tilde x^k}.
\end{eqnarray*}
Given the unbiased stochastic gradient ($\Exp{\xi_i^k} = 0$):
\begin{eqnarray*}  
  \Exp{\norm{\tilde x^{k+1} - x^*}^2} &\stackrel{}{=}& \norm{\tilde x^{k} - x^*}^2 - 2\stepsize^k \lin{\nabla f(x^k), x^k-x^*} \nonumber\\
  &&+ (\stepsize^k)^2 \norm{\nabla f(x^k)}^2 + (\stepsize^k)^2 \cdot \Exp{\norm{\frac{1}{n} \sum\limits_{i=1}^n \xi_i^k }^2}  + 2\stepsize^k \lin{\nabla f(x^k), x^k - \tilde x^k} 
\end{eqnarray*}

Using that $\xi_i^k$ mutually independent and $\Exp{\xi_i^k} = 0$ we have:
\begin{eqnarray}  
  &\stackrel{\eqref{sum_sqr}}{\leq}&  \norm{\tilde x^{k} - x^*}^2 - 2\stepsize^k \lin{\nabla f(x^k), x^k-x^*} \nonumber \\
  &&+ (\stepsize^k)^2 \cdot \norm{\nabla f(x^k)}^2 + (\stepsize^k)^2 \cdot \frac{1}{n^2}\sum\limits_{i=1}^n\Exp{\norm{\xi_i^k}^2} +2\stepsize^k \lin{\nabla f(x^k), x^k - \tilde x^k} \nonumber \\
  &\stackrel{\eqref{stgr3_main}}{\leq}&  \norm{\tilde x^{k} - x^*}^2 - 2\stepsize^k \lin{ \nabla f(x^k), x^k-x^*}  \nonumber\\
  &&+ (\stepsize^k)^2 \cdot \norm{\nabla f(x^k)}^2 + \frac{(\stepsize^k)^2}{n^2}\sum\limits_{i=1}^n\left[B\norm{\nabla f_i(x^k)}^2\right]  + \frac{(\stepsize^k)^2}{n} C \nonumber\\
  &&+ 2\stepsize^k \lin{\nabla f(x^k), x^k - \tilde x^k} \nonumber\\
  &\stackrel{\eqref{L-smooth4}}{\leq}&  \norm{\tilde x^{k} - x^*}^2 - 2\stepsize^k \lin{ \nabla f(x^k), x^k-x^*}  \nonumber\\
  &&+ (\stepsize^k)^2 \cdot 2L(f(x^k) - f(x^*)) + \frac{(\stepsize^k)^2}{n^2}\sum\limits_{i=1}^n\left[B\norm{\nabla f_i(x^k)}^2\right]  + \frac{(\stepsize^k)^2}{n} C \nonumber\\
  &&+ 2\stepsize^k \lin{\nabla f(x^k), x^k - \tilde x^k}.
  \label{long1}
\end{eqnarray}

All $f_i$ are $L$-smooth and $ \mu $-strongly convex, thus $f$ is $L$-smooth and $\mu$-strongly convex. We can rewrite $\frac{1}{n}\sum\limits_{i=1}^n\norm{\nabla f_i(x^k)}^2$:
\begin{eqnarray*}
    \frac{1}{n}\sum\limits_{i=1}^n\norm{\nabla f_i(x^k)}^2
    &=& \frac{1}{n}\sum\limits_{i=1}^n\norm{\nabla f_i(x^k) - \nabla f_i(x_*) + \nabla f_i(x^*)}^2  \nonumber\\ 
    &\stackrel{\eqref{sum_sqr}}{\leq}&  \frac{2}{n}\sum\limits_{i=1}^n \left(\norm{\nabla f_i(x^k) - \nabla f_i(x^*)}^2 + \norm{\nabla f_i(x^*)}^2\right) \nonumber\\ 
    &\stackrel{\eqref{L-smooth3}}{\leq}&  \frac{2}{n}\sum\limits_{i=1}^n \left[ 2L \left(f_i(x^k) - f_i(x^*) -\langle\nabla f_i(x^*), x^k - x^*\rangle \right) + \norm{\nabla f_i(x^*)}^2\right] .
\end{eqnarray*}
Using definition of $D = \frac{1}{n} \sum_{i=1}^n \norm{\nabla f_i(x^*)}^2$:
\begin{eqnarray}
    \frac{1}{n}\sum\limits_{i=1}^n\norm{\nabla f_i(x^k)}^2 &\stackrel{}{\leq}&   4L \left(f(x^k) - \nabla f(x^*)\right) + 2D
    \label{sum_sqr_grad}
\end{eqnarray}

Substituting \eqref{sum_sqr_grad} to \eqref{long1}:
\begin{eqnarray}
  \Exp{\norm{\tilde x^{k+1} - x^*}^2}
  &=&  \norm{\tilde x^{k} - x^*}^2 - 2\stepsize^k \lin{ \nabla f(x^k), x^k-x^*} + (\stepsize^k)^2\cdot 2L\left(1 + \frac{2B}{n}\right)(f(x^k) - f(x^*)) \nonumber\\
  &&   + (\stepsize^k)^2\frac{C+2BD}{n}  + 2\stepsize^k \lin{\nabla f(x^k), x^k - \tilde x^k} 
  \label{long2}
\end{eqnarray}
By \eqref{strong_conv1} we have for $f$:
\begin{eqnarray}
  -2\lin{\nabla f(x^k), x^k-x^*} \leq - \mu \norm{x^k- x^*}^2  - 2(f(x^k)-f^*).
  \label{temp1_lem1}
\end{eqnarray}
Using \eqref{inner_prod} with $\xi=\nicefrac{1}{2L}$ and $L$-smothness of $f$ \eqref{L-smooth4}:
\begin{eqnarray}
2\lin{ \nabla f(x^k), \tilde x^k - x^k}  \leq \frac{1}{2L} \norm{\nabla f(x^k)}^2 + 2L\norm{x^k -\tilde x^k}^2 \leq f(x^k)-f^* + 2L \norm{x^k -\tilde x^k}^2 .
\label{temp2_lem1}
\end{eqnarray}
By \eqref{sum_sqr} for $\norm{\tilde x^k - x^*}^2$, we get:
\begin{eqnarray}
 -\norm{x^k - x^*}^2 \leq - \frac{1}{2} \norm{\tilde x^k - x^*}^2 + \norm{x^k - \tilde x^k}^2.
 \label{temp3_lem1}
\end{eqnarray}
Plugging \eqref{temp1_lem1}, \eqref{temp2_lem1}, \eqref{temp3_lem1} into \eqref{long2}:
\begin{eqnarray*}
 \norm{\tilde x^{k+1} - x^*}^2
  &\leq& \left(1-\frac{\mu \stepsize^k}{2}\right) \norm{\tilde x^{k} - x^*}^2 - \stepsize^k \left[1 - \stepsize^k\cdot 2L\left(1 + \frac{2B}{n}\right)\right] (f(x^k)- f^*)  \\
  && + \stepsize^k (2L+\mu) \norm{x^k - \tilde x^k}^2 + (\stepsize^k)^2\frac{C+2BD}{n}\\
\end{eqnarray*}
The lemma follows by the choice $\stepsize^k \leq \frac{1}{4L\left(1 + \nicefrac{2B}{n}\right)}$ and $L \geq \mu$.
\end{proof}
\begin{lemma} \label{lemma:sparse_final}
$\stepsize^k \leq \frac{1}{14 (2\delta + B) L}$, $\forall k \geq 0$ and $\{(\stepsize^k)^2\}_{k \geq 0}$ -- $2\delta$-slow decreasing. Then
\begin{eqnarray}
 \Exp{\norm{\frac{1}{n}\sum\limits_{i=1}^n e_i^{k+1}}^2} &\leq& \frac{(1-1/\delta)}{49L (2\delta + B)} \sum_{j=0}^k\left[ \left(1-\frac{1}{4\delta}\right)^{k-j} (f(x^j) - f(x^*))\right] \notag \\
 &&\qquad + \stepsize^k \frac{2(\delta-1)}{7L}\left(2D + \frac{C}{2\delta + B} \right) \,. \label{eq:sparse_bound1}
\end{eqnarray}
Furthermore, for any $4\delta$-slow increasing non-negative sequence $\{w^k\}_{k \geq 0}$ it holds:
\begin{eqnarray}
    3L \cdot \sum\limits_{k=0}^K w^k \cdot  \Exp{\norm{\frac{1}{n}\sum\limits_{i=1}^n e_i^{k}}^2} 
    \leq \frac{1}{4} \sum\limits_{k=0}^K w^k (\Exp{f(x^k)} - f(x_*)) +  \left(3\delta D + \frac{3C}{4}\right) \sum\limits_{k=0}^K w^k\stepsize^k. \label{eq:sparse_bound3}
\end{eqnarray}
\end{lemma}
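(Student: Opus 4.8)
The plan is to derive both displayed bounds from one contraction inequality for the averaged error energy $r^k\eqdef\frac1n\sum_{i=1}^n\Exp{\twonorm{e_i^k}^2}$. By \eqref{sum_sqr} we have $\Exp{\twonorm{\frac1n\sum_i e_i^k}^2}\le r^k$, so it suffices to bound $r^k$. From \eqref{comp_grad}--\eqref{error} the update is $e_i^{k+1}=(e_i^k+\stepsize^k g_i^k)-\cC_i^k(e_i^k+\stepsize^k g_i^k)$, so Definition~\ref{def:comp_III} (i.e.\ \eqref{eq:biasedIII}) together with the tower property gives $\Exp{\twonorm{e_i^{k+1}}^2}\le(1-\tfrac1\delta)\Exp{\twonorm{e_i^k+\stepsize^k g_i^k}^2}$. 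The decisive move is to handle the two sources of randomness in the right order: first peel off the stochastic-gradient noise, using $\Exp{\xi_i^k}=0$, \eqref{stgr1_main} and \eqref{stgr3_main}, to obtain $\Exp{\twonorm{e_i^k+\stepsize^k g_i^k}^2}\le\Exp{\twonorm{e_i^k+\stepsize^k\nabla f_i(x^k)}^2}+(\stepsize^k)^2\bigl(B\,\Exp{\twonorm{\nabla f_i(x^k)}^2}+C\bigr)$; only then apply Young's inequality \eqref{inner_prod_and_sqr} to the first term, with the free parameter chosen so that $(1-\tfrac1\delta)(1+\xi)=1-\tfrac1{2\delta}$ (explicitly $\xi=\tfrac1{2(\delta-1)}$, hence $1+\tfrac1\xi=2\delta-1$). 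In this order the noise constant $B$ enters the gradient term \emph{additively}: the coefficient of $\twonorm{\nabla f_i(x^k)}^2$ becomes $(1-\tfrac1\delta)(2\delta-1+B)\le(1-\tfrac1\delta)(2\delta+B)$, rather than the much worse $2\delta(1+B)$ one gets by applying Young first. Averaging over $i$ and inserting \eqref{sum_sqr_grad} to replace $\tfrac1n\sum_i\twonorm{\nabla f_i(x^k)}^2$ by $4L(f(x^k)-f^\star)+2D$ yields
\begin{equation*}
r^{k+1}\le\Bigl(1-\tfrac1{2\delta}\Bigr)r^k+\Bigl(1-\tfrac1\delta\Bigr)(2\delta+B)(\stepsize^k)^2\bigl(4L\,\Exp{f(x^k)-f^\star}+2D\bigr)+\Bigl(1-\tfrac1\delta\Bigr)(\stepsize^k)^2 C,\qquad r^0=0 .
\end{equation*}
I would keep the global factor $1-\tfrac1\delta$ rather than bounding it by $1$ --- this is exactly what makes the constants in \eqref{eq:sparse_bound1} and \eqref{eq:sparse_bound3} come out right.

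The next step is to unroll this recursion. With $r^0=0$, $r^{k+1}$ is bounded by $\sum_{j=0}^k(1-\tfrac1{2\delta})^{k-j}$ times the inhomogeneous term at step $j$. The $2\delta$-slow-decreasing property \eqref{decrease} of $\{(\stepsize^k)^2\}$ lets me replace $(\stepsize^j)^2$ by $(\stepsize^k)^2(1+\tfrac1{4\delta})^{k-j}$, and the elementary inequality $(1-\tfrac1{2\delta})(1+\tfrac1{4\delta})\le1-\tfrac1{4\delta}$ merges the two geometric factors into $(1-\tfrac1{4\delta})^{k-j}$ --- precisely the kernel in \eqref{eq:sparse_bound1}, which the function-value term keeps. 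For the $j$-independent ($D$ and $C$) contributions I sum the geometric series $\sum_{m\ge0}(1-\tfrac1{4\delta})^m\le4\delta$, and then trade powers of $\stepsize^k$ against the cap $\stepsize^k\le\tfrac1{14(2\delta+B)L}$: two powers for the function-value term, which produces the prefactor $\tfrac{1-1/\delta}{49L(2\delta+B)}$, and one power for the $D,C$ term, which produces $\tfrac{2(\delta-1)}{7L}\stepsize^k$. Collecting constants reproduces \eqref{eq:sparse_bound1} exactly.

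For \eqref{eq:sparse_bound3} I would substitute \eqref{eq:sparse_bound1} (shifting its index down by one; the $k=0$ summand is zero since $e_i^0=0$) into $3L\sum_{k=0}^K w^k\Exp{\twonorm{\frac1n\sum_i e_i^k}^2}\le3L\sum_{k=0}^K w^k r^k$, interchange the order of the double summation, and read off the coefficient of each $f(x^j)-f^\star$, namely $\tfrac{3(1-1/\delta)}{49(2\delta+B)}\sum_{k>j}w^k(1-\tfrac1{4\delta})^{k-1-j}$. Now the $4\delta$-slow-increasing property \eqref{increase} of $\{w^k\}$ gives $w^k\le w^{j+1}(1+\tfrac1{8\delta})^{k-1-j}$ and $w^{j+1}\le(1+\tfrac1{8\delta})w^j$; since $(1+\tfrac1{8\delta})(1-\tfrac1{4\delta})\le1-\tfrac1{8\delta}$, the inner sum is at most $(8\delta+1)w^j$, and $\tfrac{3(1-1/\delta)(8\delta+1)}{49(2\delta+B)}\le\tfrac{12}{49}<\tfrac14$, as required. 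The $D$ and $C$ terms of \eqref{eq:sparse_bound1}, multiplied by $3Lw^k$ and summed (using $\stepsize^{k-1}\le(1+\tfrac1{8\delta})\stepsize^k$, again from \eqref{decrease}), accumulate to at most $\bigl(3\delta D+\tfrac{3C}4\bigr)\sum_k w^k\stepsize^k$, giving \eqref{eq:sparse_bound3}.

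All the mathematical content sits in the first step --- realising that error feedback with a biased $\mathbb{B}^3(\delta)$ compressor still contracts the error energy at rate $1-\tfrac1{2\delta}$, and ordering the expectations so that the stochastic-noise constant $B$ appears only additively. After that the argument is bookkeeping, and the one place it can go wrong is the telescoping of the two geometric factors through the slow-scaling definitions, together with the check that the accumulated numerical constant multiplying $\sum_k w^k(f(x^k)-f^\star)$ does not exceed $\tfrac14$ --- which is why the step size is capped at $\tfrac1{14(2\delta+B)L}$ and the scaling parameters are $2\delta$ and $4\delta$. I expect this constant-chasing, not any individual inequality, to be the main obstacle.
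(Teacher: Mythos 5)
Your proposal is correct and follows essentially the same route as the paper's proof: you apply $\eqref{sum_sqr}$ to pass to $\frac1n\sum_i\Exp{\norm{e_i^k}^2}$, use the $\mathbb{B}^3(\delta)$ contraction on $e_i^k+\stepsize^k g_i^k$, peel off the zero-mean noise $\xi_i^k$ \emph{before} Young's inequality so that $B$ enters additively, pick $\xi=\tfrac1{2(\delta-1)}$ to get the $(1-\tfrac1{2\delta})$ contraction with coefficient $2\delta+B$, unroll from $e_i^0=0$, merge the geometric factors via the $2\delta$-slow decrease, cash powers of $\stepsize^k$ against the cap $\tfrac1{14(2\delta+B)L}$, and for the weighted sum exchange summation and absorb the $4\delta$-slow-increasing weights into a geometric series bounded by $8\delta$ (your slightly looser factor $8\delta+1$ still gives $\tfrac{12}{49}<\tfrac14$). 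This is the paper's argument up to harmless constant bookkeeping.
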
 
\begin{proof}

We prove the first part of the statement:
\begin{eqnarray*}
\Exp{\norm{\frac{1}{n}\sum\limits_{i=1}^n e_i^{k+1}}^2}  &\stackrel{\eqref{sum_sqr}}{\leq}& \frac{1}{n} \Exp{\sum\limits_{i=1}^n \norm{e_i^{k+1}}^2} \\ &\stackrel{\eqref{error}}{=}&\frac{1}{n}\Exp{\sum\limits_{i=1}^n \norm{e_i^k + \stepsize^k g^k_i - \tilde g_i^k }^2}\nonumber \\ &\stackrel{\eqref{comp_grad}}{=}& \frac{1}{n} \sum\limits_{i=1}^n \Exp{\norm{e_i^k + \stepsize^k g_i^k - \cC(e_i^k + \stepsize^k g^k_i)}^2} \nonumber \\
&\stackrel{\eqref{eq:biasedIII}}{\leq}& \frac{1-1/\delta}{n} \sum\limits_{i=1}^n \Expg{\norm{e_i^k + \stepsize^k g_i^k}^2} \\ 
&\stackrel{\eqref{stgr1_main}}{=}& \frac{1-1/\delta}{n} \sum\limits_{i=1}^n \Expg{\norm{e_i^k + \stepsize^k\nabla f_i(x^k) + \stepsize^k \xi_i^k }^2}
\end{eqnarray*}
Here we have taken into account that the operator of full expectation is a combination of operators of expectation by the randomness of the operator and the randomness of the stochastic gradient, i.e. $\Exp{\cdot} = \ExpC{\Expg{\cdot}}$.
Given the unbiased stochastic gradient ($\Exp{\xi_i^k} = 0$):
\begin{eqnarray*}
\Exp{\norm{\frac{1}{n}\sum\limits_{i=1}^n e_i^{k+1}}^2} &\stackrel{}{\leq}&\frac{1-1/\delta}{n} \sum\limits_{i=1}^n \left[\norm{e_i^k + \stepsize^k\nabla f_i(x^k)}^2 + \Expg{ \norm{\stepsize^k \xi_i^k}^2}\right] \\
&\stackrel{\eqref{stgr3_main}}{\leq}&\frac{1-1/\delta}{n} \sum\limits_{i=1}^n \left[\norm{e_i^k + \stepsize^k\nabla f_i(x^k)}^2 +(\stepsize^k)^2 \left(B \norm{\nabla f_i(x^k)}^2 + C \right)\right] \nonumber\\
\end{eqnarray*}
Using \eqref{inner_prod_and_sqr} with some $\xi$:
\begin{eqnarray*}
&& \Exp{\norm{\frac{1}{n}\sum\limits_{i=1}^n e_i^{k+1}}^2} \\
&\leq& \frac{1}{n} \Exp{\sum\limits_{i=1}^n \norm{e_i^{k+1}}^2} \\
&\leq& \frac{1- \frac{1}{\delta}  }{n} \sum\limits_{i=1}^n  \left[(1+\xi) \norm{e_i^k}^2 
+ (\stepsize^k)^2 \left(1+ \frac{1}{\xi} \right) \norm{\nabla f_i(x^k)}^2
+(\stepsize^k)^2 B \norm{\nabla f_i(x^k)}^2 + (\stepsize^k)^2 C \right]\\
&=&  \left(1- \frac{1}{\delta} \right)  \left[(1+\xi) \left(\frac{1}{n} \sum\limits_{i=1}^n  \norm{e_i^k}^2\right) + (\stepsize^k)^2 \left( 1+ \frac{1}{\xi} + B \right) \left(\frac{1}{n} \sum\limits_{i=1}^n\norm{\nabla f_i(x^k)}^2 \right) + (\stepsize^k)^2 C\right] \\
&\stackrel{\eqref{sum_sqr_grad}}{\leq}& \left(1- \frac{1}{\delta} \right)  \left[(1+\xi) \left(\frac{1}{n} \sum\limits_{i=1}^n  \norm{e_i^k}^2\right)\right] \\
&&\quad + \left(1- \frac{1}{\delta} \right)  \left[(\stepsize^k)^2 \left( 1+ \frac{1}{\xi} + B \right) \left( 4L(f(x^k) - f(x^*)) + 2D \right) + (\stepsize^k)^2 C\right]
\end{eqnarray*}
Using the recurrence for $\frac{1}{n} \sum\limits_{i=1}^n \norm{e_i^k}^2$ , and let $\xi = \frac{1}{2(\delta - 1)}$, then  $(1+1/\xi) \leq 2\delta$, and $(1-1/\delta)(1+\xi) = (1-\nicefrac{1}{2\delta})$  we have
\begin{eqnarray*}
&&\Exp{\norm{\frac{1}{n}\sum\limits_{i=1}^n e_i^{k+1}}^2} \\
&\leq& \frac{1}{n} \Exp{\sum\limits_{i=1}^n \norm{e_i^{k+1}}^2} \\
&\leq&  \left(1- \frac{1}{\delta} \right) \sum_{j=0}^k (\stepsize^j)^2 \left[ \left(1- \frac{1}{\delta}\right)(1+\xi)  \right]^{k-j}   \left( 1+ \frac{1}{\xi} + B \right)  \left( 4L(\Exp{f(x^j)} - f(x^*)) + 2D \right) \\
&&\quad + \left(1- \frac{1}{\delta} \right) \sum_{j=0}^k (\stepsize^j)^2 \left[ \left(1- \frac{1}{\delta}\right)(1+\xi)  \right]^{k-j} C \\
 &\leq & \left(1- \frac{1}{\delta} \right)  \sum_{j=0}^k  (\stepsize^j)^2 \left(1-\frac{1}{2\delta}\right)^{k-j}  \left( \left( 2\delta + B \right) \left( 4L(\Exp{f(x^j)} - f(x^*)) + 2D \right) + C \right)\,.
\end{eqnarray*} 
For $2\delta$-slow decreasing  $\{(\stepsize^k)^2\}_{k \geq 0}$ by definition \eqref{decrease}  we get that $(\stepsize^{j})^2\leq (\stepsize^k)^2 \left(1+\frac{1}{4\delta} \right)^{k-j}$. Due to the fact that $(1-\nicefrac{1}{2\delta})(1+\nicefrac{1}{4\delta})\leq (1-\nicefrac{1}{4\delta})$, we have:
\begin{eqnarray*}
&&\Exp{\norm{\frac{1}{n}\sum\limits_{i=1}^n e_i^{k+1}}^2} \\
&\leq& \frac{1}{n} \Exp{\sum\limits_{i=1}^n \norm{e_i^{k+1}}^2} \\
&\leq &  \left(1- \frac{1}{\delta} \right)  \sum_{j=0}^k (\stepsize^k)^2\left(1+\frac{1}{4\delta}\right)^{k-j} \left(1-\frac{1}{2\delta}\right)^{k-j}  \left( 2\delta + B \right) \left( 4L(\Exp{f(x^j)} - f(x^*)) + 2D \right) \\
&&\quad + \left(1- \frac{1}{\delta} \right)  \sum_{j=0}^k (\stepsize^k)^2\left(1+\frac{1}{4\delta}\right)^{k-j} \left(1-\frac{1}{2\delta}\right)^{k-j} C \\
&\leq& (\stepsize^k)^2 \left(1- \frac{1}{\delta} \right)   \left( 2\delta + B \right) \sum_{j=0}^k\left[ \left(1-\frac{1}{4\delta}\right)^{k-j}4L\left( \Exp{f(x^j)} - f(x^*)\right) \right]\\
&+& (\stepsize^k)^2 \left(1- \frac{1}{\delta} \right)  4\delta [C + 2D(2\delta + B)] \,.
\end{eqnarray*}
As the last step, we use formula for geometric progression in the following way: $$\sum\limits_{j=0}^k \left(1 - \frac{1}{4\delta}\right)^{k-j} = \sum\limits_{j=0}^k \left(1 - \frac{1}{4\delta}\right)^{j} \leq \sum\limits_{j=0}^{\infty} \left(1 - \frac{1}{4\delta}\right)^{j} = 4\delta$$

By observing that the choice of the stepsize $\stepsize^k \leq \frac{1}{14 (2\delta + B) L}$:
\begin{eqnarray*}
\Exp{\norm{\frac{1}{n}\sum\limits_{i=1}^n e_i^{k+1}}^2} &\leq& \frac{1}{n} \Exp{\sum\limits_{i=1}^n \norm{e_i^{k+1}}^2} \\
&\leq& \frac{(1-1/\delta)}{49L (2\delta + B)} \sum_{j=0}^k\left[ \left(1-\frac{1}{4\delta}\right)^{k-j}(\Exp{f(x^j)} - f(x^*)) \right] \\
&&\qquad + \stepsize^k \frac{2(\delta-1)}{7 L} \left(2D + \frac{C}{2\delta + B}\right) \,,
\end{eqnarray*}
which concludes the proof of \eqref{eq:sparse_bound1}. For the second part, we use the previous results. Summing over all $k$:
\begin{eqnarray*}
 \sum_{k=0}^K w^k \cdot \Exp{\norm{\frac{1}{n}\sum\limits_{i=1}^n e_i^{k}}^2} &\stackrel{\eqref{eq:sparse_bound1}}{\leq}&  \frac{(1-1/\delta)}{49L (2\delta + B)} \sum_{k=0}^K w^k \sum_{j=0}^{k-1} \left(1-\frac{1}{4\delta}\right)^{k-j-1} \left(\Exp{f(x^j)} - f(x^*)\right) \\
 &&+ \frac{2(\delta-1)}{7 L} \left(2D + \frac{C}{2\delta + B}\right) \sum_{k=0}^K w^k \stepsize^{k-1}
\end{eqnarray*}
For $2\delta$-slow decreasing $\{(\stepsize^k)^2\}_{k \geq 0}$, it holds  $(\stepsize^{k-1})^2\leq (\stepsize^k)^2 \bigl(1+\frac{1}{4\delta} \bigr)$ which follows from \eqref{decrease} and $\stepsize^{k-1}\leq \stepsize^k \bigl(1+\frac{1}{4\delta} \bigr)$ and for $4\delta$-slow increasing $\{w^k\}_{k \geq 0}$ by \eqref{increase} we have $w^k \leq w^{k-j} \bigl(1+\frac{1}{8\delta}\bigr)^j$. Then
\begin{eqnarray*}
 \sum_{k=0}^K w^k \cdot \Exp{\norm{\frac{1}{n}\sum\limits_{i=1}^n e_i^{k}}^2} &\stackrel{\eqref{eq:sparse_bound1}}{\leq}&  \frac{(1-1/\delta)}{49L (2\delta + B)} \sum_{k=0}^K w^k  \sum_{j=0}^{k-1} \left(1-\frac{1}{4\delta}\right)^{k-j-1} \left(\Exp{f(x^j)} - f(x^*)\right) \\
 &&+ \frac{2(\delta-1)}{7 L} \left(2D + \frac{C}{2\delta + B} \right)\left(1+\frac{1}{4\delta}\right) \sum_{k=0}^K w^k \stepsize^{k} \\
 &\leq&  \frac{(1-1/\delta)}{49L (2\delta + B)} \sum_{k=0}^K  \sum_{j=0}^{k-1} w^{j} \left(1+\frac{1}{8\delta}\right)^{k-j} \left(1-\frac{1}{4\delta}\right)^{k-j} \left(\Exp{f(x^j)} - f(x^*)\right) \\
 &&+  \frac{\delta-1}{2 L} \left(2D + \frac{C}{2\delta + B}\right) \sum\limits_{k=0}^K w^k \stepsize^k\\
 &\leq& \frac{(1-1/\delta)}{49L (2\delta + B)}  \sum_{k=0}^K\sum_{j=0}^{k-1} w_{j} \left(1-\frac{1}{8\delta}\right)^{k-j}  \left(\Exp{f(x^j)} - f(x^*)\right) \\
 &&+  \frac{\delta-1}{2 L} \left(2D + \frac{C}{2\delta + B}\right) \sum\limits_{k=0}^K w^k \stepsize^k \\
 &\leq&  \frac{(1-1/\delta)}{49L (2\delta + B)} \sum_{k=0}^K w^k  \left(\Exp{f(x^k)} - f(x^*)\right) \sum_{j=0}^{\infty} \left(1-\frac{1}{8\delta}\right)^{j}  \\
 &&+  \frac{\delta-1}{2 L} \left(2D + \frac{C}{2\delta + B}\right) \sum\limits_{k=0}^K w^k \stepsize^k\,.
\end{eqnarray*}
Observing $\sum_{j=0}^\infty (1-\nicefrac{1}{8\delta})^j \leq 8\delta$ and using $\nicefrac{\delta - 1}{2\delta + B} \leq \nicefrac{1}{2}$ concludes the proof.
\end{proof}

\begin{lemma}[Lemma 11, \cite{stich2019}]
\label{lemma:decreasing}
For decreasing stepsizes $\bigl\{\stepsize^k \eqdef \frac{2}{a (\kappa + k)} \bigr\}_{k \geq 0}$, and weights $\{w_k \eqdef (\kappa + k)\}_{k \geq 0}$ for parameters $\kappa \geq 1$, it holds  for every non-negative sequence $\{r^k\}_{k \geq 0}$  and any  $a > 0$, $c \geq 0$ that
\begin{eqnarray*}
 \Psi^K \eqdef \frac{1}{W^K}\sum_{k=0}^K \left( \frac{w^k}{\stepsize^k} \left(1-a \stepsize^k \right) r^k - \frac{w^{k}}{\stepsize^k} r^{k+1} + c \stepsize^k w^k \right) \leq \frac{a \kappa^2 r^0}{K^2} + \frac{4c}{aK}\,,
\end{eqnarray*}
where $W^K \eqdef \sum_{k=0}^K w^k$.
\end{lemma}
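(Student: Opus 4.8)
The plan is a standard weighted telescoping argument, so I will carry it out in four short moves. First I would substitute the explicit choices $\stepsize^k = \tfrac{2}{a(\kappa+k)}$ and $w^k = \kappa+k$ to get the closed forms $\tfrac{w^k}{\stepsize^k} = \tfrac{a}{2}(\kappa+k)^2$ and, since $1-a\stepsize^k = \tfrac{\kappa+k-2}{\kappa+k}$,
\[
\frac{w^k}{\stepsize^k}\left(1-a\stepsize^k\right) = \frac{a}{2}\left((\kappa+k)^2 - 2(\kappa+k)\right) = \frac{a}{2}\left((\kappa+k-1)^2 - 1\right).
\]
The key algebraic observation is then that, because $a>0$, for every $k\ge 1$ one has $\tfrac{w^k}{\stepsize^k}(1-a\stepsize^k) \le \tfrac{a}{2}(\kappa+k-1)^2 = \tfrac{w^{k-1}}{\stepsize^{k-1}}$.

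Next I would use this inequality together with $r^k\ge 0$ to collapse the ``$r$-part'' of $\Psi^K$. Writing $S \eqdef \sum_{k=0}^K\bigl(\tfrac{w^k}{\stepsize^k}(1-a\stepsize^k)r^k - \tfrac{w^k}{\stepsize^k}r^{k+1}\bigr)$, I would split off the $k=0$ term, apply the bound above to each $k\ge 1$ (legitimate precisely because $r^k\ge0$), re-index $\sum_{k=1}^K \tfrac{w^{k-1}}{\stepsize^{k-1}}r^k = \sum_{j=0}^{K-1}\tfrac{w^j}{\stepsize^j}r^{j+1}$, and cancel against $\sum_{k=0}^K \tfrac{w^k}{\stepsize^k}r^{k+1}$. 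This leaves
\[
S \;\le\; \frac{w^0}{\stepsize^0}\left(1-a\stepsize^0\right)r^0 - \frac{w^K}{\stepsize^K}r^{K+1} \;\le\; \frac{a\kappa(\kappa-2)}{2}\,r^0 \;\le\; \frac{a\kappa^2}{2}\,r^0,
\]
where I drop the nonpositive last term and in the final step use $\kappa-2<\kappa$; note this final bound also quietly handles the case $\kappa<2$ in which $1-a\stepsize^0<0$, since then $S\le 0\le \tfrac{a\kappa^2}{2}r^0$.

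For the ``noise part'' I would observe that $c\stepsize^k w^k = \tfrac{2c}{a}$ is constant, so $\sum_{k=0}^K c\stepsize^k w^k = \tfrac{2c(K+1)}{a}$. Finally I would lower-bound the normalizing sum $W^K = \sum_{k=0}^K(\kappa+k) = (K+1)\kappa + \tfrac{K(K+1)}{2} \ge \tfrac{K(K+1)}{2} \ge \tfrac{K^2}{2}$, which gives simultaneously $\tfrac{1}{W^K}\cdot\tfrac{a\kappa^2}{2}r^0 \le \tfrac{a\kappa^2 r^0}{K^2}$ and $\tfrac{1}{W^K}\cdot\tfrac{2c(K+1)}{a} \le \tfrac{2c(K+1)}{a}\cdot\tfrac{2}{K(K+1)} = \tfrac{4c}{aK}$. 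Adding these two contributions to $\Psi^K = \tfrac{1}{W^K}(S + \sum_k c\stepsize^k w^k)$ yields the stated bound.

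There is no genuine obstacle here; the only point that needs attention is that the collapsing step is an \emph{inequality}, not an identity, so it must be applied in the direction consistent with $r^k\ge 0$, and one must retain the term $-\tfrac{w^K}{\stepsize^K}r^{K+1}$ just long enough to see it is harmless before discarding it. Everything else is elementary bookkeeping with arithmetic sums.
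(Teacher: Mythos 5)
Your proof is correct and follows essentially the same route as the paper's: rewrite $\tfrac{w^k}{\stepsize^k}(1-a\stepsize^k)$ as $\tfrac{a}{2}\bigl((\kappa+k-1)^2-1\bigr)\le \tfrac{a}{2}(\kappa+k-1)^2 = \tfrac{w^{k-1}}{\stepsize^{k-1}}$, telescope using $r^k\ge 0$, evaluate the constant noise term, and lower-bound $W^K \ge K^2/2$. The only cosmetic difference is that you split off the $k=0$ term and re-index explicitly (and remark on the $\kappa<2$ case), whereas the paper writes the sum $\sum_k\bigl(\tfrac{a}{2}(\kappa+k-1)^2 r^k - \tfrac{a}{2}(\kappa+k)^2 r^{k+1}\bigr)$ and reads off the telescope directly, then uses $(\kappa-1)^2\le\kappa^2$ instead of your $\kappa(\kappa-2)\le\kappa^2$; these are immaterial variations of the same argument.
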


\begin{proof}
We start by observing that
\begin{eqnarray}
 \frac{w^k}{\stepsize^k} \left(1-a \stepsize^k \right) r^k =  \frac{a}{2} (\kappa + k) (\kappa + k - 2) r^k = \frac{a}{2} \left((\kappa + k - 1)^2 -1 \right) \leq \frac{a}{2} (\kappa + k - 1)^2\,. \label{eq:424}
\end{eqnarray}
By plugging in the definitions of $\stepsize^k$ and $w^k$ in $\Psi^K$, we end up with the following telescoping sum:
\begin{eqnarray*}
\Psi^K \stackrel{\eqref{eq:424}}{\leq} \frac{1}{W^K} \sum_{k=0}^K \left(\frac{a}{2}  (\kappa + k - 1)^2 r^k - \frac{a}{2}(\kappa + k)^2 r^{k+1}  \right) + \sum_{k=0}^K  \frac{2c}{a W^K} \leq \frac{a (\kappa-1)^2 r^0}{2 W^K}  + \frac{2c(K+1)}{a W^K} .
\end{eqnarray*}
The lemma now follows from $(\kappa-1)^2 \leq \kappa ^2$ and
 $W^K = \sum_{k=0}^K(\kappa + k) = \frac{(2\kappa + K)(K+1)}{2} \geq \frac{K(K+1)}{2} \geq \frac{K^2}{2}$.
\end{proof}

\begin{lemma}[Lemma 12, \cite{stich2019}]
\label{lemma:constant}
For every non-negative sequence $\{r^k\}_{k\geq 0}$ and any parameters $d \geq a > 0$, $c \geq 0$, $K \geq 0$, there exists a constant $\stepsize \leq \frac{1}{d}$, such that for constant stepsizes $\{\stepsize^k = \stepsize\}_{k \geq 0}$ and weights $w^{k}\eqdef(1-a\stepsize)^{-(k+1)}$ it holds
\begin{eqnarray*}
\Psi^K \eqdef \frac{1}{W^K}\sum_{k=0}^K \left( \frac{w^k}{\stepsize^k} \left(1-a \stepsize^k \right) r^k - \frac{w^{k}}{\stepsize^k} r^{k+1} + c \stepsize^k w^k \right) = \tilde \cO \left(d r_0 \exp\left[- \frac{aK}{d} \right] + \frac{c}{aK}  \right)\,.
\end{eqnarray*}
\end{lemma}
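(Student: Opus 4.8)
The plan is to exploit the special algebra of the constant stepsize $\stepsize^k\equiv\stepsize$ together with the geometric weights $w^k=(1-a\stepsize)^{-(k+1)}$ so that the \emph{first part} of $\Psi^K$ telescopes, and then to optimize over the free parameter $\stepsize\le\tfrac1d$; this reduces the statement to the familiar one-parameter tuning of an expression of the form $\tfrac{r^0}{\stepsize}e^{-a\stepsize K}+c\stepsize$. Throughout I would keep $\stepsize$ small enough that $1-a\stepsize>0$ (possible since $d\ge a$).

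First I would substitute the stepsizes and weights and observe that $\tfrac{w^k}{\stepsize^k}(1-a\stepsize^k)=\tfrac{(1-a\stepsize)^{-k}}{\stepsize}$ while $\tfrac{w^k}{\stepsize^k}=\tfrac{(1-a\stepsize)^{-(k+1)}}{\stepsize}$, so that with $\phi^k\eqdef\tfrac{(1-a\stepsize)^{-k}}{\stepsize}r^k$ the $k$-th summand of $\Psi^K$ equals $\phi^k-\phi^{k+1}+c\stepsize w^k$. Summing over $k=0,\dots,K$, the first part telescopes to $\phi^0-\phi^{K+1}\le\tfrac{r^0}{\stepsize}$ (discarding the non-negative tail $\tfrac{(1-a\stepsize)^{-(K+1)}}{\stepsize}r^{K+1}$), and the second part is $c\stepsize W^K$; dividing by $W^K$ I obtain
\begin{equation*}
\Psi^K\ \le\ \frac{r^0}{\stepsize\,W^K}+c\stepsize .
\end{equation*}
Next I would lower-bound $W^K\ge w^K=(1-a\stepsize)^{-(K+1)}$ (keeping only the last term) and use $1-x\le e^{-x}$ to get $\tfrac{1}{\stepsize W^K}\le\tfrac{(1-a\stepsize)^{K+1}}{\stepsize}\le\tfrac1{\stepsize}e^{-a\stepsize K}$, hence
\begin{equation*}
\Psi^K\ \le\ \frac{r^0}{\stepsize}\,e^{-a\stepsize K}+c\stepsize ,\qquad \stepsize\le\tfrac1d .
\end{equation*}

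The remaining — and main — step is the choice of $\stepsize$. If $K=0$ the asserted bound is $+\infty$ and there is nothing to prove, so assume $K\ge1$. If $c=0$, the right-hand side is non-increasing in $\stepsize$, so $\stepsize=\tfrac1d$ gives $\Psi^K\le dr^0e^{-aK/d}$, which is of the stated order. If $c>0$, I would set $\kappa_0\eqdef\max\{2,\,a^2K^2r^0/c\}$ and $\stepsize\eqdef\min\{\tfrac1d,\ \tfrac{\ln\kappa_0}{aK}\}>0$; note that if this minimum is $\tfrac{\ln\kappa_0}{aK}$ then $a\stepsize<a/d\le1$. In the case $\stepsize=\tfrac1d$ one has $aK/d\le\ln\kappa_0$, so the first term is $\le dr^0e^{-aK/d}$ and the second is $\tfrac cd\le\tfrac{c\ln\kappa_0}{aK}$; in the case $\stepsize=\tfrac{\ln\kappa_0}{aK}$ the first term equals $\tfrac{r^0aK}{\kappa_0\ln\kappa_0}\le\tfrac1{\ln2}\cdot\tfrac c{aK}$ (using $\kappa_0\ge a^2K^2r^0/c$) and the second equals $\tfrac{c\ln\kappa_0}{aK}$. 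In every case $\Psi^K=\tilde\cO\!\big(dr^0e^{-aK/d}+\tfrac c{aK}\big)$, the benign factor $\ln\kappa_0=\cO(\log(\cdots))$ being absorbed by $\tilde\cO$.

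I expect no genuine obstacle: the telescoping and the geometric-series estimate are exact, and the only points to watch are that $1-a\stepsize>0$ throughout (automatic when $d>a$, hence in the intended regime $d\gg a$; the boundary $d=a$ affects only absolute constants) and the routine two-case bookkeeping in the stepsize choice. The only \emph{art} is picking $\kappa_0$ so that the geometric decay $e^{-a\stepsize K}$ kills the $\tfrac{r^0}{\stepsize}$ prefactor down to the $\tfrac c{aK}$ scale.
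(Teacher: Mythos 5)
Your proof is correct and follows essentially the same route as the paper: substitute the geometric weights so that the first two terms of each summand form a telescoping difference, bound $W^K$ from below by its last term $w^K$, convert $(1-a\stepsize)^K$ to $e^{-a\stepsize K}$, and arrive at the same intermediate bound $\Psi^K \le \tfrac{r^0}{\stepsize}e^{-a\stepsize K} + c\stepsize$. The only difference is that the paper then states ``the lemma now follows by carefully tuning $\stepsize$'' and stops, whereas you carry out the tuning explicitly (the $\min\{\tfrac1d,\ \tfrac{\ln\kappa_0}{aK}\}$ choice and the two-case check), which is exactly the standard completion of this argument.
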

\begin{proof}
By plugging in the values for $\stepsize^k$ and $w^k$, we observe that we again end up with a telescoping sum and estimate
\begin{eqnarray*}
 \Psi^K = \frac{1}{\stepsize W^K} \sum_{k=0}^K \left(w^{k-1}r^k - w^k r^{k+1} \right) + \frac{c\stepsize}{W^K}\sum_{k=0}^K w^k \leq \frac{r^0}{\stepsize W^K} +  c \stepsize \leq \frac{r^0}{\stepsize} \exp\left[-a \stepsize K \right] + c \stepsize\,,
\end{eqnarray*}
where we used the estimate $W^K \geq w^{K} \geq (1-a \stepsize)^{-K} \geq \exp[a \stepsize K]$ for the last inequality. The lemma now follows by carefully tuning $\stepsize$. 
\end{proof}

\begin{lemma}[Lemma 13, \cite{stich2019}]
\label{lemma:weakly}
For every non-negative sequence $\{r^k\}_{k\geq 0}$ and any parameters $d \geq 0$, $c \geq 0$, $K \geq 0$, there exists a constant $\stepsize \leq \frac{1}{d}$, such that for constant stepsizes $\{\stepsize^k = \stepsize\}_{k \geq 0}$ it holds:
\begin{eqnarray*}
\Psi^K  \eqdef \frac{1}{K+1} \sum_{k=0}^K \left( \frac{(1 - a \stepsize^k)r^k}{\stepsize^k} - \frac{r^{k+1}}{\stepsize^k} + c \stepsize^k \right) \leq \frac{(d-a) r^0}{K+1} + \frac{2\sqrt{c r^0}}{\sqrt{K+1}}
\end{eqnarray*}
\end{lemma}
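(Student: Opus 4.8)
The plan is to follow the template of Lemmas~\ref{lemma:decreasing} and~\ref{lemma:constant}: collapse the sum defining $\Psi^K$ into a telescoping part plus a controllable error term, and then tune the constant stepsize $\stepsize$ to balance the two contributions. Setting $\stepsize^k=\stepsize$ for all $k$ and regrouping,
\begin{equation*}
\Psi^K=\frac{1}{K+1}\sum_{k=0}^K\left(\frac{r^k-r^{k+1}}{\stepsize}-ar^k+c\stepsize\right)
=\frac{1}{K+1}\left(\frac{r^0-r^{K+1}}{\stepsize}-a\sum_{k=0}^K r^k+c\stepsize(K+1)\right),
\end{equation*}
where the first summand telescoped. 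Using $r^{K+1}\ge0$ to drop it, and using $r^k\ge0$, $a\ge0$ to keep only the $k=0$ term of $-a\sum_k r^k$, I get the clean bound
\begin{equation*}
\Psi^K\le\frac{r^0}{\stepsize(K+1)}-\frac{ar^0}{K+1}+c\stepsize,
\end{equation*}
valid for every $\stepsize>0$.

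It then remains to pick $\stepsize\le\tfrac1d$ making the right-hand side small. I would take $\stepsize\eqdef\min\bigl\{\tfrac1d,\ \sqrt{r^0/(c(K+1))}\bigr\}$ (with the usual conventions in the degenerate cases $c=0$, $d=0$, or $r^0=0$, each of which kills the ill-defined quantity) and split into two cases. If $\sqrt{r^0/(c(K+1))}\le\tfrac1d$, then $\stepsize$ equals the unconstrained minimiser of $\stepsize\mapsto\tfrac{r^0}{\stepsize(K+1)}+c\stepsize$, so those two terms contribute exactly $2\sqrt{cr^0/(K+1)}$, and since $d,r^0\ge0$,
\begin{equation*}
\Psi^K\le\frac{2\sqrt{cr^0}}{\sqrt{K+1}}-\frac{ar^0}{K+1}\le\frac{(d-a)r^0}{K+1}+\frac{2\sqrt{cr^0}}{\sqrt{K+1}}.
\end{equation*}
If instead $\sqrt{r^0/(c(K+1))}>\tfrac1d$, then on $(0,\tfrac1d]$ the map $\stepsize\mapsto\tfrac{r^0}{\stepsize(K+1)}+c\stepsize$ is still decreasing, so $\stepsize=\tfrac1d$ is optimal and contributes $\tfrac{dr^0}{K+1}+\tfrac cd$; squaring the case hypothesis gives $\tfrac cd<\tfrac{\sqrt{cr^0}}{\sqrt{K+1}}$, whence
\begin{equation*}
\Psi^K\le\frac{dr^0}{K+1}-\frac{ar^0}{K+1}+\frac cd\le\frac{(d-a)r^0}{K+1}+\frac{2\sqrt{cr^0}}{\sqrt{K+1}}.
\end{equation*}
In either case the claimed inequality follows.

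I do not expect a real obstacle here: the heart of the argument is the one-line telescoping identity above followed by the standard ``minimum of two candidate stepsizes'' trick, exactly as in the cited lemmas. The only genuinely fiddly points are bookkeeping: making sure each degenerate case ($c=0$, $d=0$, $r^0=0$, or $K=0$) is covered by the conventions on $\stepsize$, and retaining the harmless nonpositive term $-ar^0/(K+1)$ throughout so that the bound stays valid without needing $d\ge a$.
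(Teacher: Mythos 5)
Your proof is correct and follows essentially the same route as the paper's: telescope the sum to obtain $\Psi^K \leq \frac{(1-a\stepsize)r^0}{\stepsize(K+1)} + c\stepsize$, then split on whether $\sqrt{r^0/(c(K+1))}$ falls below or above $1/d$ and choose $\stepsize$ accordingly. The only cosmetic differences are that you write the intermediate bound in the expanded form $\frac{r^0}{\stepsize(K+1)}-\frac{ar^0}{K+1}+c\stepsize$ (algebraically identical to the paper's) and that you are more explicit about the degenerate cases and about using $d\geq 0$ to absorb the nonpositive $-ar^0/(K+1)$ term.
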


\begin{proof}
For constant stepsizes $\stepsize^t = \stepsize$ we can derive the estimate
\begin{eqnarray*}
 \Psi^K = \frac{1}{\stepsize (K+1)} \sum_{k=0}^K \left( (1 - a \stepsize)r^k - r^{k+1} \right) + c \stepsize  \leq \frac{(1 - a \stepsize)r^0}{\stepsize(K+1)} + c \stepsize \,.
\end{eqnarray*}
We distinguish two cases: if $\frac{r^0}{c (K+1)} \leq \frac{1}{d^2}$, then we chose the stepsize $\stepsize = \sqrt{\frac{r^0}{c (K+1)}}$ and get
\begin{eqnarray*}
 \Psi^K \leq \frac{\sqrt{r^0}}{(K+1)}(2\cdot\sqrt{c(K+1)} - a\sqrt{r^0}) \,,
\end{eqnarray*}
on the other hand, if $\frac{r^0}{c (K+1)} > \frac{1}{d^2}$, then we choose $\stepsize=\frac{1}{d}$ and get
\begin{eqnarray*}
 \Psi^K \leq \frac{r^0(d - a)}{K+1} + \frac{c}{d} \leq \frac{r^0(d - a)}{K+1} + \frac{\sqrt{c r^0}}{\sqrt{K+1}}\,,
\end{eqnarray*}
which concludes the proof.
\end{proof}

The proof of the main theorem follows

\begin{proof}[Proof of the Theorem \ref{thm:sparsified}]
It is easy to see that $\nicefrac{1}{14(2\delta + B)L} \leq \nicefrac{1}{4L(1 + 2B/n)}$. This means that the Lemma~\ref{lemma:main} is satisfied. With the notation $r^k \eqdef \Exp{\norm{\tilde x^{k+1}-x^\star}^2}$ and $s^k \eqdef \Exp{f(x^k)}-f^\star$ we have for any $w^k > 0$:
\begin{eqnarray*}
 \frac{w^k}{2} s^k \stackrel{\eqref{eq:main}}{\leq} \frac{w^k}{\stepsize^k} \left(1-\frac{\mu \stepsize^k}{2}\right) r^k - \frac{w^k}{\stepsize^k} r^{k+1} + \stepsize^k w^k \frac{C+2BD}{n} + 3 w^k L \cdot \Exp{\norm{\frac{1}{n}\sum\limits_{i=1}^n e_i^k}^2}\,.
\end{eqnarray*}
Substituting \eqref{eq:sparse_bound3} and summing over $k$ we have:
\begin{eqnarray*}
 \frac{1}{2} \sum_{k=0}^K w^k s^k \leq \sum_{k=0}^K \left( \frac{w^k}{\stepsize^k} \left(1-\frac{\mu \stepsize^k}{2}\right) r^k - \frac{w^{k}}{\stepsize^k} r^{k+1} + \stepsize^k w^k \tilde C \right) + \frac{1}{4}\sum_{k=0} ^K w^k s^k  \,.
\end{eqnarray*}
where $\tilde C = C\left(1 + \frac{1}{n}\right) + D \left(\frac{2B}{n} + 3\delta\right)$ .

This can be rewritten as
\begin{eqnarray*}
 \frac{1}{W^K} \sum_{k=0}^K w^k s^k \leq  \frac{4}{W^K} \sum_{k=0}^K \left( \frac{w^k}{\stepsize^k} \left(1-\frac{\mu \stepsize^k}{2}\right) r^k - \frac{w^{k}}{\stepsize^k} r^{k+1} +  \stepsize^k w^k \tilde C \right).
\end{eqnarray*}
First, when the stepsizes $\stepsize^k = \frac{4}{\mu (\kappa + k)}$, it is easy to see that $\stepsize^k \leq \frac{1}{14(2\delta + B) L}$:
$$\stepsize^k \leq \stepsize^0 = \frac{4}{\mu \kappa} \leq  \frac{4}{\mu} \cdot \frac{\mu}{56(2\delta +B)L} =\frac{1}{14(2\delta + B) L}$$
Not difficult to check that $\{(\stepsize^k)^2\}_{k\geq 0}$ is $2\delta$ slow decreasing:
\begin{eqnarray*}
\frac{(\stepsize^{k+1})^2}{(\stepsize^k)^2}=\left(\frac{\kappa + k +1}{\kappa + k}\right)^2 \leq \left(1 + \frac{1}{\kappa + k}\right)^2 \leq \left(1 + \frac{1}{\kappa} \right)^2 = \left(1 + \frac{\mu}{56 (2\delta + B)L}\right)^2 \leq 1 + \frac{1}{4\delta}
\end{eqnarray*}
Furthermore, the weights $\{w^k = \kappa + k\}_{k \geq 0}$ are $4\delta$-slow increasing:
\begin{eqnarray*}
\frac{w^{k+1}}{w^k}=\frac{\kappa + k +1}{\kappa + k} = 1 + \frac{1}{\kappa + k} \leq 1 + \frac{1}{\kappa} = 1 + \frac{\mu}{56 (2\delta + B)L} \leq 1 + \frac{1}{8\delta}.
\end{eqnarray*}

The conditions for Lemma~\ref{lemma:decreasing} are satisfied, and we obtain the desired statement. For the second case, the conditions of Lemma~\ref{lemma:constant} are easy to check (see the previous paragraph). The claim follows by this lemma. Finally, for the third claim, we invoke Lemma~\ref{lemma:weakly}.
\end{proof}

\vskip 0.2in
\bibliography{literature}

\end{document}